\theoremstyle{plain}
\newtheorem{theo}{Theorem}[section]
\newtheorem{lem}[theo]{Lemma}
\newtheorem{prop}[theo]{Proposition}
\newtheorem{cor}[theo]{Corollary}
\newtheorem{defi}[theo]{Definition}
\newtheorem{assumption}[theo]{Assumption}
\newtheorem{rem}[theo]{Remark}
\renewcommand{\cal}{\mathcal}
\newcommand{\J}{{\cal J}}
\newcommand{\F}{{\cal F}}
\newcommand{\G}{{\cal G}}
\newcommand{\etainv}{\eta^{-1}}
\newcommand{\E}{{\mathbb{E}}}
\newcommand{\PP}{{\mathbb{P}}}
\newcommand{\R}{{\mathbb{R}}}
\newcommand{\N}{{\mathbb{N}}}
\newcommand{\lam}{\lambda}
\newcommand{\h}{{\cal H}}
\newcommand{\X}{{\cal X}}
\newcommand{\prf}{\begin{proof}} 
\newcommand{\prfend}{\end{proof}} 
\newcommand{\Y}{{\cal Y}}
\newcommand{\x}{{\bf x}}
\newcommand{\z}{{\bf z}}
\newcommand{\Z}{{\bf Z}}
 \newcommand{\M}{{\cal M}}
\newcommand{\A}{{\cal A}}
\newcommand{\K}{{\cal K}}
\newcommand{\NN}{{\cal N}}
\newcommand{\PPP}{{\cal P}}
\newcommand{\eps}{\varepsilon}
\newcommand{\lamstar}{\lam_{*}}
\newcommand{\lamopt}{\lam_{opt}}
\newcommand{\cals}{{\mathcal S}}
\newcommand{\Lam}{\Lambda}
 \newcommand{\nux}{\nu}
\newcommand{\fo}{f_{\rho}}
\newcommand{\priorle}{\PPP^<}
\newcommand{\priorgr}{\PPP^>}
\newcommand{\tr}[1]{\mathrm{Tr}\left[#1\right]}
\newcommand{\paren}[1]{\left(#1\right)}
\newcommand{\norm}[1]{\left\|#1\right\|}
\newcommand{\snorm}[1]{\left\| \bar B^s\left(#1\right)\right\|}
\newcommand{\abs}[1]{\left\lvert #1 \right\rvert}
\newcounter{nbdrafts}
\newcommand{\checknbdrafts}{
\ifnum \thenbdrafts > 0
\@latex@warning@no@line{**********************************************************************}
\@latex@warning@no@line{* The document contains \thenbdrafts \space draft note(s)}
\@latex@warning@no@line{**********************************************************************}
\fi}
\newcommand{\todo}[1]{\addtocounter{nbdrafts}{1}{\color{red} #1}}
\newcommand{\beq}{\begin{equation}}
\newcommand{\eeq}{\end{equation}}
\numberwithin{equation}{section} 
\title{Adaptivity for Regularized Kernel Methods by Lepskii's Principle} 
\author{Nicole M\"ucke  \quad \href{mailto: nicole.mucke@iit.it}{nicole.mucke@iit.it}}
\date{\today}
\begin{document}

\maketitle

\begin{abstract}
We address the problem of {\it adaptivity} in the framework of reproducing kernel Hilbert space (RKHS) regression. More precisely, we analyze estimators  
arising from a linear regularization scheme $g_\lam$. In practical applications, an important task is to choose the regularization parameter $\lam$ appropriately, 
i.e. based only on the given data and independently on unknown structural assumptions on the regression function. 
An attractive approach avoiding data-splitting is the {\it Lepskii Principle} (LP), also known as the {\it Balancing Principle} is this setting.  
We show that a modified parameter choice based on (LP) is  minimax optimal adaptive, up to $\log\log(n)$. A convenient result is the fact that balancing in $L^2(\nu)-$ norm, 
which is easiest, automatically gives 
optimal balancing in all stronger norms, interpolating between $L^2(\nu)$  and the RKHS. An analogous result is open for other classical approaches to data dependent choices of the 
regularization parameter, e.g. for Hold-Out.
\end{abstract}

\section{Introduction and Motivation}
\label{sec:intro}

We study optimal recovery of the regression function $\fo$ in the framework of reproducing kernel Hilbert space (RKHS) learning. Here we are given random and noisy 
observations of the form
\[  Y_j= \fo(X_j) + \epsilon_j \;, \qquad j=1,...,n    \]
at i.i.d. data points $X_1,...,X_n$, drawn according to some unknown distribution $\nu$ on some input space $\X$, taken as a standard Borel space.  
More precisely, we assume that the observed data $(X_i,Y_i)_{1 \leq i \leq n} \in (\X \times \Y)^n$ are sampled  i.i.d. from an unknown probability measure $\rho$ on 
$\X \times \Y$, with $\E[Y_i|X_i]=\fo(X_i)$\,, so that the distribution of $\eps_i$ may
depend on $X_j$\,, while satisfying $\E[\eps_j|X_j]=0$\,. 
For simplicity, we take the output space $\Y$ as the set of real numbers, but this could be generalized
to any separable Hilbert space, see \cite{optimalratesRLS}. 

In our setting, an estimator $\hat f$ for $\fo$ lies in an hypothesis space $\h \subset L^2(\X, \nu)$, which we choose to be a separable reproducing kernel Hilbert 
space (RKHS), having a measurable positive semi-definite kernel $K: \X \times \X \longrightarrow \R$, satisfying  $\sup_{x \in \X}K(x,x) \leq \kappa^2$.

More precisely, we confine ourselves to estimators $\hat f^\lam$ arising from  the fairly large class of 
{\it spectral regularization methods}, see e.e. \cite{rosasco}, \cite{per}, \cite{DicFosHsu15}, \cite{BlaMuc16}. 
This class of methods contains the well known Tikhonov regularization, Landweber iteration or spectral cut-off.

We recall that while tuning the regularization parameter $\lam$ is essential for spectral regularization to work well, an {\it a priori} choice of the 
regularization parameter is in general not feasible in statistical problems since the choice necessarily depends on unknown structural properties 
(e.g. smoothness of the target function or behavior of the statistical dimension).  
This imposes the need for data-driven {\it a-posteriori} choices of the regularization parameter, which hopefully are optimal in some well defined sense. 
An attractive approach is (some version of) the balancing principle going back to Lepskii's seminal paper \cite{lepskii90} in the context of Gaussian white noise, 
having been elaborated by Lepskii himself in a series of papers  and by other authors,  see e.g. \cite{lepskii92}, \cite{lepskii93}, \cite{golden03}, \cite{birge01}, \cite{mathe06} 
and references therein.

Before we present our somewhat abstract approach, we shall motivate the general idea in a specific example. Denoting by 
$$B: f \in \h \mapsto \int_{\X} f(x) K(x,\cdot) d\nux(x) \in \h $$
the kernel integral operator associated to $K$ and the sampling measure $\nux$, we recall from \cite{BlaMuc16} 
that the optimal regularization parameter (as well as the rate of convergence) is determined by the source condition  
assumption  $||B^{-r} \fo||_{\h} \leq R$ for some constants $r,R>0$ as well as by 
an assumed power decay of the effective dimension 
\[ \NN(\lam) = \tr{B(B + \lam)^{-1}} \leq  C_b\lam^{-1/b}  \]
with  intrinsic dimensionality $b>1$  and by the noise variance $\sigma^2>0$. Error estimates are usually established by 
deriving a {\it bias-variance} decomposition, which looks in this special case as 
\begin{equation}
\label{eq:error_old_decomp}
  \norm{ B^s (\fo-\hat f^{\lam}) }_{\h} \; \lesssim \; 
        C_{s}(\eta)\lam^s\paren{ R\lam^r + \frac{\sigma}{\sqrt n} \lam^{-\frac{b+1}{2b}}  }          \; ,  
\end{equation}        
holding with probability at least $1-\eta$, for any $\eta \in (0,1)$, provided $n$ is big enough. 
Here, the function 
$ \lam \mapsto R\lam^r$ is the leading order of an upper bound for the {\it approximation error} and $\lam \mapsto  \frac{\sigma}{\sqrt n}  \lam^{-\frac{b+1}{2b}} $ is 
the leading order of an 
upper bound for the {\it sample error}. 
We combine all  parameters in a vector 
$(\gamma, \theta)$ with $\gamma=(\sigma, R) \in \Gamma = \R_+ \times \R_+ $ and $\theta=(r, b) \in \Theta = \R_+ \times (1, \infty)$. 
The optimal regularization parameter $\lam_{n, (\gamma, \theta)} $  is chosen by balancing the two leading error terms, more precisely 
by choosing $\lam_{n, (\gamma, \theta)}$  as the unique solution of 
\begin{equation}
\label{eq:example_lam_choice}
R\lam^r =  \sigma \lam^{-\frac{b+1}{2b}} \;,
\end{equation}
leading to the  resulting error estimate 
\[  ||B^s(\fo-\hat f^{\lam_{n, (\gamma, \theta)}} )||_{\h} \; \lesssim \; 2C_{s}(\eta) \lam_{n, (\gamma, \theta)}^{s+r}  \; ,  \]
with probability at least $1-\eta$.  
The associated sequence of estimated solutions 
$(f_{\z}^{\lam_{n,(\gamma,\theta)}})_{n \in \N}$, depending on the regularization parameter $(\lam_{n,(\gamma, \theta)})_{(n,\gamma)\in \N\times \Gamma}$ 
was called weak/ strong minimax optimal over the model family $(\M_{(\gamma, \theta)})_{(\gamma, \theta) \in \Gamma \times \Theta}$ with rate of convergence 
given by $(a_{n,(\gamma,\theta)})_{(n, \gamma)\in \N \times \Gamma}$, {\bf pointwisely   for any fixed $\theta \in \Theta$}. 
\\
\\
However, if the parameter $r$ in the source condition or the intrinsic dimensionality $b>1$ are unknown,  
an {\it a priori} choice of the theoretically best value $\lam_{n, (\gamma, \theta)}$ as in \eqref{eq:example_lam_choice} is impossible. Therefore, it is necessary to use 
some {\it a posteriori} choice of $\lam$, independent of the parameter $\theta=(r, b) \in \Theta$. Our aim is to construct an estimator 
$f^{\hat \lam_{n, \gamma}(\z)}_{\z}$\;, i.e. to find a sequence of regularization parameters $(\hat \lam_{n, \gamma}(\z))_n$, 
without knowledge of $\theta \in \Theta$, but depending on the data $\z$, on $\gamma \in \Gamma$ and on the confidence level, 
such that $f^{\hat \lam_n(\z,\eta, \gamma)}_{\z}$ is {\it (minimax) optimal adaptive} in the sense of Definition \ref{def:weak_adaptive}.

{\bf Contribution: }
More generally, we derive adaptivity in the case where the approximation error is upper bounded by some increasing unknown function $\A(\cdot)$ and where 
\[ \cals(n,\lam) = \sigma \sqrt{\frac{\NN(\lam)}{n\lam}} \]
is an upper bound for the sample error. 
Crucial for our approach is a two-sided estimate of the effective dimension in terms of its empirical approximation. This in particular allows to 
control the spectral structure of the covariance operator through the given input data. 
In summary, our approach achieves: 
\begin{enumerate}
\item A fully data-driven estimator for the whole class of spectral regularization algorithms, which does not use data splitting as e.g. Cross Validation. 
\item Adaptation to unknown smoothness {\bf and} unknown covariance structure.
\item One for all: Balancing in $L^2$ (which is easiest) automatically gives optimal balancing in the stronger ${\mathcal H}$- norm (an analogous result is 
open for other approaches to data dependent choices of the regularization parameter). 
\end{enumerate}

The paper is organized as follows: In Section \ref{sec:empirical_effective_dimension} we provide a two-sided estimate of the effective dimension by its empirical counterpart. 
The main results are presented in Section \ref{sec:balancing}, followed by some specific examples in  Section \ref{sec:application_adaption}. 
A more detailed discussion is given in Section \ref{sec:adapt_discussion}.  
The proofs are collected in the Appendix. 


\section{Empirical Effective Dimension}
\label{sec:empirical_effective_dimension}

The main point of this subsection is a two-sided estimate on the effective dimension by its empirical approximation which is crucial for our entire approach.
We recall the definition of the {\it effective dimension} and introduce its empirical approximation, 
the {\it empirical effective dimension}: For $\lambda \in (0,1]$ we set
\begin{equation}
\label{def:empirical_eff_dim}
  \NN(\lam) = \tr{\;(\bar B+ \lam)^{-1} \bar B\;} \;, \qquad \NN_{\x}(\lam) = \tr{\;(\bar B_{\x}+ \lam)^{-1} \bar B_{\x}\;} \; ,  
\end{equation}  
where we introduce the shorthand notation $\bar B_{x} := \kappa^{-2} B_{\x}$ and similarly $\bar B := \kappa^{-2} B$\,. 
Here $\NN(\lam)$ depends on the marginal $\nu$ (through $B$), but is considered as deterministic, while $\NN_{\x}(\lam)$ is considered as a random variable. 
\\

\begin{prop}
\label{prop:rel_bound}
For any $\eta \in (0,1)$, with probability at least $1-\eta$
\begin{equation}
\label{eq:main}
|\; \NN(\lam) - \NN_{\x}(\lam) \;| \; \leq \; 2\log(4\etainv)\big(1+ \sqrt{\NN_{\x}(\lam)}\big)\left( \frac{2}{\lam  n} + \sqrt{\frac{\NN(\lam)}{n\lam}}  \right)\; ,
\end{equation}
for all $n \in \N^*$ and $\lambda \in (0,1]$.
\end{prop}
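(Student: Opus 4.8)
The plan is to control the difference $\NN(\lam) - \NN_{\x}(\lam)$ by writing both effective dimensions as traces of resolvents and comparing them via a single operator perturbation argument. Concretely, set $R_\lam = (\bar B + \lam)^{-1}$ and $R_{\x,\lam} = (\bar B_\x + \lam)^{-1}$. Using the identity $(\bar B + \lam)^{-1}\bar B = I - \lam(\bar B+\lam)^{-1}$ one has $\NN(\lam) - \NN_\x(\lam) = \lam\,\tr{R_{\x,\lam} - R_\lam}$, and the resolvent identity gives $R_{\x,\lam} - R_\lam = R_{\x,\lam}(\bar B - \bar B_\x)R_\lam$. Hence
\begin{equation}
\label{eq:plan1}
\NN(\lam) - \NN_\x(\lam) = \lam\,\tr{R_{\x,\lam}(\bar B - \bar B_\x)R_\lam}\,.
\end{equation}
The first step is therefore to insert $\bar B^{1/2}$ factors to turn the right-hand side into something whose trace norm is bounded by an operator norm times a trace: write $\bar B - \bar B_\x = \bar B^{1/2}\paren{I - \bar B^{-1/2}\bar B_\x \bar B^{-1/2}}\bar B^{1/2}$ or, more robustly near the spectral edge, use the regularized versions $(\bar B + \lam)^{1/2}$ so that no unbounded inverse appears. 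This is the standard ``effective-dimension change of measure'' trick.

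Second, I would invoke the by-now-classical concentration bounds for $\norm{(\bar B+\lam)^{-1/2}(\bar B - \bar B_\x)(\bar B+\lam)^{-1/2}}$, which via a Bernstein inequality for self-adjoint operators in Hilbert space (e.g. the form used in \cite{BlaMuc16}, \cite{optimalratesRLS}) is bounded, with probability at least $1-\eta$, by a quantity of order $\log(4\etainv)\paren{\frac{2}{\lam n} + \sqrt{\frac{\NN(\lam)}{n\lam}}}$ — note that this is exactly the second factor appearing on the right-hand side of \eqref{eq:main}. The point is that the relevant variance term is controlled by $\NN(\lam)/(n\lam)$ and the boundedness term by $1/(\lam n)$, because $\sup_x K(x,x)\le\kappa^2$ makes $\bar B$ have norm at most $1$. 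Then I would bound the trace in \eqref{eq:plan1} by Hölder for Schatten norms: $\abs{\tr{R_{\x,\lam}(\bar B - \bar B_\x)R_\lam}} \le \norm{(\bar B+\lam)^{1/2}R_{\x,\lam}(\bar B+\lam)^{1/2}}\cdot\norm{(\bar B+\lam)^{-1/2}(\bar B-\bar B_\x)(\bar B+\lam)^{-1/2}}\cdot\tr{(\bar B+\lam)^{-1}\bar B}/\lam$-type estimate, carefully tracking which operator carries the trace. The trace factor should be arranged to be $\NN_\x(\lam)$ (plus the stray $1$ coming from a $+\lam$ term), explaining the $1 + \sqrt{\NN_\x(\lam)}$ prefactor.

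Third, I must handle the ``$\sqrt{\NN_\x(\lam)}$ versus $\NN_\x(\lam)$'' bookkeeping: after one application of Hölder I expect a factor $\sqrt{\NN_\x(\lam)}$ rather than $\NN_\x(\lam)$ because the deviation operator $\bar B - \bar B_\x$ gets one $(\bar B_\x+\lam)^{-1/2}$ on one side and one $(\bar B+\lam)^{-1/2}$ on the other, so only ``half'' of each trace is used; the remaining half-trace is $\sqrt{\NN(\lam)}$ or $\sqrt{\NN_\x(\lam)}$, and one uses the operator-norm comparison $\norm{(\bar B+\lam)^{1/2}(\bar B_\x+\lam)^{-1/2}}\le 1 + \text{(small)}$ on the same high-probability event to pass freely between the empirical and population versions. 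The additive $1$ simply absorbs constants so that the bound is stated cleanly. Finally I would note the uniformity in $n$ and $\lam$: since the concentration inequality already holds for all $n\in\N^*$ on a single event (or with the stated $\eta$-dependence), and the deterministic manipulations are valid for every $\lam\in(0,1]$, no union bound over $\lam$ is needed.

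The main obstacle I anticipate is the second point: getting the constants and the precise combination $\frac{2}{\lam n} + \sqrt{\frac{\NN(\lam)}{n\lam}}$ to come out exactly, which requires choosing the right operator Bernstein inequality and being disciplined about whether $\NN$ or $\NN_\x$ appears inside the square root at each stage — one wants $\NN(\lam)$ inside the concentration term (it is deterministic, hence usable as a variance proxy) but $\NN_\x(\lam)$ in the multiplicative prefactor (so the bound is computable from data), and the bridge between them is itself the inequality being proved, so some care with a self-improving/bootstrap-free argument (e.g. first prove a crude two-sided bound, then refine) may be needed. Everything else is routine resolvent algebra and Schatten-norm Hölder inequalities.
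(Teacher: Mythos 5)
Your algebraic starting point is equivalent to the paper's (writing $\NN(\lam)-\NN_\x(\lam)=\lam\,\tr{R_{\x,\lam}-R_\lam}$ and applying the resolvent identity), but the single Schatten--H\"older step you propose does not close. To apply H\"older you need one factor in the product $\lam R_{\x,\lam}(\bar B-\bar B_\x)R_\lam$ to be controlled in trace norm while the others are controlled in operator norm. Inserting $(\bar B+\lam)^{\pm1/2}$ factors does not produce such a decomposition: $\tr{(\bar B+\lam)^{-1}}=\infty$ in infinite dimensions, and the alternative of putting the full fluctuation $(\bar B+\lam)^{-1/2}(\bar B-\bar B_\x)(\bar B+\lam)^{-1/2}$ in trace norm loses all cancellation and is as large as $\NN(\lam)$ itself, not $\sqrt{\NN(\lam)/(n\lam)}$. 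The quantity you want small here is the \emph{scalar} trace — not a Schatten norm — and the scalar trace enjoys cancellations a single H\"older bound cannot see. A secondary inaccuracy: operator-norm concentration of $(\bar B+\lam)^{-1/2}(\bar B-\bar B_\x)(\bar B+\lam)^{-1/2}$ involves a sup-type ``$\NN_\infty(\lam)$'' inside the square root, which is generally larger than $\NN(\lam)$; it is the Hilbert--Schmidt-norm concentration of $N_1=(\bar B+\lam)^{-1}(\bar B-\bar B_\x)$ that yields the factor $\sqrt{\NN(\lam)/(n\lam)}$.

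The paper instead splits
\begin{equation*}
(\bar B+\lam)^{-1}\bar B-(\bar B_\x+\lam)^{-1}\bar B_\x = N_1+N_2,\quad
N_1=(\bar B+\lam)^{-1}(\bar B-\bar B_\x),\quad
N_2=N_1\,(\bar B_\x+\lam)^{-1}\bar B_\x,
\end{equation*}
and treats the two pieces differently. The trace $\tr{N_1}=\tfrac{1}{n}\sum_j\xi(x_j)-\E[\xi]$ with $\xi(x)=\tr{(\bar B+\lam)^{-1}\bar B_x}$ is an i.i.d.\ sum of scalar random variables with $|\xi|\le1/\lam$ and $\E[|\xi|^2]\le\NN(\lam)/\lam$, so the scalar Bernstein inequality (Proposition \ref{concentration2}) gives exactly the factor $\tfrac{2}{\lam n}+\sqrt{\NN(\lam)/(n\lam)}$ — this is the ``$1\cdot(\cdots)$'' part of the bound. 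For $N_2$ the extra factor $H=(\bar B_\x+\lam)^{-1}\bar B_\x$ is trace class with $\|H\|_2\le\sqrt{\tr{H}}=\sqrt{\NN_\x(\lam)}$, and one uses $|\tr{N_2}|\le\|N_1\|_2\|H\|_2$ together with the known (HS-norm) concentration bound for $\|N_1\|_2$ — this gives the ``$\sqrt{\NN_\x(\lam)}\cdot(\cdots)$'' part. So the additive $1+\sqrt{\NN_\x(\lam)}$ prefactor is not a single H\"older trace factor; it reflects the two different mechanisms applied to the two pieces of the decomposition. You should adopt that split; your single-factorization plan will not recover (\ref{eq:main}).
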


\begin{cor}
\label{cor:rel_bound}
  For any $\eta \in (0,1)$, with probability at least $1-\eta$,
  one has 
  \[
  \sqrt{\max(\NN(\lambda),1)} \leq (1 + 4\delta) \sqrt{\max(\NN_{\x}(\lambda),1)} \,,
  \]
  as well as
  \[
  \sqrt{\max(\NN_{\x}(\lambda),1)} \leq (1 + 4 (\sqrt{\delta} \vee \delta^2) ) \sqrt{\max(\NN(\lambda),1)} \,,
  \]
  where $\delta:= 2\log(4\etainv)/\sqrt{n\lambda}$\,. In particular, if $\delta \leq 1$, with probability at least $1-\eta$ one has
  \[  \frac{1}{5} \; \sqrt{\max(\NN(\lambda),1)} \leq \sqrt{\max(\NN_{\x}(\lambda),1)}  \leq 5 \sqrt{\max(\NN(\lambda),1)}  \;.
  \]
\end{cor}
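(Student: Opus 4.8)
The plan is to derive the corollary purely algebraically from Proposition \ref{prop:rel_bound}, so no new probabilistic work is needed: I would work on the event of probability at least $1-\eta$ on which \eqref{eq:main} holds. The first observation is that the right-hand side of \eqref{eq:main} can be bounded crudely in terms of $\delta = 2\log(4\etainv)/\sqrt{n\lambda}$. Indeed, $\frac{2}{\lambda n} \le \frac{2}{\sqrt{n\lambda}} \cdot \frac{1}{\sqrt{n\lambda}} \le \frac{1}{\sqrt{n\lambda}}$ once $n\lambda \ge 4$ (and for small $n\lambda$ the claimed inequalities are trivial since the constants are chosen generously — I'd handle that degenerate regime separately or absorb it), and $\sqrt{\NN(\lambda)/(n\lambda)} = \sqrt{\NN(\lambda)}/\sqrt{n\lambda}$. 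So the bound \eqref{eq:main} becomes, up to the constant bookkeeping,
\[
  |\NN(\lambda) - \NN_{\x}(\lambda)| \;\lesssim\; \delta\,\big(1+\sqrt{\NN_{\x}(\lambda)}\big)\big(\tfrac{1}{\sqrt{n\lambda}} + \sqrt{\NN(\lambda)}\big)\,.
\]
Writing $a := \sqrt{\max(\NN(\lambda),1)}$ and $\hat a := \sqrt{\max(\NN_{\x}(\lambda),1)}$, one has $1+\sqrt{\NN_{\x}(\lambda)} \le 2\hat a$, $\tfrac{1}{\sqrt{n\lambda}} \le 1 \le a$, and $|\NN(\lambda)-\NN_{\x}(\lambda)| = |a^2-\hat a^2|$ on the region where both are $\ge 1$ (with the $\max$ taking care of the rest). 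This turns the estimate into a quadratic inequality of the shape $|a^2 - \hat a^2| \le C\,\delta\, \hat a\, a$, i.e. $|a-\hat a| \le C\delta\,\frac{\hat a a}{a+\hat a} \le C\delta \min(a,\hat a)$.

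From $|a - \hat a| \le C\delta\, \hat a$ I immediately get $a \le (1 + C\delta)\hat a$, which is the first claimed inequality (with $C=4$ after tracking the constants $2\log(4\etainv)$, the factor $2$ from $1+\sqrt{\NN_{\x}}\le 2\hat a$, etc.). For the reverse direction I start from $|a-\hat a| \le C\delta\, a$, so $\hat a \le (1+C\delta) a$; but I want the bound with $\delta$ replaced by $\sqrt\delta \vee \delta^2$, which comes from feeding the first inequality back in: $\hat a \le (1+C\delta)a$ is already of that form when $\delta \le 1$ since then $\delta \le \sqrt\delta$, but the sharper statement in the corollary presumably comes from the more careful route $|a-\hat a|\le C\delta\min(a,\hat a)$ combined with solving $\hat a \le a + C\delta \hat a$ when $C\delta<1$ to get $\hat a \le a/(1-C\delta)$, and then bounding $1/(1-C\delta) \le 1 + C'(\sqrt\delta\vee\delta^2)$ on a suitable range — or, when $\delta$ is not small, using $\hat a \le (1+C\delta)a$ directly. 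Finally, specializing to $\delta \le 1$ gives $1+4\delta \le 5$ and $1 + 4(\sqrt\delta\vee\delta^2) \le 1+4 = 5$ (using $\sqrt\delta \le 1$ and $\delta^2 \le 1$), yielding the symmetric ``factor $5$'' conclusion $\tfrac15 a \le \hat a \le 5a$.

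The main obstacle is not conceptual but bookkeeping: the right-hand side of \eqref{eq:main} mixes $\NN(\lambda)$ and $\NN_{\x}(\lambda)$ (one appears under the square root, the other as the prefactor $1+\sqrt{\NN_{\x}}$), so the quadratic inequality one extracts is genuinely of the cross-term form $|a^2-\hat a^2| \lesssim \delta\, a\hat a$ rather than a clean one-variable inequality, and one has to be a little careful that dividing by $a+\hat a$ is legitimate (it is, since $a,\hat a \ge 1$) and that the asymmetry between the two displayed bounds (the appearance of $\sqrt\delta \vee \delta^2$ on one side) is correctly accounted for — this asymmetry is exactly what you get when you invert $t \mapsto t/(1\pm c\delta)$ versus $t \mapsto t(1\pm c\delta)$. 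I would also take care at the very start to dispose of the regime $n\lambda < 4$ (equivalently the estimate \eqref{eq:main} being vacuous or the $\frac{2}{\lambda n}$ term dominating), where the stated inequalities hold trivially because of the generous constants, so that in the main argument I may freely assume $\frac{2}{\lambda n} \le \frac{1}{\sqrt{n\lambda}}$.
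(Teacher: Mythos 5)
Your route is genuinely different from the paper's. The paper writes $A := \sqrt{\NN}$, $B := \sqrt{\NN_{\x}}$, absorbs the $\tfrac{2}{\lambda n}$ term by $\tfrac{2}{\lambda n} \le \delta \cdot \tfrac{1}{\sqrt{n\lambda}}$ (valid for every $\lambda$, since $\log(4\etainv)\ge 1$), obtains the two-variable bound $|A^2-B^2|\le\delta(1+B)(\delta+A)$, and then treats the two cases $A\ge B$ and $B\ge A$ as separate one-variable quadratic inequalities, reading off the larger root. You instead reduce to a symmetric \emph{cross-term} inequality $|a^2-\hat a^2|\le C\delta\,a\hat a$ (with $a=A\vee 1$, $\hat a=B\vee 1$) and divide by $a+\hat a$ to get $|a-\hat a|\le C\delta\min(a,\hat a)$ directly, with no quadratic formula needed. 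In the regime where your reduction works this is actually cleaner and yields a mildly stronger second bound $\hat a\le(1+4\delta)a$ than the paper's $\hat a\le(1+4(\sqrt\delta\vee\delta^2))a$; since $\delta\le\sqrt\delta\vee\delta^2$ always, your bound implies the stated one, and your remark about needing a ``more careful route'' for the second inequality is based on a misreading (the paper's second bound is \emph{weaker}, not sharper, than $1+4\delta$). The asymmetry $\sqrt\delta\vee\delta^2$ in the paper is an artifact of its quadratic-root estimates, not a genuine necessity.

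The gap is your treatment of the regime $n\lambda<4$. Your reduction needs both $\tfrac{2}{\lambda n}\le\tfrac{1}{\sqrt{n\lambda}}$ and $\tfrac{1}{\sqrt{n\lambda}}\le 1\le a$, hence $n\lambda\ge 4$. You wave this away with ``for small $n\lambda$ the claimed inequalities are trivial since the constants are chosen generously,'' but this is false: the first inequality $\sqrt{\max(\NN,1)}\le(1+4\delta)\sqrt{\max(\NN_{\x},1)}$ is not trivially true when $n\lambda<4$, because $\NN(\lambda)\le\tr{\bar B}/\lambda$ can be arbitrarily large while $1+4\delta$ is only of order $\log(4\etainv)/\sqrt{n\lambda}$, and $\hat a\ge 1$ plus generous constants do not close this gap (take $n$ large, $\lambda<4/n$). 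The concentration inequality itself is what forbids the offending scenario, so you must genuinely use it here — but once $\delta+A$ no longer satisfies $\delta+A\lesssim a$, the cross-term factorization breaks and you are pushed back to the paper's one-variable quadratic analysis. That said, the corollary's ``in particular'' conclusion (the factor-5 bound under $\delta\le 1$) is safely inside your good regime, since $\delta\le 1$ together with $\log(4\etainv)>1$ forces $n\lambda>4$; so the part of the corollary that is actually used downstream in the paper is correctly covered by your argument. The unrestricted first two displayed inequalities, as stated, are not.
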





\section{Balancing Principle}
\label{sec:balancing}

In this section, we present the main ideas related to the {\it Balancing Principle} and make the informal presentation from the Introduction more precise. Firstly a definition: 

\begin{defi}
\label{def:weak_adaptive}
Let $\Gamma, \Theta$ be sets and let, for $(\gamma,\theta) \in \Gamma \times \Theta$, $\M_{(\gamma, \theta)}$ be a class of data generating distributions on $\X \times \Y$.
For each $\lam \in (0,1]$ let $(\X \times \Y)^n \ni \z \longmapsto f_{\z}^{\lam} \in \h$
be an algorithm. 
If there is a  sequence $(a_{n,(\gamma,\theta)})_{n \in \N}$
$(\gamma,\theta) \in \Gamma \times \Theta$
and a parameter choice $(\hat \lam_{n, \gamma, \tau}(\z))_{(n, \gamma)\in \N\times \Gamma}$ (not depending on $\theta \in \Theta$) 
such that
\begin{small}
\begin{equation}
\label{eq:weak_adaptive}
   \; \lim_{\tau \to \infty}\; 
   \limsup_{n \to \infty}\;
    \sup_{\rho \in \M_{(\gamma,\theta)}}  \rho^{\otimes n} 
  \left(  \norm{ \bar B^s(f^{\hat \lam_{n, \gamma, \tau}(\z)}_{\z}-\fo)}_{\h} \geq \tau a_{n,(\gamma,\theta)}       \right) = 0   
\end{equation}  
and 
\begin{equation}
\label{lowern}
  \lim_{\tau \to 0}  \liminf _{n \to \infty }\inf_{\hat f} \sup_{\rho \in \M_{(\gamma,\theta)}}  \rho^{\otimes n} 
  \left(  \norm{ \bar B^s( \hat f-\fo)}_{\h} \geq \tau a_{n,(\gamma,\theta)}       \right) > 0,  
\end{equation} 
\end{small}
where the infimum is taken over all estimators $\hat f$,
then the sequence of estimators $(f_{\z}^{\hat \lam_{n, \gamma, \eta}(\z)})_{n \in \N}$ 
is called 
{\it   minimax optimal adaptive over $\Theta$}
and the model family $(\M_{(\gamma,\theta)})_{(\gamma,\theta) \in \Gamma \times \Theta}$, 
with respect to the family of rates $(a_{n,(\gamma,\theta)})_{(n, \gamma)\in \N \times \Gamma}$, 
for the interpolation norm of parameter $s \in [0,\frac{1}{2}]$.
\end{defi}

We remind the reader from \cite{BlaMuc16} that 	
upper estimates typically hold on a class $\M^<_{(\gamma,\theta)}$ 
and lower estimates hold on a possibly different class  $\M^>_{(\gamma,\theta)}$, the model class $\M_{(\gamma,\theta)}$ in the above definition being the 
intersection of both.

To find such an  adaptive estimator, we apply a method which is known in the statistical literature as 
{\it Balancing Principle}. Throughout this section we need

\begin{assumption}
\label{assumption1}
Let $\M$ be a class of models. We consider a discrete set of possible values for the regularization parameter
\[  \Lam_m\; = \; \{ \;  \lam_j \; : \;  0< \lam_0 < \lam_1 < ... < \lam_m     \;\} \; .\]
for some $m \in \N$. 
Let $s \in [0,\frac{1}{2}]$ and $\eta \in (0,1]$.  
We assume to have  the following error decomposition uniformly over the grid $\Lambda_m$: 
\begin{equation}
\label{error_bound}
 \norm{(\bar  B_\x+\lam)^{s}(\fo-f_{\z}^{\lam} ) }_{\h} \; \leq \; C_{s}(m,\eta)\; \lam^s \left(\;  \tilde \A(\lam) +   \tilde \cals(n,\lam) \;\right) \; ,
\end{equation}
where 
\begin{equation}
\label{def:constant}
C_{s}(m,\eta)=C_s\log^2(8|\Lam_m|\etainv)\;,\qquad C_s>0\;,
\end{equation}
with probability at least $1-\eta$, for all data generating distributions from $\M$.
The bounds $\tilde \A(\lam)$  and $  \tilde \cals(n,\lam)$ are given by 
\[ \tilde \cals(n, \lam) = \cals (n ,\lam) +  d_1(n, \lam)\;, \quad \cals (n ,\lam) = \sigma\sqrt{\frac{\tilde \NN(\lam)}{n\lam}}\;, 
\qquad d_1(n, \lam)= \frac{M}{n \lam} \;, \]
with $\tilde \NN(\lam) = \max( \NN(\lam) ,1)$ and 
\[  \tilde \A(\lam)=\A(\lam) + d_2(n)\;, \quad  d_2(n)= \frac{C}{\sqrt{n}}\;, \]
where $\A(\lam)$ is increasing, satisfying $\lim_{\lam \to 0}\A(\lam)=0$ and 
for some constants $C < \infty$, $M<\infty$. We further define $d(n, \lam) := d_1(n,\lam) + d_2(n)$. 
\end{assumption}

We remark that it is actually sufficient to assume \eqref{error_bound} for $s=0$ and $s=\frac{1}{2}$. Interpolation via 
inequality $||B^s f||_{\h} \leq ||\sqrt{B}f||^{2s}_{\h} \; ||f||^{1- 2s}_{\h}$ implies validity of \eqref{error_bound} 
for any $s \in [0,\frac{1}{2}]$.

Note that for any $s \in [0,\frac{1}{2}]$, the map $\lam \mapsto \lam^{s} \cals(n, \lam)$ as well as  $\lam \mapsto \lam^{s} d_1(n, \lam)$ are strictly decreasing in $\lam$. 
Also, if $n$ is sufficiently large and if $\lam$  is sufficiently small, $\tilde \A(\lam) \leq \tilde \cals(n, \lam)$.

We let $$\lamopt(n):= \sup \{\lam:\; \tilde \A(\lam) \leq \tilde \cals(n, \lam) \} \;. $$ 
In this definition we have replaced $\A(\lam)$, $\cals(n, \lam)$ by $\tilde \A(\lam)$ and $\tilde\cals(n, \lam)$, thus including the 
remainder terms $d_1(n, \lam)$ and $d_2(n)$ into our definition of $\lamopt(n)$. It will emerge {\it a-posteriori}, that the definition of $\lamopt(n)$ is not 
affected, since the remainder terms are subleading. But {\it a priori}, this is not known. A correct proof of the crucial oracle inequality in Lemma 
\ref{cor:oracle} below is much easier with this definition of $\lamopt(n)$. It will then finally turn out that the remainder terms are really subleading.

The grid $\Lambda_m$ has to be designed such that the optimal value $ \lamopt(n)$ is contained in $[\lam_0,\lam_m]$. 

The best estimator for $\lamopt(n)$ within $\Lam_m$ belongs to the set 
\[  \J(\Lam_m) \; = \;  \left\{\lam_j \in \Lam_m \; :\; \tilde \A(\lam_j)\leq \tilde \cals(n, \lam_j) \right\}   \]
and is given by
\begin{equation}
\label{lamstardef}
 \lamstar := \max \; \J(\Lam_m)  \; .
\end{equation}
In particular, since we assume that $\J(\Lam_m) \neq \emptyset$ and $ \Lam_m \setminus \J(\Lam_m) \neq \emptyset$, there is some $l \in \N$ 
such that $\lam_l = \lamstar \leq \lamopt(n) \leq \lam_{l+1}$.   
Note also that the choice of the grid $\Lambda_m$ has to depend on $n$.

Before we define the balancing principle estimate of $ \lamopt(n)$, we give some intuition of its possible choice: For any 
$\lam \leq \lamopt(n)$, we have $\tilde \A(\lam)\leq \tilde \cals(n,\lam)$. Moreover, for any $\lam_1 \leq \lam_2$ we have 
\[ \norm{(\bar B_\x + \lam_1)^s f}_{\h} \leq \norm{(\bar B_\x+\lam_2)^sf}_{\h} \;. \]
Finally, since $\lam \mapsto \lam^s \tilde \cals(n,\lam)$ is decreasing, Assumption \ref{assumption1} gives for any two $\lam, \lam' \in \J(\Lam_m)$ satisfying 
$\lam' \leq \lam$, 
with probability at least $1-\eta$ 
\begin{align}  
\label{monest}
\norm{(\bar  B_\x+\lam')^{s}(f_{\z}^{\lam'}-f_{\z}^{\lam} ) }_{\h} &\leq \norm{(\bar  B_\x+\lam')^{s}(\fo-f_{\z}^{\lam'} ) }_{\h} + 
         \norm{(\bar  B_\x+\lam')^{s}(\fo-f_{\z}^{\lam} ) }_{\h}    \nonumber \\
         &\leq \norm{(\bar  B_\x+\lam')^{s}(\fo-f_{\z}^{\lam'} ) }_{\h} + 
         \norm{(\bar  B_\x+\lam)^{s}(\fo-f_{\z}^{\lam} ) }_{\h}     \nonumber \\
  &\leq C_{s}(m,\eta)\; \lam'^s \left(\;  \tilde \A(\lam') +  \tilde \cals(n,\lam') \;\right) \; +  \nonumber \\
      & \qquad \qquad + \; \;C_{s}(m,\eta)\; \lam^s \left(\;   \tilde \A(\lam) +   \tilde \cals(n,\lam) \;\right)  \nonumber  \\
 &\leq   4   C_{s}(m,\eta)\; \lam'^s   \tilde \cals(n,\lam') \;. 
\end{align}
An essential step is to find an empirical approximation of the sample error. In view of Corollary \ref{cor:rel_bound} we define 
\[ \tilde \cals_{\x}(n,\lam)= \cals_{\x}(n,\lam) + d_1(n,\lam)\;, \quad  \cals_{\x}(n,\lam)=\sigma\sqrt{\frac{\tilde \NN_{\x}(\lam)}{n\lam}}\;,  \]
with $\tilde \NN_{\x}(\lam) = \max(\NN_{\x}(\lam), 1)$ and  $\NN_{\x}(\lam)$ the empirical effective dimension given in \eqref{def:empirical_eff_dim}. 
Corollary \ref{cor:rel_bound} implies uniformly in $\lam \in \Lam_m$
\begin{equation}
\label{eq:var_bound}
\frac{1}{5}\tilde \cals_{\x}(n,\lam) \leq \tilde \cals(n,\lam) \leq 5 \tilde \cals_{\x}(n,\lam) \;, 
\end{equation}
with probability at least $1-\eta$, 
provided 
\begin{equation}
\label{eq:lam_0}
 n\lam_0 \geq 2 \;,\qquad   2\log(4|\Lam_m|\etainv) \leq      \sqrt{n\lambda_0}   \;.
\end{equation}
Substituting \eqref{eq:var_bound} into the rhs of the estimate \eqref{monest}  motivates our definition of the balancing principle estimate of $ \lamopt(n)$ as 
follows:

\begin{defi}
\label{def:lepest}
Given $s \in [0,\frac{1}{2}]$, $\eta \in (0,1]$  
and $\z \in \Z^n$, 
we set
\begin{small}
\begin{align*}
    \J^+_{\z}(\Lambda_m) &= \{  \;  \lam  \in \Lambda_m \; :\;  
                     || (\bar  B_{\x}+ \lam')^s(f_{\z}^{\lam} - f_{\z}^{\lam'})   ||_{\h}  
         \;  \leq  \;  20C_{s}(m,\eta/2) \; \lam'^s \; \tilde \cals_{\x}(n, \lam') \; , \\
        &    \qquad  \; \forall \lam' \in \Lambda_m, \; \lam' \leq \lam\;    \} 
\end{align*} 
\end{small}
and define 
\begin{equation}
\label{tildelams} 
 \hat \lam_s(\z) :=\max \; \J^+_{\z}(\Lam_m)  \;. 
\end{equation}     
\end{defi}   
Notice that  $\J^+_{\z}(\Lam_m)$ as well as $\hat \lam_s(\z)$ depend on the confidence level $\eta \in (0,1]$.

For the analysis it will be important that the grid $\Lambda_m$ has a certain regularity. We summarize all requirements needed in

\begin{assumption}(on the grid)
\label{assumption2}
\begin{enumerate}
\item Assume that $\J(\Lam_m) \not = \emptyset$ and $\Lam_m \setminus \J(\Lam_m) \not = \emptyset$. 
\item (Regularity of the grid)
There is some $q>1$ such that the elements in the grid obey $1< \lam_{j+1}/\lam_j \leq q$, $j=0,...,m$. 
\item Choose $\lam_0 = \lam_0(n)$ as the unique solution of $n\lam=\NN(\lam)$. We require that $n$ is sufficiently large, such that 
$\NN(\lam_0(n))\geq 1$ (so that the maximum in the definition of $\tilde \NN(\lam)$ can be dropped).  
We further assume that $n\lam_0\geq 2$.  
\end{enumerate}
\end{assumption}

Note that 
$\lam_0(n) \to 0$ as $n \to \infty$. Then, since $\NN(\lam) \to \infty$
as $ \lam \to 0$, we get that this $\lam_0=\lam_0(n)$ satisfies $\lam_0 n= \NN(\lam_0) \to \infty$. 
Furthermore, a short argument shows that the optimal value  $\lamopt(n)$ indeed satisfies $\lam_0 \leq \lamopt(n)$, if $n$ is big enough. 
Since 
$\A(\lam) \to 0$ as $\lam \to 0$, we get $\tilde \A(\lam_0(n)) \to 0 $ as $n \to \infty.$   Since $\tilde \cals(n,\lam_0(n))=1+ \frac{M}{n\lam_0(n)}$ by definition, it follows 
$\tilde \A(\lam_0(n))  \leq \tilde \cals(n,\lam_0(n))$ for $n$ big enough. From the definition of $\lamopt(n)$ as a supremum, we actually have $\lam_0(n) \leq \lamopt(n)$, for $n$ sufficiently large.

Under the regularity assumption, we find that 
\begin{equation}
\label{omega}
\tilde S(n,\lam_{j}) < q \tilde S(n,\lam_{j+1}) \;, \quad j=0,...,m \;.
\end{equation}
Indeed, while the effective dimension $\lam \to \NN(\lam)$ is decreasing, the related function $\lam \to \lam \NN(\lam)$ is non-decreasing. Hence we find that
\[ q^{-1} \NN(\lam) = (q\lam)^{-1} \lam \NN(\lam) < (q\lam)^{-1} (q\lam) \NN(q\lam) = \NN(q\lam)  \]
and since $q > 1$
\[ q^{-1}  \tilde \NN(\lam) = \max \left(q^{-1}  \NN(\lam), q^{-1} \right) < \max(\NN(q\lam) , 1) = \tilde \NN(q\lam) \;.\]
Therefore
\[  q^{-1} \cals(n, \lam_{j})= \sigma \sqrt{\frac{q^{-1}\tilde \NN(\lam_{j})}{nq\lam_{j}}} < \sigma\sqrt{\frac{\tilde \NN(\lam_{j+1})}{n\lam_{j+1}}} = \cals(n, \lam_{j+1}) \;. \]
One also easily verifies that
\[  d_1(n, \lam_j) = \frac{M}{n\lam_j} \leq   \frac{qM}{n\lam_{j+1}} = qd_1(n, \lam_{j+1})\;, \]
implying \eqref{omega}.

\begin{rem}
\label{rem:geom_grid}
The typical case for Assumption \ref{assumption2} to hold is given when the parameters $\lam_j$ follow a geometric progression, i.e., for some $q>1$ we let 
$\lam_j:=\lam_0q^j$, $j=1,...,m$ and with $\lam_m =1$. In this case we are able to  upper bounding the total number of grid points $|\Lam_m|$ in terms of  $\log(n)$. 
In fact, since $\lam_m=1= \lam_0q^m$, simple calculations lead to
\[ |\Lambda_m| = m+1 = 1-\frac{\log(\lam_0)}{\log(q)} \;.  \]
Recall that the starting point $\lam_0$ is required to obey $\NN(\lam_0)=n\lam_0 \geq 2$ if $n$ is sufficiently large, 
implying $-\log(\lam_0) \leq -\log\paren{\frac{2}{n}} \leq \log(n)$. 
Finally, we obtain for $n$ sufficiently large 
\begin{equation}
\label{est:gridlog}
|\Lambda_m| \leq C_q\log(n) \;,
\end{equation}
with $C_q = \log(q)^{-1} + 1$. 
\end{rem}

We shall need an additional assumption on the effective dimension: 

\begin{assumption}
\label{def:eff_dim_low}
\begin{enumerate}
\item
For some $\gamma_1 \in (0,1]$ and for any $\lam $ sufficiently small
\begin{equation*}
\NN(\lam ) \geq C_1 \lam^{-\gamma_1} \;,
\end{equation*}
for some $C_1>0$. 
\item
For some $\gamma_2 \in (0,1]$ and for any $\lam $ sufficiently small
\begin{equation*}
\NN(\lam ) \leq C_2 \lam^{-\gamma_2} \;,
\end{equation*}
for some $C_2>0$. 
\end{enumerate}
\end{assumption}

Note that such an additional assumption restricts the class of admissible marginals and shrinks the class $\M$ in Assumption \ref{assumption1} to a subclass $\M'$. 
Such a lower and upper bound will hold in all examples which we encounter in Section \ref{sec:application_adaption}.   

We further remark that Assumption \ref{def:eff_dim_low} ensures a precise asymptotic behavior for $\lam_0=n^{-1}\NN(\lam_0)$ of the form 
\begin{equation}
\label{est:lam0}
C_{\gamma_1}  \paren{ \frac{1}{n}}^{ \frac{1}{1+\gamma_1} }    \; \leq \;  \lam_0(n)  \;  \leq \; C_{\gamma_2}      \paren{ \frac{1}{n}}^{ \frac{1}{1+\gamma_2} } \;,
\end{equation}
for some $C_{\gamma_1} >0$, $C_{\gamma_2}>0 $.


\subsubsection{Main Results}



The first result is of preparatory character.

\begin{prop}
\label{maintheorem_lepskii}
Let Assumption $\ref{assumption1}$ be satisfied.  Define $\lamstar$ as in \eqref{lamstardef}. 
Assume $n\lam_0 \geq 2$. Then for any 
$$\eta \geq \eta_n:=\min\paren{1\;, \; 4|\Lam_m|\exp\paren{-\frac{1}{2}\sqrt{\NN(\lam_0(n))}} } \;,$$
uniformly over $\M$, with probability at least $1-\eta$
\[ \norm{ (\bar B_{\x} + \lamstar )^s( f_{\z}^{\hat\lam_s(\z)} - \fo )}_{\h} \; \leq \; 102C_s(m,\eta/2)\lamstar^s  \; \tilde \cals(n,\lamstar)  \;. \]
\end{prop}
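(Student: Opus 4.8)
The plan is to establish the bound on the error at $\hat\lam_s(\z)$ by comparing $\hat\lam_s(\z)$ with $\lamstar$ through the oracle-type reasoning that the set $\J^+_{\z}(\Lam_m)$ from Definition~\ref{def:lepest} is designed to capture. First I would show that, on a good event of probability at least $1-\eta$, we have $\lamstar \in \J^+_{\z}(\Lam_m)$, so that in particular $\hat\lam_s(\z) \geq \lamstar$. This is the step where all the ingredients from the preceding discussion come together: for any $\lam' \in \Lam_m$ with $\lam' \leq \lamstar$, both $\lam'$ and $\lamstar$ lie in $\J(\Lam_m)$ (since $\J(\Lam_m)$ is ``downward closed'' in $\lam$ because $\lam \mapsto \lam^s\tilde\cals(n,\lam)$ is decreasing and $\tilde\A$ is increasing), so the estimate \eqref{monest} applies and gives
\[
\norm{(\bar B_\x+\lam')^s(f_{\z}^{\lam'}-f_{\z}^{\lamstar})}_{\h} \leq 4C_s(m,\eta)\,\lam'^s\,\tilde\cals(n,\lam')\,.
\]
Converting the deterministic sample error $\tilde\cals$ to its empirical version $\tilde\cals_{\x}$ via \eqref{eq:var_bound} (valid uniformly in $\lam \in \Lam_m$ on an event of probability $1-\eta$ under \eqref{eq:lam_0}) costs a factor $5$, and matching the confidence level $\eta/2$ in $C_s(m,\eta/2)$ as in Definition~\ref{def:lepest} we land at $20C_s(m,\eta/2)\lam'^s\tilde\cals_{\x}(n,\lam')$, which is exactly the membership criterion for $\J^+_{\z}(\Lam_m)$. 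So $\lamstar \in \J^+_{\z}(\Lam_m)$, hence $\hat\lam_s(\z) \geq \lamstar$.

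Next I would exploit $\hat\lam_s(\z) \in \J^+_{\z}(\Lam_m)$ together with $\lamstar \leq \hat\lam_s(\z)$ to bound the error at $\hat\lam_s(\z)$. Apply the triangle inequality
\[
\norm{(\bar B_\x+\lamstar)^s(f_{\z}^{\hat\lam_s(\z)}-\fo)}_{\h} \leq \norm{(\bar B_\x+\lamstar)^s(f_{\z}^{\hat\lam_s(\z)}-f_{\z}^{\lamstar})}_{\h} + \norm{(\bar B_\x+\lamstar)^s(f_{\z}^{\lamstar}-\fo)}_{\h}\,.
\]
The first term is controlled directly by the defining property of $\J^+_{\z}(\Lam_m)$ applied with $\lam = \hat\lam_s(\z)$ and $\lam' = \lamstar$ (legitimate since $\lamstar \leq \hat\lam_s(\z)$ and $\lamstar \in \Lam_m$), giving at most $20C_s(m,\eta/2)\lamstar^s\tilde\cals_{\x}(n,\lamstar)$, which by \eqref{eq:var_bound} is at most $100C_s(m,\eta/2)\lamstar^s\tilde\cals(n,\lamstar)$. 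The second term is controlled by Assumption~\ref{assumption1} with $\lam=\lamstar$: it is at most $C_s(m,\eta)\lamstar^s(\tilde\A(\lamstar)+\tilde\cals(n,\lamstar)) \leq 2C_s(m,\eta)\lamstar^s\tilde\cals(n,\lamstar)$, using $\lamstar \in \J(\Lam_m)$ hence $\tilde\A(\lamstar)\leq\tilde\cals(n,\lamstar)$. Summing, and absorbing $C_s(m,\eta) \leq C_s(m,\eta/2)$, yields the constant $102$ in the statement.

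The remaining bookkeeping is the probability accounting: the event where \eqref{monest} holds, the event where \eqref{eq:var_bound} holds, and (if needed) the event underlying Assumption~\ref{assumption1} must all be combined, with a union bound explaining the split into $\eta/2$ pieces and the $8|\Lam_m|$ factors inside $C_s$. The hypothesis $\eta \geq \eta_n$ with $\eta_n$ involving $\exp(-\tfrac12\sqrt{\NN(\lam_0(n))})$ is precisely what guarantees \eqref{eq:lam_0} (equivalently $\delta \leq 1$ in Corollary~\ref{cor:rel_bound} at $\lam = \lam_0$, hence uniformly on the grid since $\lam \geq \lam_0$), so Corollary~\ref{cor:rel_bound} and thus \eqref{eq:var_bound} are available. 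The main obstacle, and the step to be most careful about, is the first one: verifying that $\J(\Lam_m)$ really is downward closed so that \eqref{monest} applies to \emph{every} $\lam' \leq \lamstar$ in the grid, and tracking the precise numerical constants ($4 \to 20 \to 100$, plus $2$ from the bias term) so that they genuinely close up to the claimed $102$ — everything else is the triangle inequality plus a union bound.
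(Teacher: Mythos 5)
Your proposal is correct and follows essentially the same route as the paper: you first establish $\lamstar \in \J^+_{\z}(\Lam_m)$ (hence $\hat\lam_s(\z) \geq \lamstar$), which the paper isolates as Lemma~\ref{lem:inclusion} and proves by the same decomposition underlying~\eqref{monest}, and then conclude by the triangle inequality with the same $100 + 2 = 102$ bookkeeping. The downward-closedness of $\J(\Lam_m)$ that you flag as the point needing care does hold (since $\tilde\A$ is increasing and $\tilde\cals(n,\cdot)$ is decreasing) and is used implicitly in the paper's lemma as well; the probability accounting you sketch matches the paper's union bound argument.
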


We shall need 

\begin{lem}
\label{cor:oracle}
If Assumption $\ref{assumption2}$ holds, then
\begin{equation}
\label{eq:oracle_step}
  \lamstar^s \; \tilde \cals(n,\lamstar)  \leq  q^{1-s}   \min_{\lam \in [\lam_0, \lam_m]} \{\; \lam^s\;(\tilde \A(\lam) + \tilde \cals(n, \lam) )\; \} \; .
\end{equation}  
\end{lem}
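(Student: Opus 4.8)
The plan is to prove the oracle inequality \eqref{eq:oracle_step} by carefully tracking where the minimum on the right-hand side is attained relative to the grid point $\lamstar=\lam_l$, using the defining property of $\lamopt(n)$ together with the grid regularity from Assumption~\ref{assumption2}. Recall that $\lamstar=\lam_l$ is the largest grid point with $\tilde\A(\lam_l)\le\tilde\cals(n,\lam_l)$, and that $\lam_l\le\lamopt(n)\le\lam_{l+1}$, with $1<\lam_{j+1}/\lam_j\le q$.

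First I would split the minimisation range $[\lam_0,\lam_m]$ at $\lamopt(n)$ (equivalently, compare against an arbitrary competitor $\lam$). For $\lam\le\lamopt(n)$ we have $\tilde\A(\lam)\le\tilde\cals(n,\lam)$, so $\lam^s(\tilde\A(\lam)+\tilde\cals(n,\lam))\ge\lam^s\tilde\cals(n,\lam)$; since $\lam\mapsto\lam^s\tilde\cals(n,\lam)=\lam^s\cals(n,\lam)+\lam^s d_1(n,\lam)$ is decreasing (as noted after Assumption~\ref{assumption1}), this is minimised over $\lam\le\lamopt(n)$ at the largest such $\lam$, and we want to relate this to $\lamstar^s\tilde\cals(n,\lamstar)$. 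The worst case is a competitor $\lam$ just below $\lamopt(n)$ but just above $\lam_l=\lamstar$; since $\lam<\lamopt(n)\le\lam_{l+1}\le q\lam_l$, monotone decrease of $\lam^s\tilde\cals(n,\lam)$ gives $\lamstar^s\tilde\cals(n,\lamstar)\le (\lam/\lam_l)^s \cdot (\text{something})\cdot \lam^s\tilde\cals(n,\lam)$; more cleanly, using $\lam_l<\lam$ I get directly $\lam_l^s\tilde\cals(n,\lam_l)$ compared to $\lam^s\tilde\cals(n,\lam)$ only after inflating by the ratio coming from $\lam_l$ vs $\lam$, which is at most $q^s$ on the $\lam^s$ factor and needs the decrease of $\tilde\cals$ to handle the rest — I would invoke \eqref{omega}, i.e.\ $\tilde\cals(n,\lam_{j})<q\,\tilde\cals(n,\lam_{j+1})$, to pass between neighbouring grid points and absorb everything into the factor $q^{1-s}$.

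For the other half, $\lam\ge\lamopt(n)$, there $\tilde\A(\lam)\ge\tilde\cals(n,\lam)$ is not guaranteed, but $\lam\mapsto\lam^s\tilde\A(\lam)$ is increasing (since $\A$ is increasing and $s\ge0$, and $d_2(n)$ is constant in $\lam$), hence $\lam^s(\tilde\A(\lam)+\tilde\cals(n,\lam))\ge\lam^s\tilde\A(\lam)\ge\lamopt(n)^s\tilde\A(\lamopt(n))$, and I would use the (limiting) balance $\tilde\A(\lamopt(n))\ge\tilde\cals(n,\lamopt(n))$ at the supremum together with $\lamopt(n)\le\lam_{l+1}\le q\lam_l$ and the decrease of $\lam^s\tilde\cals$ to bound $\lamopt(n)^s\tilde\cals(n,\lamopt(n))\ge$ (const)$^{-1}\lam_l^s\tilde\cals(n,\lam_l)$. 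Combining the two cases yields the constant $q^{1-s}$.

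The main obstacle I anticipate is the boundary bookkeeping at $\lamopt(n)$: $\lamopt(n)$ is defined as a supremum and need not be a grid point, and the inequality $\tilde\A(\lamopt(n))\le\tilde\cals(n,\lamopt(n))$ may hold only in the limiting/closure sense, so I must argue by approximation (taking $\lam\uparrow\lamopt(n)$) and use continuity of $\lam\mapsto\lam^s\tilde\cals(n,\lam)$ and right-continuity/monotonicity of $\tilde\A$ to make the comparison rigorous; simultaneously I need to make sure the competitor $\lam$ in the right-hand minimum actually lies in $[\lam_0,\lam_m]$, which is where Assumption~\ref{assumption2} (grid designed so that $\lamopt(n)\in[\lam_0,\lam_m]$) and the earlier remark that $\lam_0(n)\le\lamopt(n)$ for large $n$ are used. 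Keeping the $q$-powers tight enough to land exactly on $q^{1-s}$ rather than $q$ or $q^2$ will require using $\lam^s$-monotonicity in the right direction on each side of the split.
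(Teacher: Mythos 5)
Your split is at $\lamopt(n)$, the paper's is at $q\lamstar$; that single change removes almost all of the difficulties you correctly anticipate. In the paper's argument one shows, for any competitor $\lam\in[\lam_0,\lam_m]$, that $\lam^s(\tilde\A(\lam)+\tilde\cals(n,\lam))\geq q^{s-1}\lamstar^s\tilde\cals(n,\lamstar)$ by distinguishing $\lam\geq q\lamstar$ from $\lam\leq q\lamstar$. In the second case one keeps only the $\tilde\cals$ term and uses that $\lam\mapsto\lam^s\tilde\cals(n,\lam)$ is decreasing to push the competitor all the way to $q\lamstar$, then applies the continuous analogue of \eqref{omega} (i.e.\ $\tilde\cals(n,\lam)<q\,\tilde\cals(n,q\lam)$) in one shot: $(q\lamstar)^s\tilde\cals(n,q\lamstar)=q^s\lamstar^s\tilde\cals(n,q\lamstar)\geq q^{s-1}\lamstar^s\tilde\cals(n,\lamstar)$, which lands exactly on $q^{1-s}$. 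In the first case one keeps only the $\tilde\A$ term, uses that $\lam\mapsto\lam^s\tilde\A(\lam)$ is increasing to pull the competitor down to $q\lamstar$, and then replaces $\tilde\A(q\lamstar)$ by $\tilde\cals(n,q\lamstar)$ via the \emph{discrete} fact $\lam_{l+1}\notin\J(\Lam_m)$: since $q\lamstar\geq\lam_{l+1}$, monotonicity of $\tilde\A$ and of $\tilde\cals(n,\cdot)$ give $\tilde\A(q\lamstar)\geq\tilde\A(\lam_{l+1})>\tilde\cals(n,\lam_{l+1})\geq\tilde\cals(n,q\lamstar)$, and the proof concludes identically.

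This is why the paper never touches $\lamopt(n)$ in this lemma: the ``closure at the supremum'' step you flag (passing from $\tilde\A(\lam)\leq\tilde\cals(n,\lam)$ on $\lam<\lamopt$ to $\tilde\A(\lamopt)\geq\tilde\cals(n,\lamopt)$) is simply not needed, and neither is any continuity hypothesis beyond monotonicity of $\tilde\A$. Your proposal can be made rigorous, but two things must be tightened. First, in the region of small competitors you should not compare to $\lamopt$ and then separately relate $\lamopt$ to $\lamstar$; instead push the monotone decrease of the full product $\lam\mapsto\lam^s\tilde\cals(n,\lam)$ directly to $q\lamstar$. Treating $\lam^s$ and $\tilde\cals(n,\lam)$ as separate factors, as your sketch suggests (``at most $q^s$ on the $\lam^s$ factor and needs the decrease of $\tilde\cals$ to handle the rest''), risks losing a factor and ending at $q$ rather than $q^{1-s}$. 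Second, in the region of large competitors, replace the limiting/closure argument at $\lamopt$ by the discrete statement about $\lam_{l+1}$; this is what Assumption \ref{assumption2}(1) is for. With these two changes your outline collapses to the paper's proof.
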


We immediately arrive at our first main result of this section:

\begin{theo}
\label{cor:balancing_final}
Let Assumption $\ref{assumption1}$ be satisfied and  
suppose the grid obeys Assumption $\ref{assumption2}$.  
Then for any 
$$\eta \geq \eta_n:=\min\paren{1\;, \; 4|\Lam_m|\exp\paren{-\frac{1}{2}\sqrt{\NN(\lam_0(n))}} } \;,$$
uniformly over $\M$, with probability at least $1-\eta$
\[  \snorm{f_{\z}^{\hat\lam_s(\z)} - \fo}_{\h} \; \leq \; q^{1-s}\; D_{s}(m,\eta) \; 
  \min_{\lam \in [\lam_0, \lam_m]} \{\; \lam^s(  \tilde \A(\lam) + \tilde \cals(n, \lam) )   \;\}  \;,  \]
with 
\[   D_{s}(m,\eta)= C'_{s}  \log^{2(s+1)}(16|\Lam_m|\etainv) \;,  \]
for some $C'_{s}>0$. 
\end{theo}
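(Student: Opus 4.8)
The plan is to chain together the two preparatory results of this subsection: Proposition~\ref{maintheorem_lepskii} (available under Assumption~\ref{assumption1}) controls the error $f_\z^{\hat\lam_s(\z)}-\fo$ in the empirically weighted norm $g\mapsto\norm{(\bar B_\x+\lamstar)^s g}_\h$, while Lemma~\ref{cor:oracle} converts its right-hand side $\lamstar^s\tilde\cals(n,\lamstar)$ into the oracle expression $q^{1-s}\min_{\lam\in[\lam_0,\lam_m]}\{\lam^s(\tilde\A(\lam)+\tilde\cals(n,\lam))\}$. The only genuine work is to replace the empirical norm by the population norm $\snorm{\,\cdot\,}_\h=\norm{\bar B^s(\,\cdot\,)}_\h$ that appears in the statement.

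First I would fix $\eta\ge\eta_n$ and observe that, after taking logarithms, this hypothesis reads precisely $2\log(4|\Lam_m|\etainv)\le\sqrt{\NN(\lam_0(n))}=\sqrt{n\lam_0}$; since in addition $n\lam\ge n\lam_0$ and $\NN(\lam)\le\NN(\lam_0)$ for every $\lam\in\Lam_m$, a union bound over $\Lam_m$ makes the operator-norm concentration of $\bar B_\x$ around $\bar B$ underlying Section~\ref{sec:empirical_effective_dimension} valid, with an absolute constant, simultaneously at all grid points. Intersecting this with the event of Proposition~\ref{maintheorem_lepskii} and rescaling $\eta$ by a constant factor (already accounted for there as $\eta/2$), we remain on an event of probability at least $1-\eta$ on which
\[
   \norm{(\bar B_\x+\lamstar)^s\big(f_\z^{\hat\lam_s(\z)}-\fo\big)}_\h \;\le\; 102\,C_s(m,\eta/2)\,\lamstar^s\,\tilde\cals(n,\lamstar)\,.
\]

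Next I would move to the population norm. As $s\in[0,\frac{1}{2}]$ and $t\mapsto t^s$ is operator monotone, $\bar B^s\preceq(\bar B+\lamstar)^s$, so for $g=f_\z^{\hat\lam_s(\z)}-\fo$
\[
  \norm{\bar B^s g}_\h \;\le\; \norm{(\bar B+\lamstar)^s g}_\h \;\le\; \norm{(\bar B+\lamstar)^s(\bar B_\x+\lamstar)^{-s}}\,\norm{(\bar B_\x+\lamstar)^s g}_\h\,.
\]
Writing $\beta:=\norm{(\bar B+\lamstar)^{-1/2}(\bar B-\bar B_\x)(\bar B+\lamstar)^{-1/2}}$, one has $\bar B_\x+\lamstar\succeq(1-\beta)(\bar B+\lamstar)$; applying the operator-monotone map $t\mapsto t^{2s}$ (legitimate since $2s\le1$) and then inverting yields $\norm{(\bar B+\lamstar)^s(\bar B_\x+\lamstar)^{-s}}\le(1-\beta)^{-s}$. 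The Bernstein-type estimate for $\bar B_\x$ from Section~\ref{sec:empirical_effective_dimension}, together with $\eta\ge\eta_n$ (which forces $n\lam\gtrsim\log^2(4|\Lam_m|\etainv)$ for all $\lam\in\Lam_m$), gives $\beta\le\frac{1}{2}$, hence the operator norm above is $\le\sqrt2$. Combining the last two displays,
\[
  \snorm{f_\z^{\hat\lam_s(\z)}-\fo}_\h \;\le\; \sqrt2\cdot102\,C_s(m,\eta/2)\,\lamstar^s\,\tilde\cals(n,\lamstar)\,.
\]

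Finally I would invoke Lemma~\ref{cor:oracle} (valid under Assumption~\ref{assumption2}) to replace $\lamstar^s\tilde\cals(n,\lamstar)$ by $q^{1-s}\min_{\lam\in[\lam_0,\lam_m]}\{\lam^s(\tilde\A(\lam)+\tilde\cals(n,\lam))\}$, which gives
\[
  \snorm{f_\z^{\hat\lam_s(\z)}-\fo}_\h \;\le\; q^{1-s}\big(\sqrt2\cdot102\,C_s(m,\eta/2)\big)\min_{\lam\in[\lam_0,\lam_m]}\{\lam^s(\tilde\A(\lam)+\tilde\cals(n,\lam))\}\,,
\]
and then absorb all remaining constants into $D_s$: recalling $C_s(m,\eta/2)=C_s\log^2(16|\Lam_m|\etainv)$ from \eqref{def:constant}, and using that $16|\Lam_m|\etainv\ge e$ so $\log(16|\Lam_m|\etainv)\ge1$, one obtains $\sqrt2\cdot102\,C_s(m,\eta/2)\le C'_s\log^{2(s+1)}(16|\Lam_m|\etainv)=D_s(m,\eta)$, which is the assertion. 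Since both ingredients are already available, the statement is essentially immediate; the one step requiring genuine care is the empirical-to-population conversion — namely checking that the single hypothesis $\eta\ge\eta_n$, which a priori only constrains the base point $\lam_0$ fixed in Assumption~\ref{assumption2}(3), propagates to a uniform comparison of $(\bar B+\lam)^s$ and $(\bar B_\x+\lam)^s$ across the entire grid.
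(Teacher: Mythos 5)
Your plan correctly identifies the skeleton of the argument — pass from Proposition~\ref{maintheorem_lepskii} to the population norm, then apply Lemma~\ref{cor:oracle} — and this is indeed the route the paper takes. The gap is in the quantitative strength of your empirical-to-population conversion. You claim that $\eta\ge\eta_n$ (equivalently $2\log(4|\Lam_m|\etainv)\le\sqrt{n\lam_0}$) forces
$\beta=\norm{(\bar B+\lamstar)^{-1/2}(\bar B-\bar B_\x)(\bar B+\lamstar)^{-1/2}}\le\tfrac12$,
whence $\norm{(\bar B+\lamstar)^s(\bar B_\x+\lamstar)^{-s}}\le\sqrt2$. But this is not deducible from the paper's assumptions. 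The Bernstein-type concentration used throughout (see BlaMuc16, Prop.~5.2, as quoted in the proof of Proposition~\ref{prop:rel_bound}) yields
\[
\norm{(\bar B+\lam)^{-1}(\bar B-\bar B_\x)}_2 \;\le\; 2\log(4\etainv)\Bigl(\frac{2}{n\lam}+\sqrt{\frac{\NN(\lam)}{n\lam}}\Bigr)\,,
\]
which dominates $\beta$. At the base point, $\lam_0$ is \emph{defined} by $\NN(\lam_0)=n\lam_0$, so $\sqrt{\NN(\lam_0)/(n\lam_0)}=1$ exactly; the bound is therefore of order $\log(1/\eta)$ regardless of how large $n\lam_0$ is. Your hypothesis $n\lam_0\gtrsim\log^2(4|\Lam_m|\etainv)$ only controls the \emph{additive} error $\abs{\NN(\lam)-\NN_\x(\lam)}$ (Corollary~\ref{cor:rel_bound}), not the relative operator-norm error $\beta$; obtaining $\beta\le\tfrac12$ at $\lam_0$ would require either a separate assumption controlling $\sup_x\norm{(\bar B+\lam)^{-1/2}K_x}^2$ relative to $\NN(\lam)$, or $\log(1/\eta)$ itself to be $O(1)$, neither of which is imposed.

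This is precisely why the paper's proof introduces the factor $15\log^{2s}(4|\Lam_m|\etainv)$ in its first display: it accepts a \emph{logarithmic-order} (not constant) penalty in the norm change and multiplies it into $C_s(m,\eta/2)=C_s\log^{2}(16|\Lam_m|\etainv)$, producing $D_s(m,\eta)=C'_s\log^{2(s+1)}(16|\Lam_m|\etainv)$. Your version would deliver the strictly stronger bound $C'_s\log^{2}(16|\Lam_m|\etainv)$, with no $s$-dependence in the exponent — that should itself have been a warning sign, because it would render the $\log^{2(s+1)}$ in the statement wastefully loose. So the architecture of your argument is the right one, but the step ``$\eta\ge\eta_n\Rightarrow\beta\le\tfrac12$'' needs to be replaced by a bound of the form $\norm{(\bar B+\lamstar)^s(\bar B_\x+\lamstar)^{-s}}\lesssim\log^{2s}(4|\Lam_m|\etainv)$, on which the exponent $2(s+1)$ in $D_s$ actually rests.
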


In particular, choosing a geometric grid and assuming a lower and upper bound on the effective dimension, we obtain:

\begin{cor}
\label{cor:balancing_final2}
Let Assumption $\ref{assumption1}$, Assumption \ref{assumption2} and Assumption \ref{def:eff_dim_low} be satisfied.  
Suppose the grid is given by a geometric sequence $\lam_j=\lam_0q^j$, with $q>1$, $j=1,...,m$ and with $\lam_m=1$. 
Then for any 
$$\eta \geq \eta_n:=4C_q\log(n)\exp\paren{-C_{\gamma_1, \gamma_2}n^{\frac{\gamma_1}{2(1+\gamma_2)}}} \;,$$
uniformly over $\M'$, with probability at least $1-\eta$
\[  \snorm{f_{\z}^{\hat\lam_s(\z)} - \fo}_{\h} \; \leq \; \tilde D_{s,q}(n,\eta) \; 
  \min_{\lam \in [\lam_0, 1]} \{\; \lam^s(  \tilde \A(\lam) + \tilde \cals(n, \lam) )   \;\}  \;,  \]
with 
\[   \tilde D_{s,q}(n,\eta)= C_{s,q} \log^{2(s+1)}(\log(n)) \log^{2(s+1)}(16\etainv) \;,  \]
for some  $C_{\gamma_1, \gamma_2}>0$ and some $C_{s,q}>0$, provided $n$ is sufficiently large. 
\end{cor}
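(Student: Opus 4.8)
The plan is to obtain the corollary by specializing Theorem~\ref{cor:balancing_final} to the geometric grid and then absorbing the extra structural information of Assumption~\ref{def:eff_dim_low} into the constants; no new idea is needed, only bookkeeping. First I would verify that all hypotheses of Theorem~\ref{cor:balancing_final} are in force. Assumption~\ref{assumption1} is assumed. For Assumption~\ref{assumption2}: the regularity condition holds trivially with constant $q$ since $\lam_{j+1}/\lam_j = q$, and $\lam_0 = \lam_0(n)$ is, by hypothesis, the solution of $n\lam = \NN(\lam)$; it remains to check, for $n$ large, that $\J(\Lam_m)$ is a nonempty proper subset of $\Lam_m$. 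As noted after Assumption~\ref{assumption2}, $\tilde\A(\lam_0(n)) \to 0$ while $\tilde\cals(n,\lam_0(n))$ stays bounded below by a positive constant, so $\lam_0 \in \J(\Lam_m)$; and $\lam_m = 1 \notin \J(\Lam_m)$ for $n$ large, since $\tilde\cals(n,1) \to 0$ whereas $\tilde\A(1) \geq \A(1) > 0$ is fixed. Thus Theorem~\ref{cor:balancing_final} applies with $[\lam_0,\lam_m] = [\lam_0,1]$, which is already the range of the minimum in the asserted bound.

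Next I would control the number of grid points. By Remark~\ref{rem:geom_grid}, for $n$ sufficiently large $|\Lam_m| \leq C_q\log(n)$ with $C_q = \log(q)^{-1}+1$. Plugging this into $D_s(m,\eta) = C'_s\log^{2(s+1)}(16|\Lam_m|\etainv)$ gives $D_s(m,\eta) \leq C'_s\log^{2(s+1)}(16C_q\log(n)\etainv)$. Expanding $\log(16C_q\log(n)\etainv) = \log(16\etainv) + \log(C_q) + \log\log(n)$ and using that, for $n$ large, $\log\log(n) \geq 1$ and $\log(C_q)$ is a fixed constant, one gets $\log(16C_q\log(n)\etainv) \leq c_q\,\log(16\etainv)\,\log\log(n)$ for some $c_q = c_q(q)$ (via the elementary inequality $a+b \leq 2ab$ for $a,b\geq 1$, after absorbing $\log(C_q)$ into $\log\log(n)$). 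Raising to the power $2(s+1)$ and multiplying by $q^{1-s} \leq q$, the prefactor of Theorem~\ref{cor:balancing_final} is bounded by $C_{s,q}\log^{2(s+1)}(\log(n))\log^{2(s+1)}(16\etainv) = \tilde D_{s,q}(n,\eta)$, the constant claimed.

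Finally I would handle the confidence threshold. Combining the upper bound $\lam_0(n) \leq C_{\gamma_2}n^{-1/(1+\gamma_2)}$ from \eqref{est:lam0} with the lower bound $\NN(\lam) \geq C_1\lam^{-\gamma_1}$ of Assumption~\ref{def:eff_dim_low} yields $\NN(\lam_0(n)) \geq C_1 C_{\gamma_2}^{-\gamma_1} n^{\gamma_1/(1+\gamma_2)}$, hence $\exp(-\tfrac12\sqrt{\NN(\lam_0(n))}) \leq \exp(-C_{\gamma_1,\gamma_2} n^{\gamma_1/(2(1+\gamma_2))})$ with $C_{\gamma_1,\gamma_2} = \tfrac12\sqrt{C_1 C_{\gamma_2}^{-\gamma_1}}$. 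Together with $|\Lam_m| \leq C_q\log(n)$ this shows that the threshold $\eta_n$ of Theorem~\ref{cor:balancing_final} is at most $4C_q\log(n)\exp(-C_{\gamma_1,\gamma_2} n^{\gamma_1/(2(1+\gamma_2))})$, i.e.\ the threshold stated in the corollary; so any $\eta$ above the latter is also above the former, and Theorem~\ref{cor:balancing_final} delivers the conclusion. The auxiliary requirements $n\lam_0 \geq 2$ and \eqref{eq:lam_0} hold for $n$ large, since $n\lam_0 = \NN(\lam_0) \geq C_1\lam_0^{-\gamma_1} \to \infty$ while $\log(4|\Lam_m|\etainv)$ grows only like $\log\log(n)$; and the passage to the subclass $\M'$ is exactly the restriction imposed by Assumption~\ref{def:eff_dim_low}.

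There is no genuine obstacle here: the statement is essentially a bookkeeping corollary of Theorem~\ref{cor:balancing_final}. The two points I expect to need a little care are the verification that $\J(\Lam_m)$ is a nonempty proper subset of the geometric grid for all large $n$, and the elementary logarithm manipulation recasting $\log^{2(s+1)}(16C_q\log(n)\etainv)$ in the product form $\log^{2(s+1)}(\log(n))\,\log^{2(s+1)}(16\etainv)$; both are routine.
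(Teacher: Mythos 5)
Your proposal is correct and follows essentially the same path as the paper's proof: invoke Theorem~\ref{cor:balancing_final}, then (i) use the geometric-grid bound $|\Lam_m|\leq C_q\log(n)$ from Remark~\ref{rem:geom_grid} to turn $\log(16|\Lam_m|\etainv)$ into a $\log\log(n)$ factor times $\log(16\etainv)$, and (ii) combine the lower bound $\NN(\lam)\geq C_1\lam^{-\gamma_1}$ with the upper estimate $\lam_0(n)\leq C_{\gamma_2}n^{-1/(1+\gamma_2)}$ from~\eqref{est:lam0} to show that the stated $\eta_n$ dominates the threshold of Theorem~\ref{cor:balancing_final}; the resulting constants $C_{\gamma_1,\gamma_2}$ and $C_{s,q}$ agree with the paper's. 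Your extra verification that $\J(\Lam_m)$ is a nonempty proper subset of the grid is not needed, since this is already part of Assumption~\ref{assumption2}, which the corollary hypothesizes; and you are a little more careful than the paper in the logarithm manipulation (the paper's displayed step $\log(C_q\log(n))\leq\log(C_q)+\log(n)$ appears to contain a typo where $\log(n)$ should read $\log\log(n)$, which is what your absorption step implicitly corrects).
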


Note that $\eta_n \to 0$ as $n\to \infty$.

\subsubsection{One for All: $L^2$-Balancing is sufficient !}

This section is due to an idea suggested by  P. Math\'e (which itself was inspired by the work  \cite{blahoffreiss16}) which we have worked out in detail. 
 We define the $L^2(\nu)-$ balancing estimate $\hat \lam_{1/2}(\z)$ according to Definition \ref{def:lepest} by explicitely choosing 
$s=\frac{1}{2}$ (in contrast to Theorem \ref{cor:balancing_final}, where we choose $\hat \lam_{s}(\z)$ depending on the norm parameter $s$).  Our main result states that balancing in the $L^2(\nu)-$ norm suffices to automatically give balancing in all other (stronger !) intermediate norms 
$|| \cdot||_s$, for any $s \in [0, \frac{1}{2}]$.

\begin{theo}
\label{theo:oneforall}
Let Assumption $\ref{assumption1}$ and Assumption \ref{assumption2} be satisfied and  
suppose the grid obeys Assumption $\ref{assumption2}$.  
Then for any 
$$\eta \geq \eta_n:=\min\paren{1\; ,\; 4|\Lam_m|\exp\paren{-\frac{1}{2}\sqrt{\NN(\lam_0(n))}} }\;,$$  
uniformly over $\M$, with probability at least $1-\eta$
\[   \norm{\bar B^s( f_{\z}^{\hat \lam_{1/2}(\z)}  - \fo )   }_{\h} \; \leq \;q^{1-s}  \hat D_{s}(m,\eta)\;
        \min_{\lam \in [\lam_0, \lam_m]} \{\;\lam^s( \tilde \A(\lam) + \tilde \cals(n, \lam)) \; \}  \;,  \]
with
\begin{align*}
\hat D_{s}(m, \eta)=   C'_{s}\log^{2(s+1)}(16|\Lam_m|\etainv) \;,
\end{align*} 
for some $C'_{s}>0$.                       
\end{theo}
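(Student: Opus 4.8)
The plan is to reduce the claim to the already-established oracle-type statement by controlling the difference between the $L^2(\nu)$-balancing choice $\hat\lam_{1/2}(\z)$ and the parameter $\lamstar$ of \eqref{lamstardef}, and then transferring the estimate from the stronger-norm index $s=\tfrac12$ to a general $s\in[0,\tfrac12]$ by monotonicity of $\lam\mapsto(\bar B_\x+\lam)^s$. First I would record the key structural fact used throughout: for $\lam'\le\lam$ one has $\norm{(\bar B_\x+\lam')^s f}_{\h}\le\norm{(\bar B_\x+\lam)^s f}_{\h}$, and for $s\le\tfrac12$ additionally $\norm{(\bar B_\x+\lam)^s f}_{\h}\le\norm{(\bar B_\x+\lam)^{1/2}f}_{\h}^{2s}\norm{f}_{\h}^{1-2s}$ is \emph{not} what we want here; rather the point is that a bound valid at $s=\tfrac12$ on $f_{\z}^{\lam}-f_{\z}^{\lam'}$ automatically yields a bound at every smaller $s$ with the \emph{same} right-hand side (possibly up to a $\lam'^{s-1/2}$ factor, which is $\ge 1$ and must be tracked; here $\lam'\le 1$ so $\lam'^{s-1/2}\ge 1$, so one keeps it as $\lam'^s/\lam'^{1/2}\le \lam'^s\cdot(\text{something})$ — this is exactly where the $\lam^s$ prefactor in the statement comes from). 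Consequently $\J^+_{\z}(\Lambda_m)$ defined at $s=\tfrac12$ is, up to adjusting constants, \emph{contained} in the set one would get at level $s$, so $\hat\lam_{1/2}(\z)$ is a legitimate (slightly conservative) balancing parameter for every $s$.

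Next I would run the argument of Proposition \ref{maintheorem_lepskii} with $\hat\lam_{1/2}(\z)$ in place of $\hat\lam_s(\z)$. On the event of probability $\ge 1-\eta$ on which \eqref{error_bound} holds (at $s=0$ and $s=\tfrac12$, hence all $s$ by interpolation) and on which \eqref{eq:var_bound} holds, one shows first that $\lamstar\in\J^+_{\z}(\Lambda_m)$ — this uses \eqref{monest} together with the empirical/deterministic comparison \eqref{eq:var_bound} and the choice of the constant $20C_s(m,\eta/2)$ in Definition \ref{def:lepest} — so that $\hat\lam_{1/2}(\z)\ge\lamstar$. Then, using the defining membership inequality for $\hat\lam_{1/2}(\z)$ with the comparison point $\lam'=\lamstar\le\hat\lam_{1/2}(\z)$, the triangle inequality
\[
\norm{(\bar B_\x+\lamstar)^s(f_{\z}^{\hat\lam_{1/2}(\z)}-\fo)}_{\h}\le \norm{(\bar B_\x+\lamstar)^s(f_{\z}^{\hat\lam_{1/2}(\z)}-f_{\z}^{\lamstar})}_{\h}+\norm{(\bar B_\x+\lamstar)^s(f_{\z}^{\lamstar}-\fo)}_{\h},
\]
Assumption \ref{assumption1} applied at $\lamstar$, and $\tilde\A(\lamstar)\le\tilde\cals(n,\lamstar)$ (since $\lamstar\in\J(\Lambda_m)$), one obtains $\norm{(\bar B_\x+\lamstar)^s(f_{\z}^{\hat\lam_{1/2}(\z)}-\fo)}_{\h}\lesssim C_s(m,\eta/2)\,\lamstar^s\,\tilde\cals(n,\lamstar)$, exactly as in Proposition \ref{maintheorem_lepskii} but now for the fixed choice $\hat\lam_{1/2}(\z)$ and all $s\in[0,\tfrac12]$ simultaneously. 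The only extra care is bookkeeping of the numerical constant (the $102$ in Proposition \ref{maintheorem_lepskii} will change, absorbed into $C'_s$) and of the $\lam'^{s-1/2}$ factors mentioned above, which at $\lam'=\lamstar$ recombine cleanly into the stated $\lamstar^s$ prefactor.

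The final step replaces the data-dependent norm $(\bar B_\x+\lamstar)^s$ by the deterministic $\bar B^s$ and passes from $\lamstar$ to the oracle minimum. For the first, I would invoke the relation between $\bar B$ and $\bar B_\x$ used already to prove Corollary \ref{cor:rel_bound} / \eqref{eq:var_bound} (the same concentration on the regularized operators), which on the good event gives $\norm{\bar B^s g}_{\h}\lesssim\norm{(\bar B_\x+\lamstar)^s g}_{\h}+(\text{lower-order in }\lamstar)\norm{g}_{\h}$, the remainder being subleading by the $d(n,\lam)$-bookkeeping already set up in Assumption \ref{assumption1}; applied to $g=f_{\z}^{\hat\lam_{1/2}(\z)}-\fo$ and combined with the already-derived bound on $\norm{g}_{\h}$ (the $s=0$ case), this yields the $\bar B^s$ estimate. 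For the second, Lemma \ref{cor:oracle} gives $\lamstar^s\tilde\cals(n,\lamstar)\le q^{1-s}\min_{\lam\in[\lam_0,\lam_m]}\{\lam^s(\tilde\A(\lam)+\tilde\cals(n,\lam))\}$, which is plugged in verbatim. Collecting constants produces $\hat D_s(m,\eta)=C'_s\log^{2(s+1)}(16|\Lambda_m|\etainv)$, the extra $\log$ powers coming from $C_s(m,\eta/2)^{?}$ and the triangle-inequality iterations exactly as in Theorem \ref{cor:balancing_final}. I expect the main obstacle to be precisely the transfer from $s=\tfrac12$ to general $s$ with \emph{uniform} constants: one must verify that the single event (and the single set $\J^+_{\z}(\Lambda_m)$ built at $s=\tfrac12$) is simultaneously adequate for all $s$, i.e. that no $s$-dependent loss in the grid-cardinality logarithm or in the $\lam'^{s-1/2}$ factors degrades the rate — this is the substantive content beyond re-running earlier proofs.
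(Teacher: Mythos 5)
Your proposal takes a genuinely different, and in fact cleaner, route than the paper for the core of the argument, but it has a real gap in the final norm-transfer step.

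\textbf{Where you differ (and improve on) the paper.} The paper proves the $s=0$ case by inserting $f_{\z}^{\hat\lam_0(\z)}$ as an intermediate (Lemma~\ref{cor:oneforall1}, which itself passes through $f_{\z}^{\lamstar}$ twice), obtains the $s=\tfrac12$ case from Proposition~\ref{maintheorem_lepskii}, and then interpolates via $\norm{\bar B^s f}_{\h}\leq\norm{\sqrt{\bar B}f}_{\h}^{2s}\norm{f}_{\h}^{1-2s}$. You instead observe that for any $\lam'\leq\hat\lam_{1/2}(\z)$ the defining inequality of $\J^+_{\z}(\Lam_m)$ at $s=\tfrac12$ yields the $s$-level inequality directly, since $\norm{(\bar B_\x+\lam')^{s}h}_{\h}\leq\norm{(\bar B_\x+\lam')^{s-1/2}}\,\norm{(\bar B_\x+\lam')^{1/2}h}_{\h}\leq\lam'^{s-1/2}\norm{(\bar B_\x+\lam')^{1/2}h}_{\h}$ for $s\leq\tfrac12$. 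This makes $\hat\lam_{1/2}(\z)$ a legitimate Lepskii stopping point simultaneously for all $s$ (with $C_{1/2}$ in place of $C_s$), and Proposition~\ref{maintheorem_lepskii} can then be re-run with $\hat\lam_{1/2}(\z)$ at arbitrary $s$. This is correct and avoids the auxiliary estimator $f_{\z}^{\hat\lam_0(\z)}$, Lemma~\ref{cor:oneforall1}, and the interpolation inequality entirely.

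\textbf{The gap.} Your final step converts $\norm{(\bar B_\x+\lamstar)^s(\,\cdot\,)}_{\h}$ into $\norm{\bar B^s(\,\cdot\,)}_{\h}$ via an \emph{additive} comparison, $\norm{\bar B^s g}_{\h}\lesssim\norm{(\bar B_\x+\lamstar)^s g}_{\h}+(\text{lower order})\,\norm{g}_{\h}$, and claim the remainder is absorbed by the $d(n,\lam)$-bookkeeping. This does not work: the additive error in estimates of the type \eqref{eq:old_approx} is of order $n^{-s/2}\norm{g}_{\h}$, and since $\norm{g}_{\h}\lesssim\tilde\cals(n,\lamstar)$ the remainder is $\sim n^{-s/2}\,\tilde\cals(n,\lamstar)$, which must be compared to the main term $\lamstar^s\,\tilde\cals(n,\lamstar)$. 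That requires $\lamstar\gtrsim n^{-1/2}$, which fails precisely in the low-smoothness regime where $\lamstar$ can be as small as $\lam_0(n)\sim n^{-b/(b+1)}\ll n^{-1/2}$ (since $b>1$). This is literally the restriction to ``high smoothness'' that the Discussion section attributes to \cite{mathe16} and \cite{vitoperros} and that the paper is designed to circumvent. Also, Corollary~\ref{cor:rel_bound}, which you invoke for this step, only compares the two \emph{effective dimensions} and says nothing about $\bar B^s$ versus $(\bar B_\x+\lam)^s$ acting on functions; and the $d(n,\lam)$ terms in Assumption~\ref{assumption1} track subleading bias/variance contributions, not norm-comparison errors, so neither reference actually supplies the bound you need. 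The correct ingredient is the \emph{multiplicative} comparison $\norm{\bar B^s g}_{\h}\lesssim\log^{2s}(4|\Lam_m|\etainv)\,\norm{(\bar B_\x+\lamstar)^s g}_{\h}$, used implicitly at the start of the proof of Theorem~\ref{cor:balancing_final}; substitute that for your additive bound and your route does deliver the theorem, with the same $\log^{2(s+1)}$ factor as stated.
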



In particular, choosing a geometric grid and assuming a lower and upper bound on the effective dimension, we obtain:

\begin{cor}
\label{theo:oneforall2}
Let Assumption $\ref{assumption1}$, Assumption \ref{assumption2} and Assumption \ref{def:eff_dim_low} be satisfied.  
Suppose the grid is given by a geometric sequence $\lam_j=\lam_0q^j$, with $q>1$, $j=1,...,m$ and with $\lam_m=1$. 
Then, 
for $n$ sufficiently large and for any 
$$\eta \geq \eta_n:=4C_q\log(n)\exp\paren{-C_{\gamma_1, \gamma_2}n^{\frac{\gamma_1}{2(1+\gamma_2)}}} \;,$$
uniformly over $\M'$, with probability at least $1-\eta$
\[   \norm{\bar B^s( f_{\z}^{\hat \lam_{1/2}(\z)}  - \fo )   }_{\h} \; \leq \;  q^{1-s}\hat D_{s,q}(n,\eta)\;
        \min_{\lam \in [\lam_0, 1]} \{\;\lam^s( \tilde \A(\lam) + \tilde \cals(n, \lam)) \; \}  \;,  \]
with
\begin{align*}
\hat D_{s,q}(n, \eta)=   C_{s,q}\log^{2(s+1)}(\log(n))\log^{2(s+1)}(16\etainv) \;,
\end{align*} 
for some $C_{\gamma_1, \gamma_2}>0$ and some $C_{s,q}>0$.    
\end{cor}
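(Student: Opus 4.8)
The plan is to obtain Corollary~\ref{theo:oneforall2} as a direct specialization of Theorem~\ref{theo:oneforall}: once the grid is taken geometric with $\lam_m=1$, the only work left is to replace the abstract quantities $|\Lam_m|$ and $\NN(\lam_0(n))$ appearing there by explicit bounds in $n$ coming from the geometric grid and from Assumption~\ref{def:eff_dim_low}, and then to tidy up the resulting constant. First I would check that, for $n$ large, the sequence $\lam_j=\lam_0(n)q^j$ with $\lam_m=1$ indeed satisfies Assumption~\ref{assumption2}: part~(2) holds trivially with the same $q$, part~(3) is the defining property of $\lam_0(n)$, and part~(1) follows from the discussion after Assumption~\ref{assumption2}, since $\lam_0(n)\le\lamopt(n)$ for $n$ large while $\lamopt(n)\to 0$ (because $\tilde\cals(n,\lam)\to 0$ whereas $\tilde\A(\lam)\to\A(\lam)>0$ for each fixed $\lam>0$), so that $\lamopt(n)<1=\lam_m$. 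Hence Theorem~\ref{theo:oneforall} applies verbatim, with the minimum running over $[\lam_0,\lam_m]=[\lam_0,1]$, and it remains only to handle the confidence threshold and the constant $\hat D_s(m,\eta)$.

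For the confidence threshold I would combine Assumption~\ref{assumption2}(3), which gives $\NN(\lam_0(n))=n\lam_0(n)$, with the lower bound in Assumption~\ref{def:eff_dim_low} and the upper bound in \eqref{est:lam0}:
\[
 \NN(\lam_0(n))\;=\;n\lam_0(n)\;\ge\;C_1\,\lam_0(n)^{-\gamma_1}\;\ge\;C_1\,C_{\gamma_2}^{-\gamma_1}\,n^{\gamma_1/(1+\gamma_2)}\;,
\]
so that $\tfrac12\sqrt{\NN(\lam_0(n))}\ge C_{\gamma_1,\gamma_2}\,n^{\gamma_1/(2(1+\gamma_2))}$ for a suitable $C_{\gamma_1,\gamma_2}>0$. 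Together with $|\Lam_m|=m+1\le C_q\log(n)$ from Remark~\ref{rem:geom_grid}, the threshold of Theorem~\ref{theo:oneforall} obeys
\[
 \min\paren{1\,,\,4|\Lam_m|\exp\paren{-\tfrac12\sqrt{\NN(\lam_0(n))}}}\;\le\;4C_q\log(n)\exp\paren{-C_{\gamma_1,\gamma_2}\,n^{\gamma_1/(2(1+\gamma_2))}}\;=:\;\eta_n\;,
\]
so the hypothesis $\eta\ge\eta_n$ of the corollary is stronger than the one of Theorem~\ref{theo:oneforall}, the latter applies, and $\eta_n\to 0$ as $n\to\infty$.

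For the constant I would note, again by Remark~\ref{rem:geom_grid}, that $16|\Lam_m|\etainv\le(16\etainv)(C_q\log(n))$, where $16\etainv\ge 16>e^2$ always and $C_q\log(n)>e^2$ for $n$ large; hence the elementary inequality $\log(ab)\le\log(a)\log(b)$, valid whenever $a,b\ge e^2$, gives $\log(16|\Lam_m|\etainv)\le\log(16\etainv)\,\log(C_q\log(n))\le C_q'\,\log(16\etainv)\,\log\log(n)$ for $n$ large. Raising to the power $2(s+1)$ and absorbing $(C_q')^{2(s+1)}$ and the constant $C_s'$ of Theorem~\ref{theo:oneforall} into a single $C_{s,q}$ turns $\hat D_s(m,\eta)=C_s'\log^{2(s+1)}(16|\Lam_m|\etainv)$ into $\hat D_{s,q}(n,\eta)=C_{s,q}\log^{2(s+1)}(\log(n))\log^{2(s+1)}(16\etainv)$. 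Inserting these two simplifications into the conclusion of Theorem~\ref{theo:oneforall} (and noting that Assumption~\ref{def:eff_dim_low} is exactly what restricts $\M$ to $\M'$) yields the claimed inequality.

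This is essentially bookkeeping layered on top of Theorem~\ref{theo:oneforall}; the points I would be most careful about are that the exponent $\gamma_1/(2(1+\gamma_2))$ in $\eta_n$ is precisely what emerges from pairing the effective-dimension lower bound with the $\lam_0$ upper bound in \eqref{est:lam0} (so that it genuinely involves both $\gamma_1$ and $\gamma_2$), and that the passage $\log(C_q\log n)\lesssim\log\log n$ is only invoked for $n$ sufficiently large, consistently with the hypothesis of the corollary; everything else already lives inside the proof of Theorem~\ref{theo:oneforall}.
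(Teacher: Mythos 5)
Your proof is correct and, apart from cosmetic differences, is the same argument the paper uses: apply Theorem~\ref{theo:oneforall}, replace $|\Lam_m|$ via \eqref{est:gridlog}, replace $\NN(\lam_0(n))$ via Assumption~\ref{def:eff_dim_low}(1) together with \eqref{est:lam0} to obtain the explicit $\eta_n$, and then rewrite $\log^{2(s+1)}(16|\Lam_m|\etainv)$ in the product form $\log^{2(s+1)}(\log n)\,\log^{2(s+1)}(16\etainv)$. The one purely stylistic divergence is that you split the log multiplicatively via $\log(ab)\le\log(a)\log(b)$ (for $a,b\ge e^2$), whereas the paper splits additively and then uses $\log(16\etainv)\ge 1$ to factor; both land on the same constant shape, and your extra paragraph verifying Assumption~\ref{assumption2} is harmless surplus since that assumption is already hypothesized in the corollary.
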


Note that $\eta_n \to 0$ as $n\to \infty$.


\begin{rem}
Still, our choice for $\lam_0$ is only a theoretical value which remains unknown as it depends on the unknown marginal $\nu$ through the effective dimension $\NN(\lam)$. 
Implementation requires a data driven choice. Heuristically, it seems resonable to proceed as follows. 
Let $q>1$ and $\tilde \lam_j = q^{-j}$, $j=0,1,...$ (we are starting from the right and reverse the order). Define the stopping  index 
\[ \hat j_0:= \min\{ \;j \in \N: \; \cals_{\x}(n,\tilde \lam_j) \geq 5 \; \}  \]
and let $\Lambda = \{  \tilde \lam_{\hat j_0} < ... < \tilde \lam_0=1 \}$. 
Here,   $\cals_{\x}(n,\tilde \lam_j) $ depends on the empirical effective dimension $\NN_{\x}(\lam)$, see \eqref{def:empirical_eff_dim},  which by Corollary \ref{cor:rel_bound}  
is close to the 
unknown effective dimension $\NN(\lam)$. Thus we think that the above choice of $\lam_0$ is reasonable for implementing the dependence of $\lam_0$  on the unknown marginal. 
A complete mathematical analysis is in development.
\end{rem}


\section{Specific Examples}
\label{sec:application_adaption}

We proceed by illustrating some specific examples of our method as described in the previous section. 
In view of our Theorem \ref{theo:oneforall} and Corollary \ref{theo:oneforall2}  
it suffices to only consider balancing in $L^2(\nu)$. We always choose a geometric grid as in Remark \ref{rem:geom_grid}, 
satisfying $\lam_m=1$. 

{\bf (1) The regular case}

We consider the setting of \cite{BlaMuc16}, where the eigenvalues of $\bar B$ decay polynomially (with parameter $b>1$),   
the target function $\fo$ satisfies a H\"older-type source condition
\[ \fo \in \Omega_\nu(r, R):= \{\; f \in \h \;: \; f=\bar B_{\nu}^rh \;, \; ||h||_{\h} \leq R \;\}   \]
and the noise satisfies a Bernstein-Assumption 
\begin{equation}
\label{bernstein_first}
\E[\; \abs{Y - \fo(X)}^{m} \; | \; X \;] \leq \frac{1}{2}m! \; \sigma^2 M^{m-2} \quad \nux - {\rm a.s.} \;,
\end{equation}
for any integer $m \geq 2$ and for some $\sigma > 0$ and $M>0$. We combine all structural parameters in a vector $(\gamma, \theta)$, with 
$\gamma = (M, \sigma, R) \in \Gamma = \R_+^3$ and $\theta=(r, b) \in \Theta = (0, \infty) \times (1, \infty)$.
We are interested in adaptivity over $\Theta$. 


It has been shown in \cite{BlaMuc16},  that the corresponding minimax optimal rate is given by 
\begin{equation*}
  a_n=a_{n,\gamma, \theta} = R \lam_{n,\gamma, \theta}^{r+s} = R \paren{\frac{\sigma^2}{R^2n}}^{\frac{b(r+s)}{2br+b+1}}  \;.
\end{equation*}  
We shall now check validity of our Assumption \ref{assumption1}.
In the following, we assume that the data generating distribution belongs to the class $\M=\M_{(\gamma, \theta)}$, defined in \cite{BlaMuc16}. 
Recall that we let $\lam_0(n)$ be determined as the unique solution of  $\NN(\lam) = n\lam$. Then, 
we have uniformly for all data generating distributions from the class $\M$, with probability at least $1-\eta$, for any $\lam \in \Lambda_m$, 
\[  || (\bar B_{\x}+\lam)^s(f_{\z}^{\lam} - \fo) ||_{\h} \leq C_{s}\log^{2}(8|\Lambda_m|\etainv)\;\lam^s\paren{ \; \tilde \A(\lam) +  \tilde \cals(n, \lam) \; } \;,\]
for $n$ sufficiently large, with
\[   \tilde \A(\lam)=R\lam^r + \frac{Rr}{\sqrt n}1_{(1, \infty)}(r)\;, \quad \tilde \cals(n, \lam)= \sigma\sqrt{\frac{\NN(\lam)}{n\lam}}  + \frac{M}{n \lam }  \;,      \]
where $C_{s}$ does not depend on the parameters $(\gamma , \theta ) \in \Gamma \times \Theta $. 
Remember that the optimal choice for the regularization parameter $\lam_n$ is obtained by solving
\[ \A(\lam)= \sigma \sqrt{\frac{\lam^{- 1/b}}{n \lam} } \]
and belongs to the interval $[\lam_0(n) , 1]$. This can be seen 
by the following argument: If $n$ is sufficiently large 
\[  1= \sqrt{ \frac{\NN(\lam_0(n))}{n\lam_0(n)} }   \geq \sqrt{C_{\beta, b}}R\lam_n^r = \sigma \sqrt{\frac{C_{\beta, b}\lam_n^{-\frac{1}{b}}}{n\lam_n}} \geq \sqrt{ \frac{\NN(\lam_n)}{n\lam_n} } \;,\]  
which is equivalent to $\cals(n, \lam_0(n)) \geq \cals(n, \lam_n)$. Since $\lam \mapsto \cals(n, \lam)$ is strictly decreasing we conclude $\lam_n \geq \lam_0(n)$. Here we use the bound 
$\NN(\lam) \leq C_{\beta, b} \lam^{-\frac{1}{b}}$.  

Recall that we also have corresponding lower bound $\NN(\lam) \geq C_{\alpha, b} \lam^{-\frac{1}{b}}$, since $\nu \in {\priorgr}(b, \alpha)$, 
granting Assumption \ref{def:eff_dim_low}. 

We adaptively choose the regularization parameter $\hat \lam_{1/2}(\z)$ according to Definition \ref{def:lepest} by $L^2(\nu)-$ balancing 
(i.e. by choosing $s=\frac{1}{2}$) and independently from the parameters $b>1$, $r>0$.
Corollary \ref{theo:oneforall2} gives for any $s \in [0, \frac{1}{2}]$, if $n$ is sufficiently large,  with probability at least $1-\eta$ (uniformly over 
$\M$)
\begin{equation}
\label{eq:appl1}
 \norm{\bar B^s( f_{\z}^{\hat \lam_{1/2}(\z)}  - \fo )   }_{\h} 
\leq \; C'_{s,q} C_s(\eta)\;\paren{ \; a_{n} +  \lam_n^s d(n,\lam_n) \;} \;,
 \end{equation}        
where
$$ C_s( \eta)= \log^{2(s+1)}(\log(n))\log^{2(s+1)}(16\etainv),$$
provided that $\eta \geq \eta_n= 4C_q\log(n)\exp\paren{-C n^{\frac{1}{2(b+1)}}}$, for some $C>0$, depending on $\alpha, \beta$ and $b$. 
Recall that $\eta_n \to 0$ as $n \to \infty$. 

In \eqref{eq:appl1} we have used that
\begin{align*}
\min_{\lam \in [\lam_0(n), 1]} \{\;\lam^s( \tilde \A(\lam) + \tilde \cals(n, \lam)) \; \} &\leq \lam^s_n( \tilde \A(\lam_n) + \tilde \cals(n, \lam_n)) \\
&= \lam^s_n( \A(\lam_n) + \cals(n, \lam_n) + d(n, \lam_n)) \;.
\end{align*}
Then $\lam_n^{s}\A(\lam_n) \leq  a_n $ and $\lam_n^{s} \cals(n, \lam_n)\leq C_{b}a_n $ give 
equation \eqref{eq:appl1}.

It remains to show that for $n$ sufficiently large, the remainder $\lam_n^s d(n,\lam_n)$ is of lower order than the rate $a_n$. 
One finds that 
\[
\frac{M}{n\lam_n} = o\paren{C_{ b}\sqrt{\frac{1}{n}\lam_n^{-\frac{b+1}{b}}}} \,, \quad \frac{r}{\sqrt n} = o(\lambda_n^r)\;.
\]
Summarizing the above findings gives
\begin{cor}[from Corollary \ref{theo:oneforall2}]
Let $s \in [0, \frac{1}{2}]$. Choose the regularization parameter $\hat \lam_{1/2}(\z)= \hat \lam_{n, \gamma, \eta}(\z)$ according to Definition \ref{def:lepest} 
by choosing $s=\frac{1}{2}$. 
Then, if $n$ is sufficiently large, for any  
$$\eta \geq \eta_n= 4C_q\log(n)\exp\paren{-C n^{\frac{1}{2(b+1)}}}\;,$$
$(r, b) \in \R_+ \times (1, \infty)$, $(M, \sigma, R) \in  \R_+^3$
\begin{small}
\[   \sup_{\rho \in \M} \rho^{\otimes n}\paren{ \norm{\bar B^s( f_{\z}^{\hat \lam_{1/2}(\z)}  - \fo )   }_{\h} \leq C'_{s,q}\log^{2(s+1)}(16\etainv)\; b_{n} } \geq 1-\eta \;, \]
\end{small}
with $ b_n=\log^{2(s+1)}(\log(n))\; a_{n}$. 
\end{cor}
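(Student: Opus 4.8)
The plan is to read this corollary as a cosmetic repackaging of the bound \eqref{eq:appl1}, which was just derived from Corollary \ref{theo:oneforall2}, combined with the observation that the remainder terms are of strictly lower order than the minimax rate. First I would invoke \eqref{eq:appl1}: having checked (as done above for the regular case) that Assumptions \ref{assumption1}, \ref{assumption2} and \ref{def:eff_dim_low} hold with $\gamma_1=\gamma_2=1/b$, Corollary \ref{theo:oneforall2} gives, with probability at least $1-\eta$ uniformly over $\M$,
\[ \norm{\bar B^s( f_{\z}^{\hat \lam_{1/2}(\z)} - \fo )}_{\h} \;\leq\; C'_{s,q}\,C_s(\eta)\,\paren{a_n + \lam_n^s\,d(n,\lam_n)}, \qquad C_s(\eta)=\log^{2(s+1)}(\log n)\,\log^{2(s+1)}(16\etainv), \]
valid for all $n$ large and all $\eta\geq\eta_n$. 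The stated range $\eta\geq\eta_n=4C_q\log(n)\exp(-Cn^{1/(2(b+1))})$ is exactly the one produced by Corollary \ref{theo:oneforall2} once one substitutes $|\Lam_m|\leq C_q\log n$ (Remark \ref{rem:geom_grid}) and $\sqrt{\NN(\lam_0(n))}\gtrsim n^{1/(2(b+1))}$, the latter following from \eqref{est:lam0} together with $\NN(\lam_0(n))=n\lam_0(n)$ and $\gamma_1=\gamma_2=1/b$.

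Next I would absorb $\lam_n^s d(n,\lam_n)$ into $a_n$. Writing $d(n,\lam)=d_1(n,\lam)+d_2(n)$ with $d_1(n,\lam)=M/(n\lam)$ and $d_2(n)=\frac{Rr}{\sqrt n}\,1_{(1,\infty)}(r)$, I would use the explicit balancing parameter $\lam_n$ (the solution of $R\lam^r=\sigma\sqrt{\lam^{-1/b}/(n\lam)}$) and the polynomial behaviour $\NN(\lam)\asymp\lam^{-1/b}$ to record $\lam_n^s\A(\lam_n)=a_n$ and $\lam_n^s\cals(n,\lam_n)\lesssim a_n$. The two asymptotic relations displayed just before the statement, $\frac{M}{n\lam_n}=o\!\big(\sqrt{\lam_n^{-(b+1)/b}/n}\big)$ and $\frac{r}{\sqrt n}=o(\lam_n^r)$, then yield $\lam_n^s d_1(n,\lam_n)=o(\lam_n^s\cals(n,\lam_n))=o(a_n)$ and $\lam_n^s d_2(n)=o(\lam_n^s\A(\lam_n))=o(a_n)$, so that $a_n+\lam_n^s d(n,\lam_n)\leq 2a_n$ for $n$ large.

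Plugging this back gives, with probability at least $1-\eta$ uniformly over $\M$,
\[ \norm{\bar B^s( f_{\z}^{\hat \lam_{1/2}(\z)} - \fo )}_{\h} \;\leq\; 2C'_{s,q}\,C_s(\eta)\,a_n \;=\; 2C'_{s,q}\,\log^{2(s+1)}(16\etainv)\,b_n, \]
since $C_s(\eta)\,a_n=\log^{2(s+1)}(16\etainv)\cdot\big(\log^{2(s+1)}(\log n)\,a_n\big)=\log^{2(s+1)}(16\etainv)\,b_n$ with $b_n=\log^{2(s+1)}(\log n)\,a_n$; relabelling $2C'_{s,q}$ as $C'_{s,q}$ finishes the estimate. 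Finally, ``holds with probability at least $1-\eta$, uniformly over $\M$'' means $\rho^{\otimes n}(\text{good event})\geq 1-\eta$ for \emph{every} $\rho\in\M$, which immediately yields $\sup_{\rho\in\M}\rho^{\otimes n}(\cdots)\geq 1-\eta$ (indeed even the stronger version with $\inf_{\rho\in\M}$), as claimed.

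I do not expect a genuine obstacle: all the analytic content has already been established in Corollary \ref{theo:oneforall2} and in the preceding verification of Assumption \ref{assumption1} for the regular case. The only step deserving care is the order computation showing that $d_1(n,\lam_n)$ and $d_2(n)$ are subleading relative to $a_n$ at the balancing scale $\lam_n$ — this is what removes the remainder terms from the final rate and leaves precisely the $\log\log n$ overhead encoded in $b_n$.
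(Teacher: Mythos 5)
Your proposal is correct and matches the paper's approach: the paper likewise obtains the corollary by summarizing the already-derived bound \eqref{eq:appl1} from Corollary \ref{theo:oneforall2} (with $\gamma_1=\gamma_2=1/b$ and $|\Lam_m|\leq C_q\log n$), and then discards the remainder $\lam_n^s d(n,\lam_n)$ using the two displayed $o(\cdot)$ relations, which is exactly the absorption step you spell out. Your observation that the $\inf_{\rho\in\M}$ version (hence the stated $\sup$ version) holds because the upper bound is uniform over $\M$ is also the intended reading.
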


Now defining $\tau =  C'_{s,q} \log^{2(s+1)}(16\etainv)$ gives 
\[  \eta = 16 \exp\paren{-\paren{\frac{\tau}{C'_{s,q}}}^{1/2(s+1)}}\;,  \]
implying \eqref{eq:weak_adaptive}.

Observing that the results in \cite{BlaMuc16} imply validity of the lower bound \eqref{lowern},  this means:
\begin{cor}
In the sense of Definition \ref{def:weak_adaptive} the sequence of estimators $(f_{\z}^{\hat \lam_{1/2}(\z)})_{n \in \N}=(f_{\z}^{\hat \lam_{n, \gamma, \eta}(\z)})_{n \in \N}$ is adaptive over 
$\Theta$ (up to log-term) 
and the model family $(\M_{(\gamma,\theta)})_{(\gamma,\theta) \in \Gamma \times \Theta}$
with respect to the family of rates $(a_{n,(\gamma,\theta)})_{(n, \gamma)\in \N \times \Gamma}$, 
for all interpolation norms of parameter $s \in [0,\frac{1}{2}]$.
\end{cor}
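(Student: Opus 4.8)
The plan is to verify the two defining conditions of adaptivity in Definition~\ref{def:weak_adaptive} for the single $L^2(\nu)$-balanced parameter $\hat\lam_{1/2}(\z)$, simultaneously for every interpolation index $s\in[0,\frac12]$: the upper bound \eqref{eq:weak_adaptive} and the minimax lower bound \eqref{lowern}. The upper bound will be a direct reformulation of the Corollary stated immediately above (itself drawn from Corollary~\ref{theo:oneforall2} and the discussion around \eqref{eq:appl1}), obtained by inverting the explicit relation between the confidence level $\eta$ and the scaling factor $\tau$; the lower bound will be imported verbatim from \cite{BlaMuc16}. The only new bookkeeping is to keep track of the extra factor $\log^{2(s+1)}(\log n)$ hidden in $b_n$, which is precisely the ``log-term'' by which exact optimality is relaxed.

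For the upper bound I would start from the displayed estimate of the preceding Corollary: for all sufficiently large $n$ and every $\eta\ge\eta_n$ one has, uniformly over $\M$, with $\rho^{\otimes n}$-probability at least $1-\eta$,
\[
  \norm{\bar B^s\paren{f_{\z}^{\hat\lam_{1/2}(\z)}-\fo}}_{\h}\;\le\;C'_{s,q}\,\log^{2(s+1)}\!\paren{16\etainv}\,b_n,\qquad b_n=\log^{2(s+1)}(\log n)\,a_{n,(\gamma,\theta)},
\]
where $\eta_n\to0$. Fixing $\tau>0$, I would set $\eta(\tau):=16\exp\paren{-\paren{\tau/C'_{s,q}}^{1/(2(s+1))}}$, the unique value for which $C'_{s,q}\log^{2(s+1)}(16\,\eta(\tau)^{-1})=\tau$, noting that $\eta(\tau)\in(0,1]$ for $\tau$ in the relevant (large) range. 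Since $\eta_n\to0$, there is $N(\tau)$ with $\eta_n\le\eta(\tau)$ for all $n\ge N(\tau)$, so for such $n$ the display applies with $\eta=\eta(\tau)$ and yields
\[
  \sup_{\rho\in\M}\rho^{\otimes n}\!\paren{\norm{\bar B^s\paren{f_{\z}^{\hat\lam_{1/2}(\z)}-\fo}}_{\h}\ge\tau\,b_n}\;\le\;\eta(\tau).
\]
Passing to $\limsup_{n\to\infty}$ and then $\lim_{\tau\to\infty}$, and using $\eta(\tau)\to0$, gives \eqref{eq:weak_adaptive} with the rate $a_{n,(\gamma,\theta)}$ replaced by $b_n$, uniformly in $(\gamma,\theta)\in\Gamma\times\Theta$; this is exactly the ``up to log-term'' upper bound.

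For the lower bound I would invoke the remark following Definition~\ref{def:weak_adaptive}: \eqref{lowern} is required only over $\M=\M^{<}_{(\gamma,\theta)}\cap\M^{>}_{(\gamma,\theta)}$, so it suffices to prove it over $\M^{>}_{(\gamma,\theta)}$, where it is precisely the minimax lower bound of \cite{BlaMuc16}. That result yields a constant $c_0>0$ with
\[
  \liminf_{n\to\infty}\,\inf_{\hat f}\,\sup_{\rho\in\M^{>}_{(\gamma,\theta)}}\rho^{\otimes n}\!\paren{\norm{\bar B^s\paren{\hat f-\fo}}_{\h}\ge c_0\,a_{n,(\gamma,\theta)}}>0,
\]
hence $\lim_{\tau\to0}\liminf_{n\to\infty}\inf_{\hat f}\sup_{\rho}\rho^{\otimes n}(\norm{\bar B^s(\hat f-\fo)}_{\h}\ge\tau\,a_{n,(\gamma,\theta)})>0$, i.e.\ \eqref{lowern} with rate $a_{n,(\gamma,\theta)}$. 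Since $b_n$ and $a_{n,(\gamma,\theta)}$ differ only by the slowly varying factor $\log^{2(s+1)}(\log n)$, combining the two parts delivers adaptivity over $\Theta$ and over $(\M_{(\gamma,\theta)})_{(\gamma,\theta)\in\Gamma\times\Theta}$ with respect to $(a_{n,(\gamma,\theta)})$, in the sense of Definition~\ref{def:weak_adaptive} up to this log-term, simultaneously for all $s\in[0,\frac12]$.

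I do not expect a serious obstacle: the substantive work --- the oracle inequality of Lemma~\ref{cor:oracle}, the transfer from $L^2(\nu)$-balancing to all stronger interpolation norms in Theorem~\ref{theo:oneforall}, and the two-sided control of the empirical effective dimension via Corollary~\ref{cor:rel_bound} --- is already in place. The points requiring care are purely in the order of quantifiers and the accounting: $\tau$ must be fixed first and only afterwards $n$ taken large (legitimate precisely because $\eta_n\to0$, and one should also record $\eta(\tau)\in(0,1]$ for the relevant $\tau$), and one must be explicit that what is obtained is optimality up to the $\log\log n$ inflation rather than exact minimax optimality, since that same $\log\log n$ gap is not recovered on the lower-bound side.
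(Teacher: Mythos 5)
Your proposal is correct and follows essentially the same route as the paper: invert the relation $\tau = C'_{s,q}\log^{2(s+1)}(16\eta^{-1})$ coming from the preceding probabilistic bound (Corollary from Corollary~\ref{theo:oneforall2}) to obtain \eqref{eq:weak_adaptive} with the inflated rate $b_n = \log^{2(s+1)}(\log n)\,a_{n,(\gamma,\theta)}$, and import the lower bound \eqref{lowern} from \cite{BlaMuc16}. The only difference is that you spell out the quantifier bookkeeping ($\tau$ fixed first, then $n$ large using $\eta_n\to0$, and $\eta(\tau)\in(0,1]$) which the paper leaves implicit; this is harmless and arguably an improvement in precision.
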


{\bf (2) General Source Condition, polynomial decay of eigenvalues}

Our approach also applies to the case where the smoothness is measured in terms of a {\it general source condition}, generated 
by some index function, that is,
\[ \fo \in \Omega_{\nu}(\A):= \{\; f \in \h: \; f = \A(\bar B_{\nu})h, \; ||h||_{\h}\leq 1  \;\}    \;,\]
where $\A: (0,1] \longrightarrow \R_+$ is a continuous non-decreasing function, satisfying $\lim_{t \to 0}\A(t) = 0$. 
We keep the noise condition \eqref{bernstein_first} and we choose the parameter $\gamma=(M,\sigma) \in \Gamma=\R_+^2,$ 
$\theta=(\A,b) \in \Theta={\cal F} \times (1,\infty)$, where 
${\cal F}$ denotes either  the class of {\it operator monotone} functions or the class of functions decomposing into an operator monotone part and an 
{\it operator Lipschitz} part. For more details, we refer the interested reader to  \cite{per}, \cite{mathe16}. 

We introduce the class of data-generating distributions
\begin{align*}
\M^<_{(\gamma,\theta)} &= \{ \rho(dx,dy)=\rho(dy|x) \nu(dx); \rho(\cdot|\cdot) \in \K(\Omega_{\nu}(\A)), \nu \in \priorle(b, \beta) \} \;,\\
\M^>_{(\gamma,\theta)} &= \{ \rho(dx,dy)=\rho(dy|x) \nu(dx); \rho(\cdot|\cdot) \in \K(\Omega_{\nu}(\A)), \nu \in \priorgr(b, \alpha) \} \;,
\end{align*}
where $\priorle(b, \beta)$ and $\priorgr(b, \alpha)$ are exactly defined as in \cite{BlaMuc16}. 
Then $\M=\M_{(\gamma,\theta)}$ is defined as the intersection.

From \cite{rastogi17} and \cite{mathe16} (in particular Proposition 4.3) one then gets that Assumption \ref{assumption1} is satisfied: 
Uniformly for all data generating distributions from the class $\M$, with probability at least $1-\eta$,
\[  || (\bar B_{\x}+\lam)^s(f_{\z}^{\lam} - \fo) ||_{\h} \leq C_{s}\log^{2}(8|\Lambda_m|\etainv)\;\lam^s\paren{ \; \tilde \A(\lam) +  \tilde \cals(n, \lam)\; } \;,\]
for $n$ sufficiently large, with
\[ \tilde \A(\lam)= \A(\lam)+ \frac{C}{\sqrt n} \;, \quad \tilde \cals(n, \lam)= \sigma\sqrt{\frac{\NN(\lam)}{n\lam}} + \frac{M}{n \lam }        \]
and
\[  d(n,\lam_n) = \frac{C}{\sqrt n} +  \frac{M}{n \lam } \;. \]

Assuming $\NN(\lam) \leq C_{\beta, b} \lam^{-1/b} \;$, which as above is implied by polynomial asymptotics of the eigenvalues of the covariance operator $\bar B$ specified by the exponent $b$, 
 the sequence of estimators $(f^{\lam_{n,\A,b}}_{z})_n$ (defined via some spectral regularization having prescribed qualification) using the parameter choice 
\begin{equation}  
\label{def:psi}
      \lam_n:=\lam_{n,\A,b}:= \psi_{\A,b}^{-1}\paren{\frac{1}{\sqrt n}}\;, \quad  \quad \psi_{\A, b}(t):= \A(t)t^{\frac{1}{2}\left(\frac{1}{b} +1\right)}\;,  
\end{equation}       
is then minimax optimal, in both $\h-$norm ($s=0$) and $L^2(\nu)-$norm ($s=1/2$) (see \cite{rastogi17}, \cite{mathe16}), with rate 
\begin{equation}
\label{def:rate_adaptive2}
 a_n:= a_{n,\A,b}:= \lam_{n, \A,b}^{s}\;\A\left( \lam_{n, \A, b}\right) \;.
\end{equation}   
This holds pointwisely for any $(\A, b) \in \Theta={\cal F} \times (1,\infty)$.  
The crucial observation is that 
equation \eqref{def:rate_adaptive2}  is precisely the result obtained by balancing the leading order terms for sample and approximation error.

Arguments similar to those in the previous example show that $\lam_n \in [\lam_0(n), 1]$. Recall that $\NN(\lam) \leq C_{\beta,b} \lam^{-\frac{1}{b}}$ and that $\A(\lam) \to 0$ as $\lam \to 0$. Thus, if $n$ is big enough
\[  1= \sqrt{ \frac{\NN(\lam_0(n))}{n\lam_0(n)} }   \geq \sqrt{C_{\beta,b}} \;\A(\lam_n) = \sqrt{C_{\beta,b}} \psi(\lam_n)\lam_n^{-\frac{1}{2}(\frac{1}{b} +1)} \geq \sqrt{ \frac{\NN(\lam_n)}{n\lam_n} } \;,\]  
which is equivalent to $\cals(n, \lam_0(n)) \geq \cals(n, \lam_n)$. Since $\lam \mapsto \cals(n, \lam)$ is strictly decreasing, we conclude that $\lam_n \geq \lam_0(n)$.

Recall that we also have corresponding lower bound $\NN(\lam) \geq C_{\alpha, b} \lam^{-\frac{1}{b}}$, since $\nu \in {\priorgr}(b, \alpha)$, 
granting Assumption \ref{def:eff_dim_low}. 

We again adaptively choose the regularization parameter $\hat \lam_{1/2}(\z)$ according to Definition \ref{def:lepest} by $L^2(\nu)-$ balancing 
(i.e. by choosing $s=\frac{1}{2}$) and independently from the parameters $b>1$, $r>0$.
Corollary \ref{theo:oneforall2} gives for any $s \in [0, \frac{1}{2}]$, if $n$ is sufficiently large,  with probability at least $1-\eta$ (uniformly over 
$\M$)
\begin{equation}
\label{eq:appl2}
 \norm{\bar B^s( f_{\z}^{\hat \lam_{1/2}(\z)}  - \fo )   }_{\h} 
\leq \; C'_{s,q} C_s(\eta)\;\paren{ \; a_{n} +  \lam_n^s d(n,\lam_n) \;} \;,
 \end{equation}        
where
$$ C_s( \eta)= \log^{2(s+1)}(\log(n))\log^{2(s+1)}(16\etainv)\;,$$
provided that 
$$\eta \geq \eta_n= 4C_q\log(n)\exp\paren{-C n^{\frac{1}{2(b+1)}}} \;, $$
for some $C>0$, depending on $\alpha, \beta$ and $b$.


One readily verifies also in this case that the remainder term $ d(n,\lam_n)$ is indeed subleading: 
$$ n^{-1/2} =\psi_{\A,b}(\lam_n)=\lam_n^{\frac{1}{2}(1 + \frac{1}{b})} \A(\lam_n) =
o\left(\A(\lam_n) \right),$$
and moreover
\[
\frac{M}{n\lam_n} = o\paren{C_{ b}\sqrt{\frac{1}{n}\lam_n^{-\frac{b+1}{b}}}} \;.
\]

From Theorem 3.12 in \cite{rastogi17} one then obtains the lower bound \eqref{lowern}. 

Thus, we have proved:

\begin{cor}[from Corollary \ref{theo:oneforall2}]
Let $s \in [0, \frac{1}{2}]$.
Choose the regularization parameter $\hat \lam_{1/2}(\z)= \lam_{n, \gamma, \eta}(\z) $ according to Definition \ref{def:lepest} by $L^2(\nu)-$ balancing. 
Then, if $n$ is sufficiently large, for any 
$$\eta \geq \eta_n= 4C_q\log(n)\exp\paren{-C n^{\frac{1}{2(b+1)}}} \;, $$ 
$\A \in {\cal F}$, $b>1$ and $(M, \sigma, R) \in  \R_+^3$ one has 
\begin{small}
\[   \sup_{\rho \in \M_{(\gamma,\theta)}} \rho^{\otimes n}\paren{ \norm{\bar B^s( f_{\z}^{\hat \lam_{1/2}(\z)}  - \fo )   }_{\h} \leq C'_{s,q}\log^{2(s+1)}(16\etainv)\; b_{n} } \geq 1-\eta \;, \]
\end{small}
with 
\[  b_n=\log^{2(s+1)}(\log(n))\; a_{n} \;.\]
This means that in the sense of Definition \ref{def:weak_adaptive} the sequence of estimators $(f_{\z}^{\hat \lam_{1/2}(\z)})_{n \in \N}=(f_{\z}^{\hat \lam_{n, \gamma, \eta}(\z)})_{n \in \N}$ 
is adaptive over $\Theta$ (up to log-term)  
and the model family $(\M_{(\gamma,\theta)})_{(\gamma,\theta) \in \Gamma \times \Theta}$ 
with respect to the family of rates $(a_{n,\gamma,\theta})_{(n, \gamma)\in \N \times \Gamma}$ from \eqref{def:rate_adaptive2}, 
for all interpolation norms of parameter $s \in [0,\frac{1}{2}]$.
\end{cor}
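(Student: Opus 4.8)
The plan is to assemble the ingredients already collected above into the format of Definition~\ref{def:weak_adaptive}. First I would verify the three standing hypotheses of Corollary~\ref{theo:oneforall2}. Assumption~\ref{assumption1}, i.e.\ the uniform error decomposition \eqref{error_bound} with $\tilde\A(\lam)=\A(\lam)+C/\sqrt n$ and $\tilde\cals(n,\lam)=\sigma\sqrt{\NN(\lam)/(n\lam)}+M/(n\lam)$, follows from \cite{rastogi17} together with Proposition~4.3 of \cite{mathe16}, using the Bernstein condition \eqref{bernstein_first} and the general source condition $\fo\in\Omega_\nu(\A)$; here one must union-bound the underlying concentration inequalities over the grid to produce the factor $\log^2(8|\Lam_m|\etainv)$. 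Assumption~\ref{def:eff_dim_low} is granted by the two-sided polynomial control $C_{\alpha,b}\lam^{-1/b}\leq\NN(\lam)\leq C_{\beta,b}\lam^{-1/b}$, valid since $\nu\in\priorle(b,\beta)\cap\priorgr(b,\alpha)$, which in turn yields the asymptotics \eqref{est:lam0} for $\lam_0(n)$ (with $\gamma_1=\gamma_2=1/b$). Assumption~\ref{assumption2} holds for the geometric grid $\lam_j=\lam_0q^j$ with $\lam_m=1$ by Remark~\ref{rem:geom_grid}, so $|\Lam_m|\leq C_q\log n$.

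Second, I would control the oracle quantity $\min_{\lam\in[\lam_0,1]}\{\lam^s(\tilde\A(\lam)+\tilde\cals(n,\lam))\}$ by evaluating at the a~priori optimal parameter $\lam_n=\psi_{\A,b}^{-1}(n^{-1/2})$ from \eqref{def:psi}. The monotonicity argument sketched above (using $\NN(\lam)\leq C_{\beta,b}\lam^{-1/b}$ and $\A(\lam)\to0$) shows $\lam_n\in[\lam_0(n),1]$, so $\lam_n$ is admissible in the minimum. Splitting $\tilde\A(\lam_n)+\tilde\cals(n,\lam_n)=\A(\lam_n)+\cals(n,\lam_n)+d(n,\lam_n)$, the leading terms obey $\lam_n^s\A(\lam_n)=a_n$ by \eqref{def:rate_adaptive2} and $\lam_n^s\cals(n,\lam_n)\leq C_b\,a_n$ since the defining relation $n^{-1/2}=\psi_{\A,b}(\lam_n)=\lam_n^{(1+1/b)/2}\A(\lam_n)$ together with the upper bound on $\NN$ gives $\cals(n,\lam_n)\leq\sigma\sqrt{C_{\beta,b}}\,\A(\lam_n)$, while the remainder is subleading: $d_2(n)=C/\sqrt n=o(\A(\lam_n))$ because $n^{-1/2}/\A(\lam_n)=\lam_n^{(1+1/b)/2}\to0$ as $\lam_n\to0$, and $d_1(n,\lam_n)=M/(n\lam_n)=o(\cals(n,\lam_n))$ by the lower bound $\cals(n,\lam_n)\geq\sigma\sqrt{C_{\alpha,b}}\,\lam_n^{-(1+1/b)/2}n^{-1/2}$ combined with \eqref{est:lam0}. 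Plugging into Corollary~\ref{theo:oneforall2} produces, uniformly over $\M_{(\gamma,\theta)}$ and with probability at least $1-\eta$ for $\eta\geq\eta_n=4C_q\log(n)\exp(-Cn^{1/(2(b+1))})$, the bound $\norm{\bar B^s(f_\z^{\hat\lam_{1/2}(\z)}-\fo)}_\h\leq C'_{s,q}\log^{2(s+1)}(16\etainv)\,b_n$ with $b_n=\log^{2(s+1)}(\log n)\,a_n$; this is exactly the displayed high-probability estimate.

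Third, I would translate this into the adaptivity statement \eqref{eq:weak_adaptive}--\eqref{lowern}. Setting $\tau=C'_{s,q}\log^{2(s+1)}(16\etainv)$ and inverting gives $\eta=16\exp\big(-(\tau/C'_{s,q})^{1/(2(s+1))}\big)\to0$ as $\tau\to\infty$; hence for each fixed $\tau$ the requirement $\eta\geq\eta_n$ is met once $n$ is large (recall $\eta_n\to0$), and the high-probability bound forces $\limsup_{n\to\infty}\sup_{\rho\in\M}\rho^{\otimes n}\big(\norm{\bar B^s(f_\z^{\hat\lam_{1/2}(\z)}-\fo)}_\h\geq\tau b_n\big)=0$; letting $\tau\to\infty$ yields \eqref{eq:weak_adaptive} with rate $b_n$, i.e.\ $a_n$ up to the $\log\log n$ factor. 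The matching lower bound \eqref{lowern} against $a_n$ is Theorem~3.12 of \cite{rastogi17}. Since $\hat\lam_{1/2}(\z)$ depends only on the data $\z$, on $\gamma=(M,\sigma)$ and on the confidence level $\eta$, but not on $\theta=(\A,b)$, Definition~\ref{def:weak_adaptive} then delivers the claimed adaptivity over $\Theta=\mathcal F\times(1,\infty)$, simultaneously for all $s\in[0,\tfrac12]$.

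The step I expect to be the main obstacle is the verification of Assumption~\ref{assumption1} with the precise logarithmic constant $C_s(m,\eta)=C_s\log^2(8|\Lam_m|\etainv)$ uniformly over the model class: one has to trace the concentration bounds of \cite{rastogi17} and \cite{mathe16}, turn pointwise-in-$\lam$ guarantees into a uniform-over-$\Lam_m$ statement via a union bound, and keep every constant free of $(\A,b)$. The remaining items are routine, the only mild subtlety being that operator monotonicity of $\A$ prevents $\A(\lam_n)$ from decaying faster than any power of $\lam_n$, which is implicitly used when asserting that $d(n,\lam_n)$ is negligible compared to $a_n$.
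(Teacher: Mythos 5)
Your proof is correct and follows essentially the same route as the paper: verify Assumption \ref{assumption1} from \cite{rastogi17} and Proposition~4.3 of \cite{mathe16} with a union bound over the grid, check Assumption \ref{def:eff_dim_low} with $\gamma_1=\gamma_2=1/b$, show $\lam_n=\psi_{\A,b}^{-1}(n^{-1/2})\in[\lam_0(n),1]$, evaluate the oracle minimum at $\lam_n$ splitting off the subleading $d(n,\lam_n)$, invoke Corollary~\ref{theo:oneforall2}, invert $\tau=C'_{s,q}\log^{2(s+1)}(16\etainv)$, and cite Theorem~3.12 of \cite{rastogi17} for \eqref{lowern}. One small remark: the closing aside that operator monotonicity of $\A$ is ``implicitly used'' to make $d(n,\lam_n)$ negligible is unnecessary --- the paper's own bound $n^{-1/2}=\psi_{\A,b}(\lam_n)=\lam_n^{\frac12(1+\frac1b)}\A(\lam_n)=o(\A(\lam_n))$ is pure algebra from the defining equation for $\lam_n$, and $M/(n\lam_n)=o(\cals(n,\lam_n))$ follows from the lower bound on $\NN$ plus this same identity, with no structural property of $\A$ beyond $\A(\lam)\to0$ required.
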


{\bf (3) Beyond the regular case}

Recall the class of models  considered in \cite{BlaMuc16beyond}:   Let $\gamma=(M, \sigma, R) \in \Gamma = \R^3_+,$
$\Theta=\{ (r, \nu^*, \nu_*) \in \R_+ \times (1,\infty)^2; \nu^* \leq \nu_{*} \}$  and set
\begin{equation}
\label{measureclass:beyondn}
 \M^<_{(\gamma,\theta)} \; := \; \{ \; \rho(dx,dy)=\rho(dy|x)\nux(dx)\; : \; 
\rho(\cdot|\cdot)\in {\cal K}(\Omega_{\nux}(r,R)), \; \nux \in  \priorle(\nu^{*}) \;\} \;,
\end{equation}
\begin{equation}
\M^>_{(\gamma,\theta)} \; := \; \{ \; \rho(dx,dy)=\rho(dy|x)\nux(dx)\; : \; 
\rho(\cdot|\cdot)\in {\cal K}(\Omega_{\nux}(r,R)), \; \nux \in  \priorgr(\nu_{*}) \;\} \;,
\end{equation}
and denote by $\M=\M_{(\gamma,\theta)}$ the intersection.

We shall verify validity of our Assumption \ref{assumption1}.
In the following, we assume that the data generating distribution belongs to the class $\M$. 
Then, 
we have uniformly for all data generating distributions from the class $\M$, with probability at least $1-\eta$, for any $\lam \in \Lambda_m$, 
\[  || (\bar B_{\x}+\lam)^s(f_{\z}^{\lam} - \fo) ||_{\h} \leq C_{s, \nu^*}\log^{2}(8|\Lambda_m|\etainv)\;\lam^s\paren{ \; \tilde \A(\lam) +  \tilde \cals(n, \lam) \; } \;,\]
with
\[  \tilde  \A(\lam)=R\lam^r+ \frac{Rr}{\sqrt n}1_{(1, \infty)}(r) \;, \quad \tilde \cals(n, \lam)= \sigma \sqrt{\frac{\lam_n^{2r}}{n\G(\lam)}} + \frac{M}{n\lam} \;.
    \]
As usual, we shall investigate adaptivity on the parameter space $\Theta$.

We upper bound the effective dimension by applying results from \cite{BlaMuc16beyond}, using the counting function $\F(\lam)$ defined in equation $(2.1)$. 
We obtain 
$$\NN (\lam)  \leq C_{\nu^{*}} \F(\lam)\;,$$
for any $\lam$ sufficiently small. 
We now follow the discussion in Example (1) above, with $\A(\lam)$, $\cals(n,\lam)$, $d_1(n)$, $d_2(n,\lam)$ remaining unchanged.
We shall only use the new  
upper bound on $\cals(n,\lam)$ defined by
$$ \cals_{+}(n, \lam) = \sigma  \sqrt{ \frac{\F(\lam)}{n \lam}} = \sigma \sqrt{ \frac{\lam^{2r}}{n \G(\lam)}} \;. $$ 
This gives, equating $R\lam^r = \cals_+(n, \lam)$, for $n$ sufficiently large  
$$\lam_n=\lam_{n,\theta} =    \G^{-1}\left(\frac{\sigma^2}{R^2n} \right)\; .  $$
Also in this case, $\lam_n$ can shown to fall in the interval $[\lam_0(n), 1]$. Indeed, if $n$ is sufficiently large 
\[  1= \sqrt{ \frac{\NN(\lam_0(n))}{n\lam_0(n)} }   \geq \sqrt{C_{\nu^*}}R\lam_n^r = \sqrt{C_{\nu^*}} \sigma \sqrt{  \frac{\F(\lam_n)}{n\lam_n} } \geq \sigma \sqrt{ \frac{\NN(\lam_n)}{n\lam_n} } \;,\]  
which is equivalent to $\cals(n, \lam_0(n)) \geq \cals(n, \lam_n)$. Since $\lam \mapsto \cals(n, \lam)$ is strictly decreasing, we have $\lam_0(n) \leq \lam_n$, provided $n$ is big enough. 

More refined bounds  for the effective dimension follow from \cite{BlaMuc16beyond}. 
We have
\[  C_{\nu_*} \lam^{-\frac{1}{\nu_*}}  \leq   \NN(\lam)  \leq  C_{\nu^*} \lam^{-\frac{1}{\nu^*}}  \]
and Assumption \ref{def:eff_dim_low} is satisfied. 

We adaptively choose the regularization parameter $\hat \lam_{1/2}(\z)$ according to Definition \ref{def:lepest} by $L^2(\nu)-$ balancing,   
i.e. by choosing $s=\frac{1}{2}$. 
Corollary \ref{theo:oneforall2} gives for any $s \in [0, \frac{1}{2}]$, if $n$ is sufficiently large,  with probability at least $1-\eta$ (uniformly over 
$\M$)
\begin{equation}
\label{eq:appl3}
 \norm{\bar B^s( f_{\z}^{\hat \lam_{1/2}(\z)}  - \fo )   }_{\h} 
\leq \; C'_{s,q} C_s(\eta)\;\paren{ \; a_{n} +  \lam_n^s d(n,\lam_n) \;} \;,
 \end{equation}        
where
$$ C_s( \eta)= \log^{2(s+1)}(\log(n))\log^{2(s+1)}(16\etainv),$$
provided that 
$$\eta \geq \eta_n=4C_q\log(n)\exp\paren{-C_{\nu_* , \nu^*} n^{  \frac{\nu^*}{2\nu_*(1+\nu^*)}  }} \;. $$ 
In \eqref{eq:appl3} we have used that $a_n=\lam_n^{r+s}$ and 
\begin{align*}
\min_{\lam \in [\lam_0(n), 1]} \{\;\lam^s( \tilde \A(\lam) + \tilde \cals(n, \lam)) \; \} &\leq \lam^s_n( \tilde \A(\lam_n) + \tilde \cals(n, \lam_n)) \\
&= \lam^s_n( \A(\lam_n) + \cals(n, \lam_n) + d(n, \lam_n)) \;.
\end{align*}

As above, one readily checks that that the subleading term $d(n, \lam_n)$ is really subleading:
\[  n^{-\frac{1}{2}} = o(\lam_n^r)\;,\quad    \frac{M}{n\lam_n} = o\paren{ \sqrt{\frac{\lam_n^{2r}}{n\G(\lam_n)}} }  \;.\]

Summarizing, we have proved

\begin{cor}[from Corollary \ref{theo:oneforall2}]
Let $s \in [0, \frac{1}{2}]$.
Choose the regularization parameter $\hat \lam_{1/2}(\z)= \lam_{n, \gamma, \eta}(\z) $ according to Definition \ref{def:lepest} by choosing $s=\frac{1}{2}$. 
Then, if $n$ is sufficiently large, for any  
$$\eta \geq \eta_n=4C_q\log(n)\exp\paren{-C_{\nu_* , \nu^*} n^{  \frac{\nu^*}{2\nu_*(1+\nu^*)}  }} \;. $$ 
for any $r>0$, $1<\nu^*\leq \nu_*$, $(M, \sigma, R) \in  \R_+^3$,  one has 
\begin{small}
\[   \sup_{\rho \in \M_{(\gamma,\theta)}} \rho^{\otimes n}\paren{ \norm{\bar B^s( f_{\z}^{\hat \lam_{1/2}(\z)}  - \fo )   }_{\h} \leq C'_{s,q}\log^{2(s+1)}(16\etainv)\; b_{n} } \geq 1-\eta \;, \]
\end{small}
with 
\[  b_n=\log^{2(s+1)}(\log(n))\; a_{n} \;.\]
Moreover, in the sense of Definition \ref{def:weak_adaptive} the sequence of estimators $(f_{\z}^{\hat \lam_{1/2}(\z)})_{n \in \N}=(f_{\z}^{\hat \lam_{n, \gamma, \eta}(\z)})_{n \in \N}$ 
is adaptive over $\Theta$ (up to log-term)  
and the model family $(\M_{(\gamma,\theta)})_{(\gamma,\theta) \in \Gamma \times \Theta}$
with respect to the family of rates $(a_{n,\gamma,\theta})_{(n, \gamma)\in \N \times \Gamma}$, 
for all interpolation norms of parameter $s \in [0,\frac{1}{2}]$.
\end{cor}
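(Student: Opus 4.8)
The plan is to assemble the estimates already worked out above for this model family into the two asserted statements — the high‑probability error bound and the adaptivity conclusion of Definition~\ref{def:weak_adaptive}. First I would record that, on the subclass $\M'=\M_{(\gamma,\theta)}$, the two‑sided bound $C_{\nu_*}\lam^{-1/\nu_*}\leq\NN(\lam)\leq C_{\nu^*}\lam^{-1/\nu^*}$ puts us in the situation of Assumption~\ref{def:eff_dim_low} with $\gamma_1=1/\nu_*$, $\gamma_2=1/\nu^*$, while Assumptions~\ref{assumption1} and~\ref{assumption2} have been verified above for the geometric grid $\lam_j=\lam_0q^j$, $\lam_m=1$. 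Hence Corollary~\ref{theo:oneforall2} applies and, after substituting these values into $\eta_n$ (so that the exponent becomes $n^{\nu^*/(2\nu_*(1+\nu^*))}$), gives for all $\eta\geq\eta_n$, uniformly over $\M'$, with probability at least $1-\eta$,
\[
   \snorm{f_{\z}^{\hat\lam_{1/2}(\z)}-\fo}_{\h}\;\leq\;q^{1-s}\hat D_{s,q}(n,\eta)\min_{\lam\in[\lam_0,1]}\{\,\lam^s(\tilde\A(\lam)+\tilde\cals(n,\lam))\,\}\;.
\]

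Next I would bound the oracle term by its value at the balancing parameter $\lam_n=\G^{-1}(\sigma^2/(R^2n))$, which was shown above to lie in $[\lam_0(n),1]$: writing $\tilde\A=\A+d_2$, $\tilde\cals=\cals+d_1$, this is at most $\lam_n^s(\A(\lam_n)+\cals(n,\lam_n)+d(n,\lam_n))$. The balancing identity $R\lam_n^r=\cals_+(n,\lam_n)$ together with $\NN\leq C_{\nu^*}\F$ gives $\lam_n^s\A(\lam_n)=R\lam_n^{r+s}$ and $\lam_n^s\cals(n,\lam_n)\leq\sqrt{C_{\nu^*}}\,R\lam_n^{r+s}$, both of the order of the rate $a_n$; the remainder $\lam_n^s d(n,\lam_n)$ is pushed into $o(a_n)$ via the two facts already noted, $n^{-1/2}=o(\lam_n^r)$ and $M/(n\lam_n)=o(\sqrt{\lam_n^{2r}/(n\G(\lam_n))})$, which follow from $\lam_n\to0$ polynomially and $\G(\lam_n)=\sigma^2/(R^2n)$ using the effective‑dimension bounds. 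Folding $q^{1-s}$, $C_{s,q}$ and the factor $\log^{2(s+1)}(\log n)$ into $b_n=\log^{2(s+1)}(\log n)\,a_n$ and $C'_{s,q}$ then yields the first assertion.

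For the adaptivity claim I would invert the confidence dependence: with $\tau:=C'_{s,q}\log^{2(s+1)}(16\etainv)$ one has $\eta=16\exp(-(\tau/C'_{s,q})^{1/(2(s+1))})$, a fixed number in $(0,1)$ for each fixed $\tau$; since $\eta_n\to0$ as $n\to\infty$, the constraint $\eta\geq\eta_n$ is met for all large $n$, so the first assertion gives $\limsup_{n\to\infty}\sup_{\rho\in\M_{(\gamma,\theta)}}\rho^{\otimes n}(\snorm{f_{\z}^{\hat\lam_{1/2}(\z)}-\fo}_{\h}\geq\tau b_n)\leq\eta$, and $\eta\to0$ as $\tau\to\infty$. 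This is precisely~\eqref{eq:weak_adaptive} with rate $b_n$ (i.e. $a_n$ up to the $\log\log n$ factor), holding simultaneously for every $s\in[0,\tfrac12]$ with the single estimator $\hat\lam_{1/2}(\z)$. The matching lower bound~\eqref{lowern} on $\M_{(\gamma,\theta)}$ is supplied by the minimax results of~\cite{BlaMuc16beyond}, and combining the two gives minimax optimal adaptivity over $\Theta$ up to the $\log\log n$ term.

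The only genuinely delicate point is the bookkeeping in the remainder estimate: checking $\lam_n^s d(n,\lam_n)=o(a_n)$ cleanly needs $\lam_n=\G^{-1}(\sigma^2/(R^2n))$ combined with the two‑sided control of $\NN$ (hence of $\F$ and $\G$) and with the lower bound $\lam_0(n)\gtrsim n^{-1/(1+\gamma_1)}$ from~\eqref{est:lam0}; one must also keep in mind that both this and the applicability of Corollary~\ref{theo:oneforall2} hold only for $n$ large and only after passing from $\M$ to the restricted class $\M'$.
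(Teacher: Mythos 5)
Your proposal follows essentially the same route as the paper: verify Assumptions \ref{assumption1}, \ref{assumption2} and \ref{def:eff_dim_low} (with $\gamma_1=1/\nu_*$, $\gamma_2=1/\nu^*$) for the model class of \cite{BlaMuc16beyond}, invoke Corollary \ref{theo:oneforall2}, evaluate the oracle term at $\lam_n=\G^{-1}(\sigma^2/(R^2n))$, check the remainder $\lam_n^s d(n,\lam_n)=o(a_n)$, and finally carry out the $\tau\leftrightarrow\eta$ inversion to obtain \eqref{eq:weak_adaptive} and cite \cite{BlaMuc16beyond} for the lower bound \eqref{lowern}. This is the argument the paper gives in Example (3), so the proposal is correct and matches.
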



\section{Discussion}
\label{sec:adapt_discussion}

\begin{enumerate}
\item
We have shown that it suffices to prove adaptivity only in $L^2(\nu)-$norm, which is the weakest of all our interpolating norms indexed by $s \in [0,1/2]$.
Similar results of this type (an estimate in a weak norm suffices to establish the estimate in a stronger norm) have been obtained e.g. 
in \cite{blahoffreiss16} and also in the recent paper of Lepskii, see \cite{Lepski16}, in a much more general context.

\item
We shall briefly discuss where and how the presentation of the balancing principle in our work improves the results in the existing literature
on the subject. The first paper on the balancing principle for kernel methods, \cite{vitoperros}, did not yet introduce {\em fast rates}, i.e. 
rates depending on the intrinsic dimensionality $b$. Within this framework  the results give - in the wording of the authors - {\em an optimal adaptive  
choice of the regularization parameter for the class of spectral regularization methods}. 
In the sense of our Definition \ref{def:weak_adaptive} the obtained estimators are  optimal adaptive on the parameter space $\Theta=\R_+$ 
with respect to minimax optimal  rates, which depend on $r$ but not on $b$ 
(or more general, not on the effective dimension $\NN(\lam)$). Technically, the authors of \cite{vitoperros}  define their optimal adaptive estimator 
as the minimum of 2 estimators, corresponding to 2 different norms, namely, setting 
\begin{small} 
\[  \J^+_{\z}(\Lam_m)\; = \;  \left\{  \lam_i \in \Lam_m\; : \; \norm{ \bar B^s_{\x}(f_{\z}^{\lam_i} - f_{\z}^{\lam_j})   }_{\h}  
         \;  \leq  \;   4 C_s (\eta)\; \lam_j^s \;  \cals(n, \lam_j) \;,  
        \; j = 0,..., i-1 \right\}   \]
\end{small}        
and defining  $\tilde \lam_s(\z) :=\max \; \J^+_{\z}(\Lam_m)$, their final estimator is given by
\begin{equation}
\label{def:old_adaptive}     
 \hat \lam_s(\z) := \min\{  \tilde \lam_s(\z),  \tilde \lam_0(\z) \} \; . 
\end{equation}
We encourage the reader to directly compare this definition with our definition in \eqref{tildelams}.  
Using the minimum of two estimators  in this way can be traced back 
to the use of an additive error estimate of the form 
\begin{equation}
\label{eq:old_approx}
   \left| \norm{ \bar B^sf}_{\h}  - \norm{\bar B^s_{\x}f}_{\h}\right| \; \leq \;  \sqrt 6 \log(4/\eta) \; n^{-\frac{s}{2}}  \norm{f}_{\h} \; ,
\end{equation}
holding for any $f \in \h$, $s \in [0,1/2]$ and $\eta \in (0,1)$, with probability at least $1-\eta$. Here we have slightly generalized the original estimate in \cite{vitoperros} to all values of $s \in [0,1/2]$. 
\\
\\
In the setting of \cite{vitoperros}, where only slow rates are considered, the variance $\cals(n,\lambda)$ is fully known. However, when considering fast rates (polynomial decay of eigenvalues), $\cals(n,\lambda)$ additionally depends on the unknown parameter $b>1$ and we have to replace the variance by its empirical approximation 
$\cals_{\x}(n,\lam)$. This can effectively achieved by our Corollary \ref{cor:rel_bound}, where we provide a two sided bound
\[   \frac{1}{5} \;  \cals_{\x}(n,\lam) \leq \cals(n,\lam) \leq 5 \;\cals_{\x}(n,\lam) \;.   \] 
Our bound (in a slightly weaker form) is also used in \cite{mathe16}\;  for bounding the variance by its empirical approximation.

In the preprint \cite{mathe16}\; the authors independently present the balancing principle for fast rates. More precisely, in the case of H\"older-type source conditions, it covers the range $\Theta_{hs}$ of  parameters $(r,b)$ of {\em high smoothness} where $b>1$
and $r \geq 1/2(1-1/b)$, which excludes the region of {\em low smoothness}. In addition, their results include more general types of source conditions.
This work started independently from our work on the balancing principle. 
A crucial technical difference is that \cite{mathe16} is still based on using 
\eqref{eq:old_approx} in an essential way. 
%
However,
the discussion proceeds essentially along the traditional  lines of \cite{vitoperros}, using the above mentioned additive error estimates. This makes the region of low smoothness, i.e. $r < 1/2(1-1/b)$, much less accessible 
and leads to an estimator obtained by balancing 
only  on the restricted parameter space $\Theta_{hs}$ (with respect to minimax optimal rates of convergence, which, however, are known on the larger parameter space $\Theta= \R_+ \times (0,\infty)$). As before, the final estimator is taken to be a minimum of 2 estimators corresponding to different norms.

Our modified 
definition of the estimator defined by balancing, avoiding the additive error estimate in equation \eqref{eq:old_approx},   
allows in the case of H\"older type source conditions to obtain an optimal adaptive estimator (up to $\log \log (n)$ term) on the 
parameter space $\Theta= \R_+ \times (1,\infty)$.
The final estimator is constructed somewhat more directly. It is not taken as a minimum of 2 separately constructed estimators. 
Furthermore, our discussion in Example (2) shows how the more general results of \cite{mathe16} on source conditions different from H\"older -type 
can naturally be recovered in our approach. 

\item
Finally we want to emphasize that this notion of  optimal adaptivity is {\em not} quite the original approach of Lepskii. The paper \cite{birge01}
contains an approach to the optimal adaptivity problem in the white noise framework which is closer to the original Lepskii approach and thus somewhat stronger than the weak approach described above, where the optimal adaptive estimator depends on the confidence level. It seems to be a wide open question how to adapt this original approach to the framework of kernel methods, i.e.  constructing  an estimator which is {\it optimal adaptive in Lepskii-sense }(independent of the confidence level $\eta$) and satisfies
\begin{equation}
\label{new}
 \sup_{\theta \in \Theta}\;  \sup_{\gamma \in \Gamma}\; \limsup_{n \to \infty}   \; 
  a^{-1}_{n,(\gamma,\theta)} \; R_n(\tilde f^{\lam_{n, \gamma} (\z)}, \gamma ) \;  < \;  \infty \,,  
  \end{equation}
with $R_n$ being the risk
\[  R_n(\tilde f^{\lam_{n, (\gamma,\theta)} (\z)}, \gamma)=  \sup_{\rho \in {\M_{(\gamma, \theta)}}} 
  \E_{\rho^{\otimes n}}\big[ \| \bar B^s(  \fo - \tilde f^{\lam_{n, \gamma} (\z)}   )\|_{\h}^p \big]^{\frac{1}{p}} \;, \quad p>0\;, s\in[0,1/2] \;,    \] 
and $a_{n, (\gamma, \theta)}$ being a minimax optimal rate.


Here
we always want to take $\Theta$ as the maximal parameter space on which one has minimax optimal rates. For slow rates, i.e. $\Theta=\{r > 0 \}$, the supremum over $\Theta$ in equation \eqref{new} exists.  For fast rates, the boundary of the open set $\{ b > 1 \}$ poses problems at $b=1$, since one looses the trace class condition on the covariance operator $\bar B$ (in which case minimax optimality as in this thesis is not even proved). We remark that, trying to only use the effective dimension and parametrizing it by
$$ \NN(\lambda) = O(\lambda^{-\frac{1}{b}}), $$
(thus redefining somewhat the meaning of $b$) possibly changes the nature of the boundary at $b=1$ and might give existence of the sup. We leave this question for future research. Furthermore we remark that a rigorous proof of non-existence of the sup for our (spectral) meaning of $b$ requires a suitable lower bound exploding as $b \downarrow 1$, similar to the example in \cite{lepskii90}.

A similar type of difficulty (related to the non-existence of the sup) has already been systematically investigated in \cite{lepskii90} and \cite{lepskii93}. In such a case Lepskii has introduced the weaker notion of {\em the adaptive minimax order of exactness} and he also discusses additional log terms. Such estimators (which are not optimally adaptive) are called simply {\em adaptive}. This is related to the situation which we encounter in this section. It is known that e.g. for point estimators, additional $\log$ terms are indispensable. Our situation, however, is different and one could expect to prove optimal adaptivity in future research.
\end{enumerate}



\appendix 

\section{Proofs of Section \ref{sec:empirical_effective_dimension}}

By $S^1$ we denote the Banach space of trace class operators with norm  $||A||_1=\tr{|A|}$. 
Furthermore,  $S^2$ denotes the Hilbert space of Hilbert-Schmidt operators with norm $||A||_2=\tr{A^*A}^{1/2}$. By $||A||$ we denote the operator norm. 

\begin{proof}[Proof of Proposition \ref{prop:rel_bound}]
We formulate in detail all preliminary results, although they are in principle well known.  There are always some subtleties related to inequalities in trace norm.
For a proof of the following results we  e.g. refer to \cite{ReedSimonI}, \cite{dimsj99}:
\begin{enumerate}
\item If $A \in S^1$ is non-negative, then $||A||_1 = \tr{A}$.
\item 
$|\tr{A}| \leq ||A||_1$\;.
\item If $A$ is bounded and if $B \in S^1$ is self-adjoint and positive, then $|\tr{AB}|\leq ||A|| \; |\tr{B}|$\;.
\item If $A, B \in S^2$, then $||AB||_1 \leq ||A||_2 \; ||B||_2$\;. 
\item If $A \in S^1$, then $||A||^2_2 = |\tr{A^*A}| = ||A^*A||_1 \leq ||A|| \; ||A||_1$ \;. 
\end{enumerate}
Consider the algebraic equality
\begin{align}
\label{eq:trace}
(\bar B+ \lam)^{-1} \bar B - (\bar B_{\x}+ \lam)^{-1} \bar B_{\x} &= (\bar B+ \lam)^{-1}(\bar B-\bar B_{\x}) +
 (\bar B+ \lam)^{-1}(\bar B-\bar B_{\x})(\bar B_{\x}+ \lam)^{-1} \bar B_{\x} \nonumber \\
&=: N_1(\lam, \x) + N_2(\lam, \x) \; .
\end{align}

Hence, 
\begin{align} 
\label{eq:Re}
|\; \NN(\lam) - \NN_{\x}(\lam) \;| 
 & \leq  |\tr{ N_1(\lam, \x)}|  + |\tr{ N_2(\lam, \x) }| \;.
\end{align}   
We want to estimate the first term in \eqref{eq:Re} by applying the  Bernstein inequality,  
Proposition \ref{concentration2}. 
Setting $\xi(x)=\tr{(\bar B+ \lam)^{-1}\bar B_{x}}$, $x \in \X$,  gives 
\[   \frac{1}{n} \sum_{j=1}^n \xi (x_j) = \tr{ (\bar B+ \lam)^{-1}\bar B_{\x} } \;, \qquad \E[\xi] = \tr{(\bar B+ \lam)^{-1} \bar B} \; ,    \]
and thus
\[  \left| \frac{1}{n} \sum_{j=1}^n \xi (x_j) - \E[\xi]\right | = |\tr{N_1(\lam ,\x)}| \;.  \]
Recall that $\bar B_x$ is positive and $\tr{\bar B_x}=\kappa^{-2}||S_x||^2_{HS} \leq 1$. Using 3. leads to
\[  |\xi(x)| \leq \norm{(\bar B+ \lam)^{-1}} \tr{\bar B_x} \leq \frac{1}{\lam} \quad a.s. \; .  \]
Note that 
\[ |\xi(x)|=|\tr{ (\bar B+ \lam)^{-1}\bar B_{\x} }| = |\tr{  \bar S_x (\bar B+ \lam)^{-1} \bar S^*_x }| = \tr{AA^*}   \]
with $A=\bar S_x (\bar B+ \lam)^{-1/2}$ and by 1.\;, since $AA^*$ is non-negative.  
Furthermore, using $\E [ \bar B_{x}] = \bar B$,
\[ \E[ |\xi|^2] \leq \frac{1}{\lam} \E[|\xi|]  \leq \frac{1}{\lam}  \E\left [\tr{ \bar S_x (\bar B+ \lam)^{-1} \bar S^*_x } \right] =  \frac{1}{\lam} \tr{ \E[ (\bar B+ \lam)^{-1} \bar B_x ]} = \frac{1}{\lam} \NN(\lam) \; .  \]
As a result, with probability at least $1-\frac{\eta}{2}$ 
\begin{equation}
\label{eq:N1}
|\tr{N_1(\lam, \x)}| \leq 2 \log(4\etainv) \left( \frac{2}{\lam  n} + \sqrt{\frac{\NN(\lam)}{n\lam}}   \right) \;.
\end{equation}
Writing $H =(\bar B_{\x}+ \lam)^{-1} \bar B_{\x}$, 
we estimate the second term in \eqref{eq:Re} using 2. and 4. and obtain
\[  |\tr{ N_2(\lam, \x) }| \leq   || N_2(\lam, \x) ||_1  \leq ||N_1(\lam, \x ) ||_2 \; || H ||_2 \;. \]
From Proposition 5.2. in \cite{BlaMuc16}, we have with probability at least $1-\frac{\eta}{2}$, 
\begin{equation*}
\norm{N_1 }_{2}=||(\bar B+ \lam)^{-1}(\bar B-\bar B_{\x})||_2 \; \leq 
2\log(4\etainv) \left( \frac{2}{n \lam} + \sqrt{\frac{\cal N(\lam)}{n\lam }}  \right)\; .
\end{equation*}
Finally, recalling that $||H || \leq 1$ we get from 5.
\[ ||H||_2 \; \leq \; ||H||^{1/2} \; ||H||^{1/2}_1 \; \leq \; 
 \sqrt{\NN_{\x}(\lam)}  \quad a.s. \;\; ,  \]
where we used that $\tr{H}=\tr{AA^*}$, with $A=S_x(\bar B_{\x}+ \lam)^{-1/2}$ and point 1.\;\;. 
Collecting all pieces gives the result. 
\end{proof}

\begin{proof}[Proof of Corollary \ref{cor:rel_bound}]
  Since $\log(4\etainv)\geq 1$, the inequality of Proposition~\ref{prop:rel_bound} implies that
  with probability at least $1-\eta$:
\[
|\; \NN(\lam) - \NN_{\x}(\lam) \;| \; \leq \; \frac{2\log(4\etainv)}{\sqrt{\lambda n}}\big(1+ \sqrt{\NN_{\x}(\lam)}\big)
\left( \frac{2\log(4\etainv)}{\sqrt{\lambda n}} + \sqrt{\NN(\lam)}\right )\;.
\]
Put $A:= \sqrt{\NN(\lam)}$\,, $B := \sqrt{\NN_{\x}(\lam)}$\,, and $\delta:= \frac{2\log(4\etainv)}{\sqrt{\lambda n}}$\,,
then one can rewrite the above as $\abs{A^2 - B^2} \leq \delta (1+B)(\delta + A)$\,.

Consider the case $A\geq B$. Then the above inequality is $A^2 -A \delta(1+B) - (B^2 + \delta^2(1+B)) \leq 0$.
Observe that the larger root $x^+$ of the quadratic equation $x^2 + bx + c$ (for $b,c \leq 0$) is bounded as
\[
x^+ = \frac{-b + \sqrt{b^2 -4c}}{2} \leq |b| + \sqrt{|c|}\,,
\]
while the smaller root $x^-$ is negative. Hence, for $x \geq 0$
\[ ( x-x^+)( x-x^-) \leq 0  \quad \Longrightarrow \quad  x \leq x^+ \leq |b| + \sqrt{|c|}\,. \]
Applying this to the above quadratic inequality (solved in $A\geq 0$), we obtain
\[
A \leq \delta(1+B) + \sqrt{B^2 + \delta^2(1+B)} \leq (1+\delta)B + \delta + \delta + \delta\sqrt{B}
\leq (1+2\delta)(B\vee 1) + 2\delta.
\]
Similarly, if $B \geq A$, the initial inequality becomes $B^2  - B \delta(\delta +A) - (A^2 + \delta(\delta+A)) \leq 0$\;
solving this in $B$ and bounding as above we get
\[
B \leq \delta(\delta+A) + \sqrt{A^2 + \delta(\delta+A)} \leq (1+ \delta)A + \delta^2 + \delta + \sqrt{\delta A}
\leq (1+2(\delta \vee \sqrt{\delta}))(A \vee 1) + 2(\delta^2 \vee \delta)\,.
\]
The rest of the proof follows by observing that $1 \leq B \vee 1$, $1\leq A \vee 1$ and 
\[ 2(\delta \vee \sqrt{\delta}) + 2(\delta^2 \vee \delta)\leq 4(\sqrt{\delta} \vee \delta^2) \;. \]
\end{proof}

\section{Proofs of Section \ref{sec:balancing}}

\begin{lem}
\label{lem:inclusion}
For any $s \in [0, \frac{1}{2}]$ and $\eta \in (0,1]$, 
with probability at least $1-\eta$ we have $\lamstar \leq \hat \lam_s(\z)$, provided $2\log(4|\Lam_m|\etainv)\leq \sqrt{n\lam_0}$ and $n\lam_0\geq 2$. 
\end{lem}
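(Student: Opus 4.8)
The plan is to show that $\lamstar$ itself satisfies the defining condition of $\J^+_{\z}(\Lam_m)$, so that $\lamstar \in \J^+_{\z}(\Lam_m)$, whence $\lamstar \leq \max \J^+_{\z}(\Lam_m) = \hat\lam_s(\z)$. Recall $\lamstar = \max \J(\Lam_m)$, so for every $\lam' \in \Lam_m$ with $\lam' \leq \lamstar$ we have $\lam' \in \J(\Lam_m)$, i.e. $\tilde\A(\lam') \leq \tilde\cals(n,\lam')$. Thus both $\lamstar$ and any such $\lam'$ lie in $\J(\Lam_m)$, and the chain of inequalities \eqref{monest} applies to the pair $\lam' \leq \lamstar$ (with $\lam = \lamstar$): on an event of probability at least $1-\eta/2$ (coming from Assumption \ref{assumption1} applied with confidence $\eta/2$ at the two grid points, absorbed into the $|\Lam_m|$-factor in $C_s(m,\eta/2)$),
\[
 \norm{(\bar B_\x + \lam')^s(f_\z^{\lam'} - f_\z^{\lamstar})}_{\h} \;\leq\; 4 C_s(m,\eta/2)\, \lam'^s\, \tilde\cals(n,\lam') \,.
\]

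Next I would convert the deterministic sample-error bound $\tilde\cals(n,\lam')$ into its empirical counterpart $\tilde\cals_\x(n,\lam')$ using Corollary \ref{cor:rel_bound}, in the form \eqref{eq:var_bound}: on a further event of probability at least $1-\eta/2$, and under the hypotheses $n\lam_0 \geq 2$ and $2\log(4|\Lam_m|\etainv) \leq \sqrt{n\lam_0}$ (which is exactly condition \eqref{eq:lam_0}, noting $\lam' \geq \lam_0$ so $\delta \leq 1$ uniformly on the grid), one has $\tilde\cals(n,\lam') \leq 5\,\tilde\cals_\x(n,\lam')$. Substituting this into the previous display gives
\[
 \norm{(\bar B_\x + \lam')^s(f_\z^{\lam'} - f_\z^{\lamstar})}_{\h} \;\leq\; 20\, C_s(m,\eta/2)\, \lam'^s\, \tilde\cals_\x(n,\lam') \,,
\]
for all $\lam' \in \Lam_m$ with $\lam' \leq \lamstar$, on the intersection of the two events, which has probability at least $1-\eta$ by a union bound. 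This is precisely the membership condition for $\lamstar \in \J^+_\z(\Lam_m)$ in Definition \ref{def:lepest}, so $\lamstar \leq \hat\lam_s(\z)$.

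The only genuinely delicate point is bookkeeping of the confidence levels and the constants: one must make sure that invoking Assumption \ref{assumption1} at $\eta/2$ is what produces the $C_s(m,\eta/2)$ appearing in Definition \ref{def:lepest}, and that the factor $5$ from \eqref{eq:var_bound} is what upgrades the constant $4$ to $20$ — i.e. that the constant in Definition \ref{def:lepest} was chosen precisely so that this inclusion holds. One also has to check that \eqref{eq:var_bound} is available uniformly over all $\lam \in \Lam_m$ under the stated hypotheses on $\lam_0$, which is immediate since $\lambda \mapsto n\lambda$ is increasing and $\lam \geq \lam_0$ forces $\delta = 2\log(4\etainv)/\sqrt{n\lam} \leq 2\log(4|\Lam_m|\etainv)/\sqrt{n\lam_0} \leq 1$. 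No substantial estimate beyond \eqref{monest} and Corollary \ref{cor:rel_bound} is needed; the lemma is essentially a definitional consistency check.
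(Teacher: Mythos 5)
Your proof is correct and takes essentially the same route as the paper's: the paper's own proof simply re-derives the chain you labeled \eqref{monest} inside the proof of Lemma \ref{lem:inclusion} (via the triangle inequality, Assumption \ref{assumption1} applied to each of the two grid points, the membership $\lam,\lamstar\in\J(\Lam_m)$, and the monotonicity of $\lam\mapsto\lam^s\tilde\cals(n,\lam)$), and then applies \eqref{eq:var_bound} to pass to $\tilde\cals_\x$, exactly as you do. If anything, your writeup is tidier on the bookkeeping: the paper's proof initially writes $C_s(m,\eta)$ in the intermediate displays and only switches to $C_s(m,\eta/2)$ at the very end when \eqref{eq:var_bound} is invoked, whereas you explicitly split the budget $\eta/2 + \eta/2$ between Assumption \ref{assumption1} and Corollary \ref{cor:rel_bound} from the start, and you explicitly verify that $\lam'\ge\lam_0$ makes $\delta\le 1$ so \eqref{eq:var_bound} holds uniformly over the grid.
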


\begin{proof}[Proof of Lemma \ref{lem:inclusion}]
Let $\lam \in \Lam_m$ satisfy $\lam \leq \lamstar$. We consider 
the decomposition  
\begin{equation*}
\norm{ (\bar B_{\x}+ \lam)^s(f_{\z}^{\lam} - f_{\z}^{\lamstar})   }_{\h}  \;  \leq  \; \norm{ (\bar B_{\x}+\lam)^s(f_{\z}^{\lam} - \fo )   }_{\h}  + \norm{ (\bar  B_{\x}+\lam)^s(  f_{\z}^{\lamstar} - \fo  )   }_{\h}  \;  .
\end{equation*} 
From Assumption \ref{assumption1} and since $\lam \leq \lamstar$ we have
\begin{eqnarray*}
\norm{ ( \bar B_{\x}+ \lam)^s(f_{\z}^{\lam} - \fo )   }_{\h} & \leq & C_s(m,\eta)  \;  \lam^s \;(  \tilde \A(\lam) +  \tilde \cals(n,\lam) ) \\
   & \leq &  2C_s(m,\eta) \lam^s  \;\tilde \cals(n,\lam)   \; ,
\end{eqnarray*}
with probability at least $1-\eta$. 
\\
Since $\lam \leq \lamstar$ we have by  Assumption \ref{assumption1} and by      
recalling the definition of $\lamstar$ and recalling that $\lam \mapsto \lam^s\tilde \cals(n,\lam)$ is decreasing
\begin{eqnarray*}
\norm{( \bar B_{\x}+ \lam)^s(  f_{\z}^{\lamstar} - \fo  )   }_{\h} & \leq &  \norm{( \bar B_{\x}+ \lamstar)^s(  f_{\z}^{\lamstar} - \fo  )   }_{\h} \\
& \leq & C_s(m,\eta) \; \lamstar^s \;( \tilde \A(\lamstar) +   \tilde \cals(n,\lamstar) ) \\
   & \leq & 2  C_s(m,\eta)\lamstar^s  \;\tilde\cals(n,\lamstar)   \\
   & \leq & 2  C_s(m,\eta)\lam^s  \;\tilde \cals(n,\lam)   \; ,
\end{eqnarray*}
with probability at least $1-\eta$. As a result, using \ref{eq:var_bound}, if $2\log(4|\Lam_m|\etainv)\leq \sqrt{n\lam_0}$ and $n\lam_0\geq 2$, with probability at least $1-\eta$
\[ \norm{ (\bar B_{\x}+ \lam)^s(f_{\z}^{\lam} - f_{\z}^{\lamstar})   }_{\h}  \;  \leq  \;  20 C_s(m,\eta/2)\; \lam^s \;  
\tilde \cals_{\x}(n,\lam)  \;,  \]
with $C_s(m,\eta/2)=C_s\log^2(16|\Lam_m|\etainv)$. 
Finally, from the definition $\eqref{tildelams}$ of $\hat \lam_s(\z)$ as a maximum, one has $\lamstar \leq \hat \lam_s(\z)$ with probability at least $1-\eta$. 
\\
\\
\end{proof}


\begin{proof}[Proof of Proposition \ref{maintheorem_lepskii}]
Let Assumption $\ref{assumption1}$ be satisfied.  Define $\lamstar$ as in \eqref{lamstardef}. 
is implied by the sufficient condition
We write
\begin{eqnarray*}
\label{Bx}
\norm{(\bar B_{\x} +  \lamstar)^s(f_{\z}^{\hat \lam_s(\z)} - \fo)}_{\h} & \leq & 
    \norm{(\bar B_{\x} +  \lamstar)^s(f_{\z}^{\hat \lam_s(\z)} -   f_{\z}^{\lamstar} ) }_{\h}  + 
                    \norm{(\bar B_{\x} +  \lamstar )^s(f_{\z}^{\lamstar} - \fo)}_{\h} 
\end{eqnarray*}
and bound each term separately.  
By definition \eqref{tildelams} of $\hat \lam_s(\z)$ , by Lemma \ref{lem:inclusion} 
and by \eqref{eq:var_bound}, with probability at least $1-\frac{\eta}{2}$

\begin{eqnarray*}
\label{Bx1}
\norm{(\bar B_{\x} + \lamstar)^s(f_{\z}^{\hat \lam_s(\z)} -   f_{\z}^{\lamstar})}_{\h} 
& \leq & 20  C_s(m,\eta/2) \lamstar^s \tilde S_{\x}(n,\lamstar ) \nonumber \\
   & \leq & 100 C_s(m,\eta/2) \lamstar^s \tilde S(n,\lamstar ) \; .
\end{eqnarray*}
By Assumption \ref{assumption1} and recalling the definition of $\lamstar$ in \eqref{lamstardef} gives for the second term with probability at least 
$1-\frac{\eta}{2}$
\begin{eqnarray*}
\label{Bx2}
\norm{ (\bar B_{\x} + \lamstar)^s(f_{\z}^{\lamstar} - \fo)}_{\h} & \leq & C_s(m,\eta/2)\;\lamstar^s  \; 
            ( \; \tilde \A(\lamstar) + \tilde S(n,\lamstar ) \;)  \nonumber  \\
      & \leq & 2C_s(m,\eta/2)\;\lamstar^s \;\tilde S(n,\lamstar ) \; .
\end{eqnarray*}
The result follows from collecting the previous estimates. 
\\
\\
\end{proof}


\begin{proof}[Proof of Lemma \ref{cor:oracle}]
Let Assumption \ref{assumption2}, point 1. and  2. be satisfied. 
We distinguish between the following cases: 
\\
\\
{\bf Case 1:} $ \lam  \geq  q \lamstar$ 
\\
Since $\lam \to \tilde \A(\lam)$ is increasing and by \eqref{omega}
\begin{align*}
\lam^s \;(\tilde \A(\lam) + \tilde \cals(n,\lam) ) &\geq \lam^s\; \tilde \A(\lam)\geq (q\lamstar)^s\; \tilde \A(q\lamstar) \nonumber \\
&\geq (q\lamstar)^s\; \tilde \cals(n, q\lamstar )
     \geq q^{s-1}\lamstar^s\;\tilde \cals(n,\lamstar)  \;. 
\end{align*}
{\bf Case 2:} $\lam  \leq  q \lamstar$ 
\\
Again, since $\lam\to \lam^s \tilde\cals(n, \lam)$ is decreasing and by \eqref{omega} we have 
\begin{align*}
\lam^s \;(\tilde \A(\lam) + \tilde \cals(n,\lam)) &\geq \lam^s\; \tilde \cals(n,\lam)  \geq (q\lamstar)^s\;\tilde \cals(n, q\lamstar )   \geq q^{s-1}\lamstar^s\; \tilde\cals(n,\lamstar)  \;.
\end{align*}
The result follows.
\\
\\
\end{proof}


\begin{proof}[Proof of Theorem \ref{cor:balancing_final}]
From Proposition \ref{maintheorem_lepskii} 
we have
\begin{align*}
\norm{ \bar B^s(\fo - f_\z^{\hat \lam_s(\z)})}_{\h} & \leq 15 \log^{2s}(4|\Lambda_m|\etainv)
           \norm{(\bar B_\x + \lamstar)^s (\fo - f_\z^{\hat \lam_s(\z)}) }_{\h} \nonumber \\
&\leq  D_s(m , \eta) \;\lamstar^s\; \tilde \cals(n,\lamstar)  \;, 
\end{align*}
with probability at least $1-\eta$, provided 
$$\eta \geq \eta_n:=\min\paren{1\;, \; 4|\Lam_m|\exp\paren{-\frac{1}{2}\sqrt{\NN(\lam_0(n))}} }  $$ 
and  
where $ D_s(m , \eta) = C'_s\log^{2(s+1)}(16|\Lambda_m|\etainv)$. 
The result follows by applying Lemma \ref{cor:oracle}\;.
\\
\\
\end{proof}



\begin{proof}[Proof of Corollary \ref{cor:balancing_final2}]
The proof follows from Theorem \ref{cor:balancing_final}, by applying \eqref{est:gridlog} and by using the lower bound from Assumption \ref{def:eff_dim_low}. 
More precisely, the condition 
$$\eta \geq \eta_n:=\min\paren{1\;, \; 4|\Lam_m|\exp\paren{-\frac{1}{2}\sqrt{\NN(\lam_0(n))}} } $$
is implied by the sufficient condition
$$\eta \geq \eta_n:=\min\paren{1,4C_q\log(n)\exp\paren{-\frac{\sqrt C_1}{2}\lam_0(n)^{-\frac{\gamma_1}{2}}}} \;,$$ 
which itself is implied by 
$$\eta \geq \eta_n:=C_q\log(n)\exp\paren{-C_{\gamma_1, \gamma_2}n^{\frac{\gamma_1}{2(1+\gamma_2)}}} \;,$$ 
by using \eqref{est:lam0}, provided $n$ is sufficiently large and with $C_{\gamma_1, \gamma_2}=\frac{\sqrt{C_1}}{2}C_{\gamma_2}^{-\frac{\gamma_1}{2}}$.

Moreover, using $1\leq \log(16\etainv)$ for any $\eta \in (0,1]$,  
we obtain 
\begin{align*}
q^{1-s}\; D_{s}(m,\eta)&= q^{1-s}\; C'_{s}  \log^{2(s+1)}(16|\Lam_m|\etainv) \\
&\leq q^{1-s}\; C'_{s} \paren{  \log(C_q \log(n))+  \log(16\etainv)}^{2(s+1)} \\
&\leq q^{1-s}\; C'_{s}\paren{\log(C_q \log(n)) + 1}^{2(s+1)}  \log^{2(s+1)}(16\etainv) \;.
\end{align*}
Moreover, if $n$ is sufficiently large, we have 
$$ \log(C_q \log(n)) \leq \log(C_q) + \log(n) \leq (1 + \log(C_q))\log(n)$$ 
and thus 
$$q^{1-s}\; D_{s}(m,\eta) \leq C_{s,q} \log^{2(s+1)}(\log(n)) \log^{2(s+1)}(16\etainv)=: \tilde D_{s,q}(n,\eta) \;,$$ 
with $C_{s,q} = q^{1-s}\; C'_{s}(1+\log(C_q))^{2(s+1)}$. 
\end{proof}


\[\]

\begin{lem}
\label{cor:oneforall1}
Assume 
$n \lam_0\geq 2$. 
With probability at least $1-\eta$
\begin{align*}
||  f_{\z}^{\hat \lam_0(\z)} - f_{\z}^{\hat \lam_{1/2}(\z)}  ||_{\h} \; \leq \;  D( m,\eta) \; \tilde \cals(n, \lamstar)  \;,
\end{align*} 
provided
$$\eta \geq \eta_n:=\min\paren{1\;, \; 4|\Lam_m|\exp\paren{-\frac{1}{2}\sqrt{\NN(\lam_0(n))}} } $$ and 
with $D(m,\eta)=200\max ( C_{1/2}, C_0)\log^2(16|\Lam_m|\etainv)$.
\end{lem}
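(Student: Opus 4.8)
The plan is to bound $\norm{f_{\z}^{\hat\lam_0(\z)} - f_{\z}^{\hat\lam_{1/2}(\z)}}_{\h}$ by comparing both estimators to the target $\fo$ and to each other via the triangle inequality, exploiting that the $\h$-norm is the weakest in the scale $\set{\norm{\bar B_\x^s \cdot}_{\h}}_{s\in[0,1/2]}$, so the factor $(\bar B_\x + \lam)^s$ is irrelevant when $s=0$. First I would invoke Proposition \ref{maintheorem_lepskii} twice, once with $s=0$ and once with $s=1/2$, to obtain, with probability at least $1-\eta$ under the stated lower bound on $\eta$,
\[
\norm{(\bar B_\x + \lamstar)^s(f_{\z}^{\hat\lam_s(\z)} - \fo)}_{\h} \leq 102\, C_s(m,\eta/2)\, \lamstar^s\, \tilde\cals(n,\lamstar)\;, \qquad s \in \{0, 1/2\}\;.
\]
For $s=0$ this directly gives $\norm{f_{\z}^{\hat\lam_0(\z)} - \fo}_{\h} \leq 102\, C_0(m,\eta/2)\, \tilde\cals(n,\lamstar)$.

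The next step is to pass from the $s=1/2$ estimate, which controls $\norm{(\bar B_\x + \lamstar)^{1/2}(f_{\z}^{\hat\lam_{1/2}(\z)} - \fo)}_{\h}$, to a bound on the plain $\h$-norm $\norm{f_{\z}^{\hat\lam_{1/2}(\z)} - \fo}_{\h}$. Here I would use that for $\lam \le 1$ and $\bar B_\x$ having operator norm at most $1$, one has $(\bar B_\x + \lam)^{-1/2}$ with norm at most $\lam^{-1/2}$, hence the $\h$-norm is dominated by $\lamstar^{-1/2}$ times the $(\bar B_\x + \lamstar)^{1/2}$-norm; combined with the factor $\lamstar^{1/2}$ on the right-hand side of the $s=1/2$ estimate, the powers of $\lamstar$ cancel and we obtain $\norm{f_{\z}^{\hat\lam_{1/2}(\z)} - \fo}_{\h} \leq 102\, C_{1/2}(m,\eta/2)\, \tilde\cals(n,\lamstar)$. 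Then the triangle inequality gives
\[
\norm{f_{\z}^{\hat\lam_0(\z)} - f_{\z}^{\hat\lam_{1/2}(\z)}}_{\h} \leq \norm{f_{\z}^{\hat\lam_0(\z)} - \fo}_{\h} + \norm{f_{\z}^{\hat\lam_{1/2}(\z)} - \fo}_{\h} \leq 204\, \max(C_{1/2},C_0)(m,\eta/2)\, \tilde\cals(n,\lamstar)\;,
\]
and recalling $C_s(m,\eta/2) = C_s \log^2(16|\Lam_m|\etainv)$ yields the claimed constant $D(m,\eta) = 200\max(C_{1/2},C_0)\log^2(16|\Lam_m|\etainv)$ after absorbing numerical factors (one should check the constant $204$ versus $200$, presumably sharpened by being slightly more careful in Proposition \ref{maintheorem_lepskii} or by noting the union-bound budget can be split differently).

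A subtlety to handle carefully is the union bound over the two events: applying Proposition \ref{maintheorem_lepskii} at $s=0$ and at $s=1/2$ each costs probability $\eta$, so strictly one should run each at level $\eta/2$ to keep the total at $1-\eta$; alternatively, since both estimators $\hat\lam_0(\z)$ and $\hat\lam_{1/2}(\z)$ and the underlying concentration events (through Corollary \ref{cor:rel_bound} and Assumption \ref{assumption1}) are built on the same sample, one can check that the good event is in fact common to both values of $s$, so no extra factor is lost — this is the point I expect to require the most care, and it is presumably why the stated constant is $200$ rather than a larger number. The remaining ingredients — the bound $\norm{(\bar B_\x + \lamstar)^{-1/2}} \le \lamstar^{-1/2}$ and the monotonicity used implicitly — are routine given $\bar B_\x \ge 0$ and $\lamstar \le \lam_m = 1$.
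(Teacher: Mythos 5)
Your route is correct in substance, but it is genuinely different from the paper's proof, and this difference is exactly what costs you the constant $204$ rather than $200$. The paper decomposes via $f_{\z}^{\lamstar}$ rather than via the target $\fo$:
\[
\norm{f_{\z}^{\hat\lam_0(\z)} - f_{\z}^{\hat\lam_{1/2}(\z)}}_{\h}
\leq \norm{f_{\z}^{\hat\lam_0(\z)} - f_{\z}^{\lamstar}}_{\h}
   + \norm{f_{\z}^{\lamstar} - f_{\z}^{\hat\lam_{1/2}(\z)}}_{\h}\;,
\]
and then bounds each term \emph{directly} from the defining inequality of $\J^+_{\z}(\Lambda_m)$ (together with Lemma \ref{lem:inclusion}, which puts $\lamstar$ below both $\hat\lam_0(\z)$ and $\hat\lam_{1/2}(\z)$, and the two-sided bound \eqref{eq:var_bound}). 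That gives $20 \cdot 5 = 100$ per term. You instead detour through $\fo$ and invoke Proposition \ref{maintheorem_lepskii}, which already contains an extra $+\norm{(\bar B_\x+\lamstar)^s(f_\z^{\lamstar}-\fo)}_{\h}$ budgeted at $2 C_s(m,\eta/2)\lamstar^s\tilde\cals(n,\lamstar)$; those two extra units of approximation error (one per value of $s$) are where $102+102 = 204$ comes from. When comparing two estimators to each other this term cancels, and the paper's decomposition through the common reference point $f_\z^{\lamstar}$ is precisely what exploits that cancellation. Your $\lamstar^{-1/2}\norm{(\bar B_\x+\lamstar)^{1/2}\cdot}_{\h}$ trick for passing from $s=1/2$ to $s=0$ is identical to the paper's and is used at the same spot.

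Regarding the union bound remark: you are right to be wary, but in fact the paper's own proof exhibits the same structure (it runs each of the two terms at confidence $1-\eta/2$ and combines), so this is not a defect specific to your route — if anything, the paper is equally informal on whether the Assumption \ref{assumption1} event and the two applications of Lemma \ref{lem:inclusion} live on a single common good event. On balance, your argument is valid; it just does not reproduce the constant $200$ stated in the lemma, and the fix is to replace the triangle inequality through $\fo$ by one through $f_\z^{\lamstar}$ and skip Proposition \ref{maintheorem_lepskii} entirely in favour of the raw definition of $\hat\lam_s(\z)$.
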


\begin{proof}[Proof of Lemma \ref{cor:oneforall1}]
Recall the definition of $\lamstar$ in \eqref{lamstardef} and write 
\begin{align}
\label{eq:null_bock}
||  f_{\z}^{\hat \lam_0(\z)} - f_{\z}^{\hat \lam_{1/2}(\z)}  ||_{\h} &\leq 
      ||  f_{\z}^{\hat \lam_0(\z)} - f_{\z}^{\lamstar}  ||_{\h}  + ||  f_{\z}^{\lamstar} - f_{\z}^{\hat \lam_{1/2}(\z)}  ||_{\h} \;.
\end{align}
By definition of $\hat \lam_0(\z)$, Lemma  \ref{lem:inclusion} and applying \eqref{eq:var_bound}  gives with probability at least 
$1-\frac{\eta}{2}$
\begin{align}
\label{eq:null_bock1}
||  f_{\z}^{\hat \lam_0(\z)} - f_{\z}^{\lamstar}  ||_{\h} &\leq 20C_{0}(m,\eta/2)    \tilde \cals_{\x}(n,\lamstar )  \nonumber\\
                 &\leq 100C_{0}(m,\eta /2)\tilde \cals(n, \lamstar)  \;.
\end{align}
Using $||f||_{\h}\leq \lamstar^{-\frac{1}{2}}||(\bar B_{\x}+\lamstar)^{\frac{1}{2}}f||_{\h}$, Lemma \ref{lem:inclusion} and the definition of $\hat \lam_{1/2}(\z)$ yields with probability at least $1-\frac{\eta}{2}$
\begin{align}
\label{eq:null_bock2}
||  f_{\z}^{\lamstar} - f_{\z}^{\hat \lam_{1/2}(\z)}  ||_{\h} &\leq \lamstar^{-\frac{1}{2}} 
                          ||   (\bar B_{\x} + \lamstar)^{\frac{1}{2}} 
                 ( f_{\z}^{\lamstar} - f_{\z}^{\hat \lam_{1/2}(\z)}   )  ||_{\h}  \nonumber\\
                 &\leq 20C_{1/2}(m,\eta/2)  \tilde \cals_{\x}(n,\lamstar )\nonumber \\
                 &\leq 100C_{1/2}(m,\eta/2)  \tilde \cals(n, \lamstar)     \;. 
\end{align}
In the last step we applied  \eqref{eq:var_bound} once more. Combining \eqref{eq:null_bock1} and \eqref{eq:null_bock2} with \eqref{eq:null_bock} gives the result. 
\\
\\
\end{proof}



\begin{proof}[Proof of Theorem \ref{theo:oneforall}]
Assume $n$ is sufficiently large and 
$$\eta \geq \eta_n= \min\paren{1, 4|\Lambda_m| \exp\left(-\frac{1}{2}\sqrt{\NN(\lam_0(n))} \right)  } \;.$$ 
Recall that 
$C_s(m,\eta)=C_s\log^2(8|\Lambda_m|\etainv)$.  
We firstly show the result for the case where $s=0$ and get the final one from interpolation. 
We write 
\begin{align*}
|| f_{\z}^{\hat \lam_{1/2}(\z)}  - \fo    ||_{\h}  &\leq ||f_{\z}^{\hat \lam_{1/2}(\z)}  -  f_{\z}^{\hat \lam_{0}(\z)} ||_{\h} + 
                                                            || f_{\z}^{\hat \lam_{0}(\z)} - \fo||_{\h}
\end{align*}
and bound each term separately. 
From Proposition \ref{maintheorem_lepskii},  with probability at least $1-\frac{\eta}{2}$
\[  || f_{\z}^{\hat \lam_{0}(\z)} - \fo||_{\h} \leq 102 C_0\log^2(16|\Lambda_m|\etainv) \tilde \cals(n, \lamstar)  \;.\] 
Applying Lemma \ref{cor:oneforall1} yields  with probability at least $1-\frac{\eta}{2}$
\[  ||  f_{\z}^{\hat \lam_0(\z)} - f_{\z}^{\hat \lam_{1/2}(\z)}  ||_{\h} \leq   D( m,\eta) \tilde \cals(n, \lamstar)  \;,  \]
with $D(m,\eta)=200\max(C_0, C_{1/2})\log^2(16|\Lambda_m|\etainv)$. Collecting both pieces  leads to
\begin{equation}
\label{eq:step}
  || f_{\z}^{\hat \lam_{1/2}(\z)}  - \fo    ||_{\h} \; \leq \; D'(m, \eta) \tilde \cals(n, \lamstar)  \;,
\end{equation}
with probability at least $1-\eta$, where $  D'(m, \eta) = C\log^2(16|\Lambda_m|\etainv)$, $C= 302\max(C_0, C_{1/2})$.

Using $||\bar B^sf||_{\h} \leq ||\sqrt{\bar B}f||_{\h}^{2s}\; ||f||_{\h}^{1-2s}$ for any $s \in [0, \frac{1}{2}]$, applying 
Proposition  \ref{maintheorem_lepskii} and \eqref{eq:step} gives with probability at least $1-\eta$
\begin{align*}
\norm{ \bar B^s( f_{\z}^{\hat \lam_{1/2}(\z)}  - \fo )   }_{\h}  &\leq  \tilde C^{2s}\paren{\log^{3}(16|\Lambda_m|\etainv)\;\sqrt{\lamstar} \;\tilde\cals(n, \lamstar) }^{2s} \\
  & \qquad \quad  C^{1-2s} \;\paren{ \log^2(16|\Lambda_m|\etainv) \;\tilde \cals(n, \lamstar)  }^{1-2s} \\
&\leq C'_s\log^{2(s+1)}(16|\Lambda_m|\etainv) \lamstar^s \; \tilde \cals(n, \lamstar) \;, 
\end{align*}
for some $C'_{s}>0$. Finally, the result follows by applying Lemma \ref{cor:oracle}.
\end{proof}

\begin{proof}[Proof of Corollary \ref{theo:oneforall2}]
The proof follows by combining Theorem \ref{theo:oneforall} and the argumentation in the proof of Corollary \ref{cor:balancing_final2}. 
\end{proof}

\section{Concentration Inequality}
\label{app:concentration}

\begin{prop}
\label{concentration2}
Let $(Z , {\cal B}, \PP )$ be a probability space and $\xi$ a random variable on $Z$ with values in a real 
separable Hilbert space ${\cal H}$. Assume that there are two positive constants $L$ and $\sigma$ such that for any $m\geq 2$
\begin{equation}
\label{expecm}
\E\big[ \norm{ \xi - \E[\xi]    }_{\h}^m  \big ] \leq  \frac{1}{2}m!\sigma^2L^{m-2}.
\end{equation}
If the sample $z_1,...,z_n$ is drawn i.i.d. from $Z$ according to $\PP$, then, for any $0<\eta<1$, with probability greater than $1-\eta$
\begin{equation}
\label{upbound2}
\Big\|   \frac{1}{n}\sum_{j=1}^n\xi(z_j)-\E[\xi]   \Big\|_{\cal H} \leq 
2 \log(2\eta^{-1}) \left( \frac{L}{n} + \frac{\sigma}{\sqrt n} \right)\; .
\end{equation}
In particular, $(\ref{expecm})$ holds if
\begin{eqnarray*}
\norm{ \xi (z) }_{\h}&\leq & \frac{L}{2} \qquad a.s. \; ,\\
\E\big[ \norm{\xi }^2_{\h} \big]&\leq & \sigma^2.
\end{eqnarray*}
\end{prop}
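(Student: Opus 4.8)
The plan is to reduce to the zero-mean case, invoke a Bernstein-type concentration inequality for Hilbert-space-valued random variables, and then invert the resulting exponential tail bound into the explicit deviation asserted in \eqref{upbound2}. First I would center the variable: replacing $\xi$ by $\xi - \E[\xi]$ leaves the left-hand side of \eqref{upbound2} unchanged and turns hypothesis \eqref{expecm} into $\E\norm{\xi}_\h^m \le \frac12\, m!\,\sigma^2 L^{m-2}$. Writing $S_n := \sum_{j=1}^n \xi(z_j)$, a sum of $n$ i.i.d. zero-mean $\h$-valued summands, this aggregates to $\sum_{j=1}^n \E\norm{\xi(z_j)}_\h^m \le \frac12\, m!\,(n\sigma^2)\,L^{m-2}$ for every integer $m \ge 2$. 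Under exactly this aggregated moment condition, the vector-valued Bernstein inequality (Pinelis--Sakhanenko; see also the proof of the analogous statement in Caponnetto--De Vito) yields, for all $t>0$,
\[ \PP\paren{\norm{S_n}_\h \ge t} \;\le\; 2\exp\paren{-\frac{t^2}{2\,(n\sigma^2 + Lt)}}\,. \]
This is the one genuinely non-elementary ingredient; I would cite it from the literature, being careful to pick a version whose constants are sharp enough for the target bound to survive.

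Next I would perform the inversion. Fixing $\eta \in (0,1)$, set $a := \log(2\etainv)$ and $t^\ast := 2a\,(L + \sigma\sqrt n)$, so that $t^\ast/n = 2a\paren{\tfrac{L}{n} + \tfrac{\sigma}{\sqrt n}}$ is precisely the claimed bound. It then suffices to verify $\frac{(t^\ast)^2}{2(n\sigma^2 + Lt^\ast)} \ge a$: expanding $(t^\ast)^2 = 4a^2\paren{L^2 + 2L\sigma\sqrt n + \sigma^2 n}$ and $n\sigma^2 + Lt^\ast = n\sigma^2 + 2aL^2 + 2aL\sigma\sqrt n$, the required inequality reduces, after cancelling the $2aL^2$ terms, to $2aL\sigma\sqrt n + n\sigma^2(2a-1) \ge 0$, which holds since $a = \log(2\etainv) > \log 2 > \tfrac12$. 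Hence $\PP\paren{\norm{S_n}_\h \ge t^\ast} \le 2e^{-a} = \eta$, and dividing through by $n$ gives \eqref{upbound2}.

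Finally, for the ``in particular'' clause I would argue elementarily: if $\norm{\xi(z)}_\h \le L/2$ a.s. and $\E\norm{\xi}_\h^2 \le \sigma^2$, then after centering $\norm{\xi - \E\xi}_\h \le L$ a.s.\ (triangle inequality and Jensen) and $\E\norm{\xi - \E\xi}_\h^2 = \E\norm{\xi}_\h^2 - \norm{\E\xi}_\h^2 \le \sigma^2$, whence $\E\norm{\xi - \E\xi}_\h^m \le L^{m-2}\,\E\norm{\xi - \E\xi}_\h^2 \le \sigma^2 L^{m-2} \le \frac12\, m!\,\sigma^2 L^{m-2}$, the last step using $m! \ge 2$ for $m\ge2$; this is exactly \eqref{expecm}. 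I do not expect a conceptual obstacle once the vector Bernstein inequality is taken as known: the delicate point is purely bookkeeping — selecting a form of that inequality whose constants match the stated conclusion, and carrying the $\sqrt{a+b}\le\sqrt a+\sqrt b$-type estimates through the inversion without losing the factor in front.
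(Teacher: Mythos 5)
Your proof is correct. The paper itself does not prove this proposition — its ``proof'' is a one-line citation to Caponnetto--De Vito, Caponnetto--Yao and the original Pinelis--Sakhanenko result — so you have filled in exactly the argument those references contain. The three steps you carry out are the standard ones and all check out: centering preserves the conclusion and converts \eqref{expecm} into a centered moment condition; the aggregated condition $\sum_j \E\norm{\xi(z_j)}_\h^m \le \tfrac12 m!\,(n\sigma^2)L^{m-2}$ is precisely the hypothesis of the Pinelis--Sakhanenko inequality with variance proxy $n\sigma^2$ and scale $L$, giving the two-sided tail $\PP(\norm{S_n}_\h\ge t)\le 2\exp(-t^2/(2(n\sigma^2+Lt)))$; and your inversion at $t^\ast = 2\log(2\etainv)(L+\sigma\sqrt n)$ is algebraically exact, with the required inequality reducing (after cancelling $4a^2L^2$ and dividing by $2a$) to $2aL\sigma\sqrt n + n\sigma^2(2a-1)\ge 0$, which holds because $a=\log(2\etainv)>\log 2>\tfrac12$. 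The ``in particular'' clause is also handled correctly: $\norm{\E\xi}_\h\le\E\norm{\xi}_\h\le L/2$ by Jensen gives $\norm{\xi-\E\xi}_\h\le L$ a.s., the Pythagorean identity in $\h$ gives $\E\norm{\xi-\E\xi}_\h^2\le\sigma^2$, and then $\E\norm{\xi-\E\xi}_\h^m\le L^{m-2}\sigma^2\le\tfrac12 m!\,\sigma^2L^{m-2}$ since $m!\ge 2$. The only thing to be careful about — which you rightly flag — is that one must use the Pinelis-type Bernstein moment version (no extra factor $2/3$ in the denominator), since it is under the factorial-moment hypothesis \eqref{expecm} rather than under an a.s.\ boundedness hypothesis; your stated form is the right one.
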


\begin{proof}
See \cite{optimalrates,CapYao10}, from the original result of \cite{pinelissakha} (Corollary 1)\,.
\end{proof}



\bibliographystyle{abbrv}
\bibliography{bibliography}

\begin{thebibliography}{10}

\bibitem{per}
F.~Bauer, S.~Pereverzev, and L.~Rosasco.
\newblock On regularization algorithms in learning theory.
\newblock {\em J. Complexity}, 23(1):52--72, 2007.

\bibitem{birge01}
L.~Birg\'e.
\newblock An alternative point of view on {L}epski's method.
\newblock {\em Institute of Mathematical Statistics, Beachwood}, 36:113--133,
  2001.
\newblock http://projecteuclid.org/euclid.lnms/1215090065.

\bibitem{blahoffreiss16}
G.~Blanchard, M.~Hoffmann, and M.~Rei{\ss}.
\newblock Optimal adaptation for early stopping in statistical inverse
  problems.
\newblock arxiv preprint (1606.07702 ), 2016.

\bibitem{BlaMuc16beyond}
G.~Blanchard and N.~M{\"u}cke.
\newblock Minimax rates beyond the regular case.
\newblock arXiv Preprint (1611.03979), 2016.

\bibitem{BlaMuc16}
G.~Blanchard and N.~M{\"u}cke.
\newblock Optimal rates for regularization of statistical inverse learning
  problems.
\newblock {\em Foundations of Computational Mathematics}, 2017.
\newblock doi:10.1007/s10208-017-9359-7.

\bibitem{optimalrates}
A.~Caponnetto.
\newblock Optimal rates for regularization operators in learning theory.
\newblock Technical report, MIT, 2006.

\bibitem{CapYao10}
A.~Caponnetto and Y.~Yao.
\newblock Cross-validion based adaptation for regularization operators in
  learning theory.
\newblock {\em Analysis and Applications}, 8(2):161--183, 2010.

\bibitem{optimalratesRLS}
E.~De~Vito and A.~Caponnetto.
\newblock Optimal rates for regularized least-squares algorithm.
\newblock {\em Foundations of Computational Mathematics}, 7(3):331--368, 2006.

\bibitem{vitoperros}
E.~De~Vito, S.~Pereverzyev, and L.~Rosasco.
\newblock Adaptive kernel methods using the balancing principle.
\newblock {\em Foundations of Computational Mathematics}, 10 (4):455--479,
  2010.

\bibitem{DicFosHsu15}
L.~H. Dicker, D.~P. Foster, and D.~Hsu.
\newblock Kernel methods and regularization techniques for nonparametric
  regression: Minimax optimality and adaptation.
\newblock 2015.

\bibitem{dimsj99}
M.~Dimassi and J.~Sj\"{o}strand.
\newblock {\em Spectral Asymptotics in the Semi-Classical Limit}.
\newblock Cambridge University Press, 1999.

\bibitem{rosasco}
L.~L. Gerfo, L.~Rosasco, F.~Odone, E.~De~Vito, and A.~Verri.
\newblock Spectral algorithms for supervised learning.
\newblock {\em Neural Computation}, 20(7):1873--1897, 2008.

\bibitem{golden03}
A.~Goldenshluger and S.~Pereverzev.
\newblock On adaptive inverse estimation of linear functionals in hilbert
  scales.
\newblock {\em Bernoulli}, 9(5):783--807, 2003.

\bibitem{Lepski16}
O.~Lepski.
\newblock Some new ideas in nonparametric estimation.
\newblock arxiv preprint (1603.03934), 2016.

\bibitem{lepskii90}
O.~Lepskii.
\newblock On a problem of adaptive estimation in gaussian white noise.
\newblock {\em Theory Probab. Appl.}, 35(3):454–466, 1990.

\bibitem{lepskii92}
O.~Lepskii.
\newblock Asymptotically minimax adaptive estimation {I}: Upper bounds.
  optimally adaptive estimates.
\newblock {\em Theory Probab. Appl.}, 36(4), 1992.

\bibitem{lepskii93}
O.~Lepskii.
\newblock Asymptotically minimax adaptive estimation {II}: Schemes without
  optimal adaptation: Adaptive estimators.
\newblock {\em Theory Probab. Appl.}, 37(3), 1993.

\bibitem{mathe16}
S.~Lu, P.~Math\'e, and S.~V. Pereverzev.
\newblock Balancing principle in supervised learning for a general
  regularization scheme.
\newblock Technical report, RICAM, 2016.

\bibitem{mathe06}
P.~Math\'e.
\newblock The {L}epskii principle revisited.
\newblock {\em Inverse Problems}, 22(3):L11--L15, 2006.

\bibitem{pinelissakha}
I.~F. Pinelis and A.~I. Sakhanenko.
\newblock Remarks on inequalities for probabilities of large deviations.
\newblock {\em Theory Probab. Appl.}, 30(1):143--148, 1985.

\bibitem{rastogi17}
A.~Rastogi and S.~Sampath.
\newblock Optimal rates for the regularized learning algorithms under general
  source condition.
\newblock {\em Frontiers in Applied Mathematics and Statistics}, 3:3, 2017.

\bibitem{ReedSimonI}
M.~Reed and B.~Simon.
\newblock {\em Functional Analysis I}.
\newblock Academic Press, 1980.

\end{thebibliography}

\end{document}